\documentclass[11pt]{jmlr} 

\usepackage{amsfonts}
\usepackage{bm}
\usepackage{bbm}

\usepackage{color} 
\usepackage{float}
\usepackage{algorithm}
\usepackage{algpseudocode}
\usepackage{enumitem}
\usepackage{comment}

\usepackage{thmtools, thm-restate}

\DeclareBoldMathCommand{\p}{p}
\DeclareBoldMathCommand{\z}{z}
\DeclareBoldMathCommand{\v}{v}
\DeclareBoldMathCommand{\w}{w}
\DeclareBoldMathCommand{\W}{W}
\DeclareBoldMathCommand{\y}{y}
\DeclareBoldMathCommand{\h}{h}
\DeclareBoldMathCommand{\q}{q}
\DeclareBoldMathCommand{\e}{e}
\DeclareBoldMathCommand{\b}{b}
\DeclareBoldMathCommand{\g}{g}
\DeclareBoldMathCommand{\u}{u}
\DeclareBoldMathCommand{\U}{U}
\DeclareBoldMathCommand{\L}{L}
\DeclareBoldMathCommand{\l}{l}
\DeclareBoldMathCommand{\1}{1}
\DeclareBoldMathCommand{\0}{0}
\DeclareBoldMathCommand{\a}{a}
\DeclareBoldMathCommand{\x}{x}
\DeclareBoldMathCommand{\c}{c}
\DeclareBoldMathCommand{\I}{I}
\DeclareBoldMathCommand{\bell}{\ell}

\renewcommand{\hat}{\widehat}
\renewcommand{\tilde}{\widetilde}
\newcommand{\regret}{\mathcal{R}}
\newcommand{\sregret}{\tilde{\mathcal{R}}}

\newcommand\Eb[1]{\E\left[ #1\right]}
\newcommand{\reals}{\mathbb{R}}

\newcommand{\half}{\tfrac{1}{2}}
\newcommand{\id}{\mathbbm{1}}

\newcommand{\yhat}{\hat{y}}
\newcommand{\yhatt}{\hat{y}_t}
\newcommand{\ystar}{y^\star}
\newcommand{\ystart}{y^\star_t}
\newcommand{\domainw}{\mathcal{W}}
\renewcommand{\ln}{\log}

\newcommand{\sign}{\textnormal{sign}}
\newcommand{\fhat}{\hat{f}}

\newcommand{\sumK}{\sum_{i=1}^K}

\newcommand{\sumT}{\sum_{t=1}^T}
\newcommand{\sumTprime}{\sum_{t=1}^{S}}
\newcommand{\sumTnolim}{\sum\nolimits_{t=1}^T}

\newcommand{\sumt}{\sum_{s=1}^t}

\newcommand\inner[2]{\langle #1, #2 \rangle}

\newcommand{\Sset}{\mathcal{S}}

\newcommand{\Fset}{\mathcal{F}}
\newcommand{\Lset}{R}

\newcommand{\Lcal}{R}

\DeclareMathOperator{\E}{\mathbb E}

\DeclareMathOperator*{\argmin}{argmin}

\SetKwProg{Init}{Initialize }{}{}
\SetKwProg{Input}{Input }{}{}
\SetKwProg{Output}{Output }{}{}

\DeclareRobustCommand{\VAN}[3]{#2} 

\newcommand{\TODO}[1]{%
\ifmmode
\text{\textcolor{red}{TODO: #1}}
\else
\textcolor{red}{TODO: #1}
\fi
}

\title[Regret-Variance Trade-Off in Online Learning]{A Regret-Variance Trade-Off in Online Learning}

\author{\Name{Dirk van der Hoeven} \Email{dirk@dirkvanderhoeven.com}\\
 \addr Department of Computer Science, Universit\`a degli Studi di Milano, Italy
 \AND
 \Name{Nikita Zhivotovskiy
} \Email{nikita.zhivotovskii@math.ethz.ch}\\
 \addr Department of Mathematics, ETH Z\"{u}rich, Switzerland
 \AND
 \Name{Nicol\`o Cesa-Bianchi
} \Email{nicolo.cesa-bianchi@unimi.it}\\
 \addr Department of Computer Science, Universit\`a degli Studi di Milano, Italy
}

\begin{document}

\maketitle

\begin{abstract}%
We consider prediction with expert advice for strongly convex and bounded losses, and investigate trade-offs between regret and ``variance'' (i.e., squared difference of learner's predictions and best expert predictions).
With $K$ experts, the Exponentially Weighted Average (EWA) algorithm is known to achieve $O(\log K)$ regret.
We prove that a variant of EWA either achieves a \textsl{negative} regret (i.e., the algorithm outperforms the best expert), or guarantees a $O(\log K)$ bound on \textsl{both} variance and regret.
Building on this result, we show several examples of how variance of predictions can be exploited in learning.
In the online to batch analysis, we show that a large empirical variance allows to stop the online to batch conversion early and outperform the risk of the best predictor in the class. We also recover the optimal rate of model selection aggregation when we do not consider early stopping.
In online prediction with corrupted losses, we show that the effect of corruption on the regret can be compensated by a large variance.
In online selective sampling, we design an algorithm that samples less when the variance is large, while guaranteeing the optimal regret bound in expectation.
In online learning with abstention, we use a similar term as the variance to derive the first high-probability $O(\log K)$ regret bound in this setting.
Finally, 
we extend our results to the setting of online linear regression.
\end{abstract}

\begin{keywords}%
  Online learning, statistical learning, corrupted feedback, selective sampling, abstention.%
\end{keywords}

\section{Introduction}

In the online learning protocol, the learner interacts with an unknown environment in a sequence of rounds. In each round $t=1,2,\ldots, T$ the learner makes a prediction $\yhat_t \in [-\half M, \half M]$ and suffers loss $\ell_t(\yhatt)$, where $\ell_1,\ell_2\ldots$ is a sequence of differentiable loss functions unknown to the learner. The goal is to achieve small regret $\regret_T$, defined by
\begin{align*}
    \regret_T = \sumT \big(\ell_t(\yhatt) - \ell_t(\ystart)\big)
\end{align*}
where $\ystart$ are the predictions of some reference forecaster.
We consider two special cases of online learning.
In prediction with expert advice, the learner receives the predictions $y_t(1),\ldots,y_K(t) \in [-\half M, \half M]$ of $K$ experts at the beginning of each round $t$. The learner then predicts with a convex combination $\yhatt = \sumK p_t(i)y_t(i)$ of the experts' predictions, where $\p_t = \big(p_t(1),\ldots,p_t(K)\big)$ is a probability distribution over experts updated after each round. The goal in this setting is to have small regret with respect to the predictions $\ystart = y_t(i^\star)$ of any fixed expert $i^\star$.
In online linear regression, the learner has access to a feature vector $\x_t \in \reals^d$ at each round. This is used to compute predictions $\yhatt = \inner{\w_t}{\x_t}$, where $\w_t \in \reals^d$ is a parameter vector to be updated at the end of the round. The goal is to have small regret with respect to the predictions $\ystart = \inner{\u}{\x_t}$ of any fixed linear forecaster $\u \in \reals^d$ such that $\|\u\|_2 \leq D$ and $|\ystart|  \leq \half M$ for all $t$. 

In both settings, we assume that the losses $\ell_t$ are $\mu$-strongly convex, which is to say that there exists $\mu > 0$ such that for all $y, x \in [-\half M, \half M]$,
\begin{equation}\label{eq:strongconvexity}
    \ell_t(y) - \ell_t(x)  \leq (y - x) \ell_t'(y) - \frac{\mu}{2}(x - y)^2 \qquad t=1, \ldots, T~,
\end{equation}
where we write $\ell_t^{\prime}$ to denote the derivative of $\ell_t$. Without loss of generality, throughout the paper we assume that $\mu \leq 2$. 

A typical example of a strongly convex loss is the squared loss $\ell_t(y) = (y - y_t)^2$ with $\mu = 2$. While standard online learning algorithms, such as, for example, Exponentially Weighted Average (EWA) \citep{vovk1990aggregating, littlestone1994weighted}, are directly applied to the losses $\ell_t$, we take a different approach. Our algorithms provide bounds on the linearized regret $\sregret_T$ of the form 
\begin{equation}\label{eq:blackbox}
    \sregret_T = \sumT  (\yhatt - \ystart) \ell_t'(\yhatt) \leq \frac{C_T}{\eta} + B_T + \eta \sumT (\yhatt - \ystart)^2.
\end{equation}
for some $\eta \in (0, H]$ to be chosen freely by the learner and where $H, C_T, B_T \geq 0$ are problem-specific parameters. In the expert setting, several algorithms provide such a guarantee: Squint \citep{koolen2015second}, Adapt-ML-Prod \citep{gaillard2014second}, BOA \citep{wintenberger2017optimal}, Squint+C and Squint+L \citep{mhammedi2019lipschitz}. In online linear regression, the MetaGrad algorithm \citep{VanErven2021metagrad} and its variants by \citet{mhammedi2019lipschitz, wang2020adaptivity, chen2021impossible} satisfy bounds like \eqref{eq:blackbox}.
Although in both settings the aforementioned algorithms achieve the optimal tuning of $\eta$ in~\eqref{eq:blackbox} without any preliminary information, our applications do not require this tuning property. As a consequence, the algorithms we derive in Section~\ref{sec:algorithms} are simpler and have slightly better guarantees. More importantly, unlike the aforementioned algorithms, we focus on exploiting the curvature of the loss. In particular, by combining equations~\eqref{eq:strongconvexity} and~\eqref{eq:blackbox}, we obtain the following lemma, which features the central inequality of our work.
\begin{lemma}\label{lem:moVarlessR}
Consider a sequence $\ell_1,\ldots,\ell_T$ of $\mu$-strongly convex differentiable losses.
Suppose that predictions $\yhatt$ guarantee the second-order bound \eqref{eq:blackbox}. Then 
\begin{align*}
    \regret_T \leq \frac{C_T}{\eta} + B_T - \left(\frac{\mu}{2} - \eta\right) \sumT (\yhatt - \ystart)^2.
\end{align*}
\end{lemma}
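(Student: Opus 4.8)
The plan is to simply chain the two hypotheses together, since the lemma is essentially an algebraic consequence of strong convexity combined with the second-order bound~\eqref{eq:blackbox}.

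\textbf{Step 1: Pass from regret to linearized regret via strong convexity.} First I would instantiate the strong convexity inequality~\eqref{eq:strongconvexity} at $y = \yhatt$ and $x = \ystart$, which is legitimate since both lie in $[-\half M, \half M]$ by assumption. This yields $\ell_t(\yhatt) - \ell_t(\ystart) \leq (\yhatt - \ystart)\ell_t'(\yhatt) - \frac{\mu}{2}(\ystart - \yhatt)^2$ for each $t$. Summing over $t = 1, \ldots, T$ and recognizing the definitions of $\regret_T$ and $\sregret_T$ gives
\begin{align*}
    \regret_T \leq \sregret_T - \frac{\mu}{2}\sumT (\yhatt - \ystart)^2.
\end{align*}

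\textbf{Step 2: Substitute the second-order bound.} Next I would plug in the assumed bound~\eqref{eq:blackbox} on $\sregret_T$, namely $\sregret_T \leq \frac{C_T}{\eta} + B_T + \eta \sumT (\yhatt - \ystart)^2$, and collect the two sums $\sumT (\yhatt - \ystart)^2$ with coefficients $+\eta$ and $-\frac{\mu}{2}$. This immediately produces
\begin{align*}
    \regret_T \leq \frac{C_T}{\eta} + B_T - \left(\frac{\mu}{2} - \eta\right)\sumT (\yhatt - \ystart)^2,
\end{align*}
which is the claimed inequality, valid for any admissible $\eta \in (0, H]$.

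There is no real obstacle here: the only points requiring a word of care are that the application of~\eqref{eq:strongconvexity} is valid because the predictions $\yhatt$ and the reference predictions $\ystart$ are confined to $[-\half M, \half M]$, and that the bound holds for the \emph{same} choice of $\eta$ used in~\eqref{eq:blackbox}. The substantive content is deferred to later sections, where the sign of the coefficient $\frac{\mu}{2} - \eta$ (controlled by choosing $\eta$ relative to $\mu$) is exploited to trade off the variance term $\sumT (\yhatt - \ystart)^2$ against the regret.
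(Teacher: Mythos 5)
Your proof is correct and is exactly the argument the paper intends: the lemma is stated as an immediate consequence of combining the strong convexity inequality~\eqref{eq:strongconvexity} (instantiated at $y=\yhatt$, $x=\ystart$ and summed over $t$) with the second-order bound~\eqref{eq:blackbox}, which is precisely your two steps.
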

Next, we show a prototypical example of the bounds we derive in the rest of this work.
\begin{example}\label{ex:intro}
Consider the expert setting with the squared loss, $\ell_t(y) = (y - y_t)^2$, where $y, y_t \in [-1, 1]$ for all $t \ge 1$. Then the predictions $\yhatt$ of our algorithm satisfy
\begin{align}\label{eq:motivatingbound}
    \regret_T \leq 32\ln(K) - \frac{1}{2}\sumT (\yhatt - \ystart)^2~.
\end{align}
\end{example} 
Even though~\eqref{eq:motivatingbound} seems relatively inconsequential, we exploit the negative variance term in several applications. A straightforward implication is that \eqref{eq:motivatingbound} recovers the usual \emph{constant} regret bound $\regret_T = O(\ln(K))$. More interestingly, however: if we attain the worst-case performance $O(\ln(K))$, then the variance is small and bounded by $O(\ln(K))$, if the variance is large enough, then the regret becomes negative. For the role of negative variance terms in the analysis of statistical and online learning, we refer to Section \ref{sec:relwork}. 

\begin{remark}
Negative quadratic terms similar to the one in Lemma~\ref{lem:moVarlessR} appear in online convex optimization, for example in the analysis of the online gradient descent \citep[Theorem 3.3]{hazan2016introduction}). However, as far as we are aware, in online convex optimization the algorithms are not tuned to obtain a negative term in the regret bounds. In this paper, we argue that for online linear regression and for prediction with expert advice, setting $\eta < \mu/2$ in Lemma~\ref{lem:moVarlessR} and thus obtaining a negative term can be an important tool, which plays a central role in all of our applications. In prediction with expert advice the standard approach to exploit the curvature of the loss is through mixability rather than through the negative quadratic terms. Summing up, we show that the regret bound in~\eqref{eq:motivatingbound} cannot be achieved by EWA with a fixed learning rate, despite the fact that constant regret is achievable by this algorithm via mixability.

\end{remark}

\begin{proposition}[Informal]
\label{prop:informalabstract} Consider the setup of online prediction with expert advice and bounded strongly convex losses.
The bound~\eqref{eq:motivatingbound} cannot be achieved by the standard EWA algorithm.
\end{proposition}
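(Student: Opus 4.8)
The plan is to exhibit a single instance of the expert setting on which EWA with \emph{any} fixed learning rate $\eta_{\text{EWA}}$ necessarily fails to satisfy a bound of the form $\regret_T \le c\ln K - \tfrac{1}{2}\sumT(\yhatt-\ystart)^2$. Since the negative-variance bound~\eqref{eq:motivatingbound} implies in particular the two-sided consequence ``either the regret is negative, or both the regret and the cumulative variance $\sumT(\yhatt-\ystart)^2$ are $O(\ln K)$'', it suffices to construct an instance where EWA is forced into a regime with \emph{simultaneously} non-negative (indeed $\Omega(\ln K)$) regret and $\omega(\ln K)$ cumulative variance. The cleanest route is to take $K=2$ experts predicting the constants $y_t(1)=+1$ and $y_t(2)=-1$, and a stream of squared losses $\ell_t(y)=(y-y_t)^2$ whose targets $y_t$ are chosen adversarially with respect to the running EWA weights.

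First I would recall that for squared loss, EWA's weight on expert $i$ is $p_t(i)\propto \exp(-\eta_{\text{EWA}}\sum_{s<t}\ell_s(y_s(i)))$, and its prediction is $\yhatt = p_t(1)-p_t(2)\in(-1,1)$; note the prediction is strictly interior whenever both cumulative losses are finite, so the variance term $(\yhatt - \ystart)^2$ against either constant expert is bounded away from zero by a constant as long as $p_t$ stays bounded away from the simplex corners. Second, I would design the target sequence so that EWA's weights oscillate or stagnate near $p_t=(\tfrac12,\tfrac12)$: e.g., alternate $y_t$ between values near $+1$ and near $-1$, or keep $y_t\approx 0$, so that the cumulative losses of the two experts stay within $O(1/\eta_{\text{EWA}})$ of each other and hence $\yhatt$ stays within a fixed subinterval of $(-1,1)$ for all $t$. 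This forces $\sumT(\yhatt-\ystart)^2 = \Omega(T)$ against the better of the two experts. Third, I would verify that on this same sequence EWA does \emph{not} beat the best expert: by a standard EWA regret \emph{lower} bound (or simply by direct computation on this symmetric instance, where by symmetry EWA's average loss equals the average of the two experts' average losses plus a nonnegative drift term), $\regret_T \ge 0$, in fact $\regret_T = \Omega(\min\{\eta_{\text{EWA}} T,\ (\ln 2)/\eta_{\text{EWA}}\})$ is non-negative. Combining, for $T$ large enough (depending on $\eta_{\text{EWA}}$), EWA has $\regret_T\ge 0$ while $-\tfrac12\sumT(\yhatt-\ystart)^2 = -\Omega(T) \ll \ln 2$, so~\eqref{eq:motivatingbound} is violated; since the construction works for every fixed $\eta_{\text{EWA}}$, no fixed learning rate rescues it.

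The main obstacle is making the two competing requirements on the target sequence compatible in a quantitatively clean way: we need the instance to keep EWA's prediction strictly interior (to force large variance) while \emph{also} ensuring its regret against the best expert is non-negative rather than negative — these pull in opposite directions, since an interior prediction is exactly what lets a convex-combination learner exploit curvature. I expect the symmetric two-expert construction with a carefully chosen (possibly $\eta_{\text{EWA}}$-dependent, or simply constant) target stream to resolve this, because symmetry pins the regret sign down for free and reduces the variance estimate to controlling how fast the weight gap $|p_t(1)-p_t(2)|$ can grow, which is governed solely by $\eta_{\text{EWA}}$ times the per-round loss difference. A secondary technical point is handling the degenerate small-$\eta_{\text{EWA}}$ regime (where EWA barely moves, variance is trivially $\Omega(T)$, but one must still check regret is not negative) and the large-$\eta_{\text{EWA}}$ regime (where EWA tracks the leader, so one must choose the target stream to make leader-tracking costly); a two-phase or alternating target sequence covers both.
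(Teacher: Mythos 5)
There is a genuine gap, and it is exactly at the point you flag as the ``main obstacle'': the claim that on your symmetric interior-prediction instance the regret is non-negative is false, and the tension you identify cannot be resolved in the direction you propose. Take your own cleanest case, $K=2$ experts predicting $+1$ and $-1$ with $y_t\equiv 0$: both experts suffer loss $1$ per round (total $T$), while EWA keeps $p_t=(\tfrac12,\tfrac12)$, predicts $\yhatt=0$, and suffers loss $0$, so $\regret_T=-T$. The variance is $\sumT(\yhatt-\ystart)^2=T$, and $-T\le 32\ln 2-\tfrac12 T$, so \eqref{eq:motivatingbound} holds on this instance. Your symmetry heuristic is backwards: for a convex loss, the loss of the convex-combination prediction is at most the $p_t$-average of the experts' losses \emph{minus} a curvature term of order $\tfrac{\mu}{2}\mathrm{Var}_{i\sim p_t}(y_t(i))$, not plus a drift term. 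Whenever the weights are balanced enough to keep $\yhatt$ interior (which is what you need for large variance against a corner expert), that curvature gain is per-round comparable to $(\yhatt-\ystart)^2$, so the regret is automatically $-\Omega$ of the variance and the bound you are trying to violate is satisfied. The same happens for the alternating and adaptive variants you sketch; the two requirements are not merely in tension, they are incompatible for any construction that forces variance by keeping $p_t$ away from the simplex corners.

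The paper's proof works by the opposite mechanism: it drives EWA (with the standard $\eta=\tfrac12$ exp-concavity tuning) to concentrate essentially all its weight on the \emph{wrong} corner. Experts predict $0$ and $1$; an initial phase of $O(\log T)$ rounds with $y_t=3/4$ gives expert $2$ a cumulative-loss lead of $2\lceil\log T\rceil$, so $p_t(2)\ge T/(T+1)$ thereafter; a long middle phase with $y_t=1/2$ makes both experts (and hence the near-corner prediction) suffer identical loss, so no regret accrues and the weights do not move; a final $O(\log T)$ phase with $y_t=1/4$ flips the identity of the best expert to expert $1$. The result is $\sumT(\yhatt-\ystart)^2\ge T/2$ (prediction near $1$, best expert at $0$) while $-12\log T\le\regret_T\le 2\ln 2$, which contradicts \eqref{eq:motivatingbound} for large $T$. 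The variance comes from confidently following the wrong expert, not from hedging in the interior --- that is the missing idea in your proposal.
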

For the sake of presentation, we defer the formal description of this result to Section~\ref{sec:alg for PwEA}. 
The proof of Proposition~\ref{prop:informalabstract} is motivated by a result of \cite{audibert2007progressive} on the sub-optimality of online to batch converted EWA in deviation. 

\paragraph{Contributions and Outline.} In Section~\ref{sec:statisticallearning}, we show our first three applications of Lemma~\ref{lem:moVarlessR} in the framework of statistical learning. In the expert setting, we show that online to batch conversion may be stopped early if the empirical variance of our predictions is sufficiently large. In particular, in high-variance regimes we may stop early because the excess risk bound is negative with high probability, and when the variance is small, we recover the optimal excess risk bound up to a $\log \log T$ factor. 
By exploiting the negative variance term again, we show an optimal high-probability excess risk bound for online to batch conversion of algorithms that satisfy \eqref{eq:blackbox}. The optimal high-probability excess risk bound (called the optimal rate of model selection aggregation) was previously known to be achieved by several estimators appearing in \citep{audibert2007progressive, lecue2009aggregation, lecue2014optimal} whose analyses are specific to the statistical learning setup. Our result can also be seen as a simplification and strengthening of a result by \citet{wintenberger2017optimal}. 
We also show a high-probability excess risk bound for online to batch conversion of an online regression algorithm in the bounded setup, which is to say that both the feature vectors and derivatives of the losses are bounded. It was previously shown by \cite{mourtada2021distribution} that online to batch converted versions of the optimal Vovk-Azoury-Warmuth forecaster \citep{vovk2001competitive, azoury2001relative} have constant excess risk with constant probability.

In Section~\ref{sec:corruption}, we use the negative variance term to counteract corrupted feedback in online learning. Our result complements previous results where losses are assumed to be stochastic and corrupted by an adversary---see, e.g., \citep{lykouris2018stochastic, Zimmert2021tsallis, amir2020prediction, ito2021optimal}. 

In Section~\ref{sec:selectivesample} we consider the selective sampling setting \citep{atlas1990training}, where the learner's goal is to control regret while saving on the number of time the current loss is observed. We show that the optimal bound can be recovered while only observing a fraction of all losses if losses are observed proportionally to the cumulative variance.

Finally, in Section~\ref{sec:biggerpicture} we discuss how our ideas are not limited to online learning with strongly convex losses, but may also be applied to online learning with abstention and to online multiclass classification. 

All bounds in the main text with suppressed constants have detailed statements in the appendix. Before discussing our applications, we introduce some notation and discuss the related work. The algorithms that we use to derive most of our results are introduced in Section~\ref{sec:algorithms}. All proofs are deferred to the Appendix.

\paragraph{Notation.}  
We use the standard $O(\cdot)$ notation. We use $\ln(\cdot)$ to denote the logarithm with base $e$. The symbol $\id[A]$ denotes the indicator of the event $A$. For an integer $K$ we denote $[K] = \{1, \ldots, K\}$. 
We use $p(i) \propto g(i)$ to denote $p(i) = g(i)\left/\left(\sum_{i' = 1}^K g(i')\right)\right.$ and $\partial_y \ell(y, Y)$ to denote the derivative of $\ell$ with respect to $y$. The symbol $I$ denotes the identity matrix whose dimensions are clear from the context. For a set of random variables $Y, X_1, \ldots, X_{t - 1}$ we write $\E_{t-1}[Y] = \E[Y|X_1, \ldots, X_{t-1}]$. 

\section{Related work}
\label{sec:relwork}
\paragraph{Online learning with losses with curvature.}

For a thorough introduction to online learning, we refer the reader to \citet{cesa2006prediction, hazan2016introduction, orabona2019modern}. Strongly convex losses are a special case of mixable losses. With mixable losses, EWA on a finite set of experts achieves $O(\log(K))$ regret---see, e.g., \citep{cesa2006prediction, mhammedi2018constant}. \citet{mhammedi2018constant} observe that---in some cases---regret may be negative for mixable losses, but they do not explore this topic further. For a description of online convex optimization, which is different from the expert setting we are focusing on, we refer to the textbook \citep{hazan2016introduction}.

\paragraph{The role of the negative terms due to curvature in online and statistical learning.}

Negative terms typically appear when proving fast rates in statistical learning with squared loss. In particular, the \emph{empirical star} algorithm of \citet{audibert2007progressive}---as well as other aggregation algorithms \citep{lecue2009aggregation, lecue2014optimal, wintenberger2017optimal}---exploit the curvature of the loss through the negative term which compensates the variance term (see \citep{kanade2022exponential} for a detailed discussion in the context of statistical learning). Similarly, in the context of online learning the negative quadratic term appears in \citep{rakhlin2014online}, where the so-called \emph{sequential offset Rademacher complexity} is studied. \citet{VanErven2021metagrad} also obtain a bound that is very similar to the bound in Lemma~\ref{lem:moVarlessR} (specifically in the proof of their Theorem 1), but they choose $\eta$ to match the negative term in their bound.  Importantly, in these papers the role of the negative term is only to get the fast rate by compensating the variance term. In contrast, in our case the negative variance terms also appear in the final regret bound and play their role in applications.

\paragraph{Suboptimality of EWA for prediction with expert advice.}
In the setup of prediction with expert advice, the classical Exponentially Weighted Average (EWA) algorithm \citep{vovk1990aggregating, littlestone1994weighted} is known to give a constant regret in the case of strongly convex losses. However, despite being optimal in this setting, this algorithm has several known drawbacks:
\begin{itemize}[topsep=0pt,parsep=0pt,itemsep=0pt]
    \item In the case of general losses, EWA does not deliver the second-order bound~\eqref{eq:blackbox}. As a result, unlike more advanced algorithms such as Squint \citep{koolen2015second}, EWA with a fixed learning rate (and even a decreasing learning rate) cannot adapt to certain benign stochastic environments where, for example, the Bernstein assumption holds \citep{mourtada2019optimality}. 
    \item The second source of suboptimality comes from online to batch conversions in the strongly convex case. Although in the statistical setting EWA performs optimally in expectation, it does not do so with high probability \citep{audibert2007progressive}. As a matter of fact, it will be clear from our analysis that this is related to the fact that EWA does not satisfy a bound of the form~\eqref{eq:motivatingbound} (see Theorem~\ref{prop:suboptimlitymain}). This problem is one of the motivations behind the work of \citet{wintenberger2017optimal}.
\end{itemize}
\paragraph{Exploiting negative regret.}
The possibility of getting a negative excess risk has been recently explicitly exploited by \citet{puchkin21a} in the setup of active learning with abstentions. 
In the context of online to batch conversion of online learning algorithms, a similar idea is exploited in Section~\ref{sec:earlystopping}. Moreover, our selective sampling results in Section~\ref{sec:selectivesample} are of the same flavor. 

\section{Our Algorithms}\label{sec:algorithms}
In the following, $g_1,g_2,\ldots$ are real numbers in a bounded interval. However, in most applications we use $g_t = \ell_t'(\yhatt)$. The algorithms in this section are simplified versions of algorithms in the literature. Namely, in Section~\ref{sec:alg for PwEA} we present a simplified version of Squint \citep{koolen2015second} and in Section~\ref{sec:alg for regression} we present a simplified version of MetaGrad \citep{VanErven2021metagrad}. The simplifications lie in the tuning of the learning rate. Squint and MetaGrad optimize the learning rate online at a small cost, whereas in our applications we only need a fixed learning rate that is known in advance, in which case we do not pay the small cost for optimizing the learning rate. The proofs of the results in this section are postponed to Appendix~\ref{app:algorithms}.

\subsection{A Simple Algorithm for Prediction with Expert Advice}\label{sec:alg for PwEA}

\begin{algorithm}[t]
\caption{An algorithm for prediction with expert advice}\label{alg:PwEA}
\Input{$\eta \in (0, \tfrac{1}{2}]$, $M > 0$} \;
\Init{$\gamma = \frac{\eta}{M^2}$, $p_1(i) = \frac{1}{K}$ $\textnormal{for all}$ $i$} \;
\For{$t = 1, \ldots, T$}{
    Receive expert predictions $y_t(1), \ldots, y_t(K)$ \\
    Predict $\yhatt = \sumK p_t(i)y_t(i)$ \\
    Receive $g_t$ and $\kappa_t$\\
    Set $\tilde{\ell}_t(i) = \gamma (y_t(i) - \yhatt)g_t + \kappa_{t-1}(\gamma (y_t(i) - \yhatt)g_t)^2$ \\ 
    Set $p_{t + 1}(i) \propto \exp(-\kappa_t\sumt \tilde{\ell}_s(i))$ 
}
\end{algorithm}

Algorithm~\ref{alg:PwEA} is a simplified version of Squint \citep{koolen2015second}. The $\kappa_t$ parameter is relevant only to the selective sampling setting, where it is used to control the range of loss estimates. In all other settings we set $\kappa_t = 1$ for all $t$. Note that in round $t$ both $\kappa_t$ and $\kappa_{t-1}$ are used to update the algorithm. This is due to a technicality in the analysis of the algorithm, where a $\kappa_{t-1}\tilde{\ell}_t(i)$ term appears and we want to use the inequality $x-x^2 \leq \ln(1 + x)$ for $|x| \leq \half$ (specifically in equation~\eqref{eq:pweaEWbound}). The regret bound of Algorithm~\ref{alg:PwEA} can be found in Lemma~\ref{th:simplePwEAregret}.
\begin{restatable}{relemma}{thsimplepwearegret}\label{th:simplePwEAregret}
For all $g_1,\ldots,g_T \in [-M,M]$ and $\kappa_0 = \kappa_1 \geq \kappa_2 \cdots \geq \kappa_T$ such that  $\kappa_t \in (0, 1]$, the predictions $\yhatt$ of Algorithm~\ref{alg:PwEA} run with input $M$ and $\eta \in (0, \half]$ satisfy 
\begin{align*}
    \sumT (\yhatt - y_t(i) )g_t 
    \leq & \frac{M^2\ln(K)}{\kappa_T \eta} + \eta \sumT \kappa_{t-1} (\yhatt - y_t^\star)^2~,
\end{align*}
provided $\max_i|y_t(i) - y_t^\star| \leq M$ for all $t \ge 1$.
\end{restatable}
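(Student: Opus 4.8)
The plan is to run the standard exponential-weights potential argument on the surrogate losses $\tilde\ell_t(i)$, but carefully tracking the two different $\kappa$-indices. Define the potential $\Phi_t = \frac{1}{\kappa_t}\log\!\big(\frac{1}{K}\sum_{i=1}^K \exp(-\kappa_t\sum_{s=1}^t\tilde\ell_s(i))\big)$ (or its un-normalized analogue), and bound the one-step change $\Phi_t - \Phi_{t-1}$ from above. First I would handle the change of inverse-temperature from $\kappa_{t-1}$ to $\kappa_t$: since $\kappa_t \le \kappa_{t-1}$, a convexity/Jensen argument (the map $x \mapsto x^{\kappa_t/\kappa_{t-1}}$ applied inside the log, or equivalently the monotonicity of $q$-norms) shows this re-tempering step only helps, i.e.\ does not increase the potential. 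Then, with $\kappa$ frozen at $\kappa_{t-1}$, I would apply the inequality $\log(1+x) \ge x - x^2$ valid for $|x|\le \tfrac12$ to the incremental weight update; this is exactly where the design choice $\tilde\ell_t(i) = \gamma(y_t(i)-\yhatt)g_t + \kappa_{t-1}(\gamma(y_t(i)-\yhatt)g_t)^2$ pays off, because the quadratic correction term is tuned precisely so that $\kappa_{t-1}\tilde\ell_t(i)$ lands in the regime where that inequality applies (using $|\gamma(y_t(i)-\yhatt)g_t| = |\eta(y_t(i)-\yhatt)g_t|/M^2 \le \eta \le \tfrac12$ from $\max_i|y_t(i)-\yhatt|\le M$, $|g_t|\le M$, $\kappa_{t-1}\le 1$).

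Carrying this out, the per-round bound on $\Phi_{t-1}-\Phi_t$ produces exactly a weighted average (under $p_t$) of $-\tilde\ell_t(i)$ plus lower-order terms; since $\yhatt = \sum_i p_t(i)y_t(i)$, the linear part $\sum_i p_t(i)\gamma(y_t(i)-\yhatt)g_t$ vanishes, and we are left with $\sum_i p_t(i)\kappa_{t-1}(\gamma(y_t(i)-\yhatt)g_t)^2$ appearing with the wrong sign coming from the quadratic in $\tilde\ell_t$ and again from the $-x^2$ in the log-inequality — these combine (up to the constant factors in $\gamma=\eta/M^2$) into the term $\eta\kappa_{t-1}(\yhatt-\ystart)^2$ after relating $\sum_i p_t(i)(y_t(i)-\yhatt)^2$ to a single expert's deviation. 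Concretely, to get $(\yhatt - y_t^\star)^2$ rather than a variance over the weights, I would bound $\kappa_{t-1}\tilde\ell_t(i)$ crudely for the fixed comparator $i^\star$: the quadratic term in $\tilde\ell_t(i^\star)$ is at most $\kappa_{t-1}\gamma^2 g_t^2 (\yhatt - y_t^\star)^2 \le \kappa_{t-1}\gamma\eta(\yhatt-y_t^\star)^2$ (again using $\gamma g_t^2 \le \eta$), which after the $1/\gamma = M^2/\eta$ normalization gives the stated $\eta\kappa_{t-1}(\yhatt-y_t^\star)^2$. Telescoping $\sum_{t=1}^T(\Phi_{t-1}-\Phi_t) = \Phi_0 - \Phi_T$ and using $\Phi_0 = 0$ together with $\Phi_T \ge \frac{1}{\kappa_T}\cdot\frac{-\kappa_T\sum_t\tilde\ell_t(i^\star) - \log K}{1}$ — i.e.\ dropping all but the $i^\star$ term in the sum inside the log and paying $\frac{\log K}{\kappa_T}$ — yields $\sum_t \langle p_t - e_{i^\star}, \tilde\ell_t\rangle \le \frac{\log K}{\kappa_T} + (\text{quadratic slack})$, which rearranges into the claim after multiplying through by $1/\gamma = M^2/\eta$.

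The main obstacle I anticipate is the bookkeeping around the two indices $\kappa_{t-1}$ and $\kappa_t$ in a single round: one must show that the re-tempering step $\kappa_{t-1}\to\kappa_t$ in the potential is favorable (this needs $\kappa$ nonincreasing and a Jensen-type inequality for the weighted power mean), while the actual loss-incorporation step uses $\kappa_{t-1}$ so that the surrogate $\tilde\ell_t(i)$ — whose quadratic penalty carries a $\kappa_{t-1}$ factor — matches the temperature and keeps $\kappa_{t-1}\tilde\ell_t(i)$ within $[-\tfrac12,\tfrac12]$-ish range for the $\log(1+x)\ge x-x^2$ step. Everything else (the vanishing of the linear term via $\yhatt=\sum_i p_t(i)y_t(i)$, the telescoping, the final $1/\gamma$ scaling) is routine. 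One should double-check the edge case $t=1$, where $\kappa_0=\kappa_1$ is assumed so the re-tempering step is trivially an equality and $\kappa_{t-1}=\kappa_0$ is well-defined in the surrogate.
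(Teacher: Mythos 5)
Your proposal is correct and takes essentially the same route as the paper's proof: where the paper cites a ready-made regret bound for lazy EWA with non-increasing learning rates, you re-derive that step via the potential and power-mean monotonicity, but the substance --- the surrogate losses $\tilde{\ell}_t$, the inequality $\exp(x-x^2)\le 1+x$ for $|x|\le\half$, the vanishing of the linear term because $\yhatt=\sum_i p_t(i)y_t(i)$, the fact that the only surviving quadratic comes from the comparator's surrogate, and the final division by $\gamma=\eta/M^2$ --- is identical. One small point (which applies equally to the paper's own proof): keeping $|x|\le\half$ really uses $\max_i|y_t(i)-\yhatt|\le M$, the condition you invoke, rather than the stated $\max_i|y_t(i)-y_t^\star|\le M$, which in the worst case only gives $|y_t(i)-\yhatt|\le 2M$.
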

As an immediate corollary of Lemma~\ref{th:simplePwEAregret} and Lemma~\ref{lem:moVarlessR} we have the following regret bound.
\begin{corollary}
\label{cor:negativeterm}
Fix an arbitrary sequence $\ell_1, \ldots, \ell_T$ of $\mu$-strongly convex differentiable losses such that $\max_t|\ell_t^{\prime}| \leq M$. Provided that $\max_i|\ystart - y_t(i)| \leq M$ for all $t \ge 1$, the predictions $\yhatt$ of Algorithm~\ref{alg:PwEA} run with inputs $M$ and $\eta \in [0, \half)$, $\kappa_t = 1 ~\textrm{for all}~ t$, and feedback $g_t = \ell_t^{\prime}(\yhatt)$, satisfy
\[
\regret_T \leq \frac{M^2\ln(K)}{\eta} - \left(\frac{\mu}{2} - \eta\right) \sumT (\yhatt - \ystart)^2~.
\]
\end{corollary}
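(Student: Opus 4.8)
The plan is to simply chain the two results already in hand. First I would instantiate Lemma~\ref{th:simplePwEAregret} with the choices dictated by the statement of the corollary: take $\kappa_t = 1$ for all $t$ (so that $\kappa_T = 1$ and $\kappa_{t-1} = 1$), the feedback $g_t = \ell_t^{\prime}(\yhatt)$, and the reference point $y_t^\star = \ystart = y_t(i^\star)$ for the expert $i^\star$ against which we wish to bound the regret. I need to verify the hypotheses of Lemma~\ref{th:simplePwEAregret}: the boundedness $g_t \in [-M,M]$ follows from the assumption $\max_t|\ell_t^{\prime}| \le M$ (in particular $|\ell_t^{\prime}(\yhatt)| \le M$), and the condition $\max_i|y_t(i) - y_t^\star| \le M$ is exactly the assumed $\max_i|\ystart - y_t(i)| \le M$. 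Plugging $i = i^\star$ into the bound of Lemma~\ref{th:simplePwEAregret} then yields the linearized inequality
\[
\sregret_T = \sumT (\yhatt - \ystart)\ell_t^{\prime}(\yhatt) \le \frac{M^2\ln(K)}{\eta} + \eta \sumT (\yhatt - \ystart)^2,
\]
which is precisely an instance of the second-order bound~\eqref{eq:blackbox} with $C_T = M^2\ln(K)$, $B_T = 0$, and $H = \half$ (so the algorithm's constraint $\eta \in (0,\half]$ is respected).

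Next I would invoke Lemma~\ref{lem:moVarlessR}. Its hypotheses --- a sequence of $\mu$-strongly convex differentiable losses together with predictions satisfying~\eqref{eq:blackbox} --- are met: strong convexity is assumed in the corollary, and~\eqref{eq:blackbox} was just verified with $B_T = 0$. The lemma therefore gives
\[
\regret_T \le \frac{C_T}{\eta} + B_T - \left(\frac{\mu}{2} - \eta\right)\sumT (\yhatt - \ystart)^2 = \frac{M^2\ln(K)}{\eta} - \left(\frac{\mu}{2} - \eta\right)\sumT (\yhatt - \ystart)^2,
\]
which is exactly the claimed bound. The boundary case $\eta = 0$ permitted by the half-open interval $[0,\half)$ is harmless, since then the right-hand side is $+\infty$.

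I do not anticipate any real obstacle: the statement is a direct corollary, and the only points needing a moment's care are bookkeeping --- identifying $y_t^\star$ with $\ystart$, specializing the arbitrary-$i$ bound of Lemma~\ref{th:simplePwEAregret} to $i = i^\star$, checking that $|\ell_t^{\prime}(\yhatt)| \le M$ places $g_t$ in the required range, and observing that the negativity of the quadratic term (the regime $\eta < \mu/2$) is a property of the resulting bound rather than an assumption needed for the inequality itself to hold.
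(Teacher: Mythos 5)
Your proof is correct and follows exactly the route the paper intends: the paper presents this result as an immediate consequence of chaining Lemma~\ref{th:simplePwEAregret} (with $\kappa_t=1$, $g_t=\ell_t'(\yhatt)$, specialized to $i=i^\star$) with Lemma~\ref{lem:moVarlessR}, which is precisely what you do. Your bookkeeping of the hypotheses and the identification $C_T=M^2\ln(K)$, $B_T=0$ are all accurate, and your remark that the stated interval $[0,\half)$ should really exclude $\eta=0$ (the algorithm requires $\eta\in(0,\half]$) is a fair observation about a minor inconsistency in the statement.
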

As an example, let us consider the squared loss, which is $2$-strongly convex. In the setup of Example~\ref{ex:intro}, since $|\ell_t'(\yhatt)| = 2|(\yhatt - y_t)| \leq 4$,  Algorithm~\ref{alg:PwEA} with $g_t = \ell_t'(\yhatt)$ and $\eta = \half$ gives us the regret bound claimed in Example~\ref{ex:intro}, namely
\begin{equation}
\label{eq:negativeregretbound}
\regret_T \leq 32\ln(K) - \frac{1}{2}\sumT (\yhatt - \ystart)^2.
\end{equation}
Our next result is a formal version of Proposition~\ref{prop:informalabstract} saying that the above regret bound cannot be achieved by the standard EWA algorithm. For standard notation and explicit details on this algorithm we refer to Appendix~\ref{sec:EW}. 
\begin{restatable}{retheorem}{propsuboptimallitymain}
\label{prop:suboptimlitymain}
Consider the squared loss and two experts $y_t(1) = 0$ and $y_t(2) = 1$ for all $t \ge 1$.
Let $\yhatt^{\textrm{\,EWA}}$ be the EWA predictions. There is a sequence $y_1,y_2,\ldots$ such that $y_t \in [0, 1]$, $t \ge 1$ and, for large enough $T$, the regret of EWA with $\eta = \half$ satisfies
$
- 12\log T \le \regret_T \le 2\ln 2
$
and, at the same time,
\[
\sumT (\yhatt^{\textrm{\,EWA}} - \ystart)^2 \ge T/2~.
\]
\end{restatable}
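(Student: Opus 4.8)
The plan is to exhibit an explicit sequence $y_1, y_2, \ldots \in [0,1]$ on which EWA with two experts $y_t(1) = 0$, $y_t(2) = 1$ and learning rate $\eta = \tfrac12$ is ``stuck'' in the middle: the posterior weight stays near $\tfrac12$, so the prediction $\yhatt^{\mathrm{EWA}}$ stays near $\tfrac12$ and each $(\yhatt^{\mathrm{EWA}} - \ystart)^2$ is $\approx \tfrac14$, giving the claimed $\sumT(\yhatt^{\mathrm{EWA}} - \ystart)^2 \ge T/2$ (the comparator being the better of the two constant experts). At the same time, the loss sequence should be chosen so that neither expert is ever much better than the other on any long stretch, forcing the regret to be essentially a bounded martingale-like quantity, hence $O(\log T)$ in magnitude. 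The natural candidate is an (asymptotically) symmetric sequence — e.g. alternating $y_t \in \{0,1\}$, or $y_t$ chosen so the two experts' cumulative squared losses stay within $O(1)$ of each other — possibly with a tiny drift engineered to pin the regret between $-12\log T$ and $2\ln 2$.

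The first step is to write down the EWA update in closed form for two experts and the squared loss: with $L_t(i) = \sum_{s \le t}(y_s(i) - y_s)^2$, the weight on expert $2$ is $p_{t+1}(2) = \sigma\big(\eta(L_t(1) - L_t(2))\big)$ where $\sigma$ is the logistic function, and $\yhatt^{\mathrm{EWA}} = p_t(2)$. So everything is controlled by the ``cumulative loss gap'' $\Delta_t := L_t(1) - L_t(2) = \sum_{s \le t}\big((y_s)^2 - (1 - y_s)^2\big) = \sum_{s\le t}(2y_s - 1)$. The second step is to choose $y_t$ so that $\Delta_t$ stays bounded (say $|\Delta_t| \le c$ for an absolute constant $c$): for instance alternate $y_t = 0$ and $y_t = 1$, so $\Delta_t \in \{-1, 0\}$. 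Then $p_t(2)$ stays in a fixed subinterval of $(0,1)$ bounded away from $0$ and $1$, so $(\yhatt^{\mathrm{EWA}} - \ystart)^2 \ge (\text{const})$ every round; tuning the construction (or just checking the alternating case directly) gives the constant $\ge \tfrac12$ on average, hence the $T/2$ bound. The third step is to bound the regret: write $\regret_T = \sumT\big(\ell_t(\yhatt^{\mathrm{EWA}}) - \ell_t(\ystart)\big)$, use the standard EWA analysis (or the mixability/exp-concavity argument for squared loss with $\eta = \tfrac12$) to get $\regret_T \le \tfrac{\ln 2}{\eta} \cdot \eta = \ln 2$ against the best expert on a $[0,1]$-scale, which after accounting for the range ($M = 2$ here, losses in $[0,M^2/4]$... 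I need to be careful with the scaling) yields the stated $\regret_T \le 2\ln 2$; for the lower bound $\regret_T \ge -12\log T$, I would lower-bound $\ell_t(\yhatt^{\mathrm{EWA}})$ away from $\ell_t(\ystart)$ by at most an $O(\log T)$ amount, e.g. by a telescoping/potential argument showing the learner can only beat the best expert by the change in log-normalization, which is $O(\log T)$ since weights never collapse.

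**Main obstacle.** The delicate part is the simultaneous two-sided control of the regret: I need a sequence where the variance is genuinely linear (so the weights must stay away from the vertices, i.e. $\Delta_t$ bounded) \emph{and} the regret does not drift too negative. Boundedness of $\Delta_t$ already forces the regret to be within $O(\log T)$ of zero (this is essentially the exp-concavity regret bound run ``in reverse''), but extracting the clean explicit constants $-12\log T$ and $2\ln 2$, and verifying the exact average-$\ge\tfrac14$ lower bound on the per-round variance for the chosen sequence, will require a careful but routine computation with the logistic weights; this is where I expect to spend most of the effort, and it is also where the precise choice of $y_t$ matters (a naive alternating sequence may need a small perturbation to land inside the stated numerical window).
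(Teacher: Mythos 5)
There is a genuine gap, and it is in the central idea rather than in the bookkeeping. Your plan is to keep the cumulative loss gap $\Delta_t$ bounded so that the EWA weights stay near $\tfrac12$ and $\yhatt^{\textrm{\,EWA}}\approx\tfrac12$. This fails on both of the theorem's requirements simultaneously. First, if $\yhatt^{\textrm{\,EWA}}$ is pinned near $\tfrac12$ and $\ystart\in\{0,1\}$, then $(\yhatt^{\textrm{\,EWA}}-\ystart)^2\approx\tfrac14$ per round, so the variance sums to about $T/4$, and no ``small perturbation'' will double it to the required $T/2$: to average $\tfrac12$ per round you need $|\yhatt^{\textrm{\,EWA}}-\ystart|\ge 1/\sqrt 2$ on average, i.e.\ the algorithm must concentrate its weight on the \emph{wrong} expert, which contradicts your premise of bounded $\Delta_t$. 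Second, and worse, on the alternating sequence $y_t\in\{0,1\}$ each expert suffers total loss $T/2$ while the learner, predicting near $\tfrac12$, suffers about $T/4$; the regret is therefore $\approx -T/4$, which massively violates the lower bound $\regret_T\ge -12\log T$. Your claim that ``boundedness of $\Delta_t$ already forces the regret to be within $O(\log T)$ of zero'' is false: when the experts disagree and the outcomes are balanced, the averaged prediction beats the best expert by $\Theta(T)$.

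The paper's construction is essentially the opposite of yours. It uses a short initial phase of length $4\lceil\log T\rceil$ with $y_t=3/4$ to drive the weight on expert $2$ up to $\ge T/(T+1)$ (so $\Delta_t$ is \emph{large}, of order $\log T$, and of the wrong sign); then a long middle phase with $y_t=1/2$, on which both experts --- and, up to $O(1/T)$ per round, the learner --- suffer exactly the loss $1/4$, so the weights are frozen, the regret does not move by more than $O(\log T+1)$, yet each round contributes $\approx 1$ to $(\yhatt^{\textrm{\,EWA}}-y_t(1))^2$ because EWA predicts near $1$ while the eventual best expert predicts $0$; finally a slightly longer phase with $y_t=1/4$ makes expert $1$ the overall winner. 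The key mechanism you are missing is the neutral outcome $y_t=1/2$, which decouples the variance (which can be made linear by committing to the wrong expert) from the regret (which stays in a logarithmic window because all predictions incur nearly identical loss on that outcome). Your closed-form description of the EWA weights via the logistic of $\eta\Delta_t$ is correct and would be a fine starting point, but the sequence must be chosen to make $\Delta_t$ large and wrongly signed for most of the horizon, not bounded.
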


\subsection{A Simple Algorithm for Online Linear Regression}\label{sec:alg for regression}

\begin{algorithm}[t]
\caption{An algorithm for online linear regression}\label{alg:onlineregression}
\Input{$\eta > 0$, $\sigma > 0$, $G > 0$, $Z > 0$} \;
\Init{$\gamma = \frac{\eta}{G^2}$, $\w_1 = \0$, $\textnormal{and}$ $\Sigma_1^{-1} = \frac{1}{\sigma} I$} \;
\For{$t = 1, \ldots, T$}{
    Receive $\x_t$ \\
    Set $\domainw_{t} = \bigcap_{s = 1}^t \{\w: |\inner{\w}{\x_s}| \leq Z\}$ \\
    Set $\w_{t} = \argmin_{\w \in \domainw_{t}} (\w - \tilde{\w}_t)^\top \Sigma_t^{-1}(\w - \tilde{\w}_t)$ \\
    Predict $\yhatt = \inner{\w_t}{\x_t}$ \\
    Receive $g_t$ and $\kappa_t$ \\
    Set $\z_t = \gamma \x_t g_t$ \\
    Set $\Sigma_{t+1}^{-1} = \kappa_t 2\z_t\z_t^\top + \Sigma_{t}^{-1}$ \\
    Set $\tilde{\w}_{t+1} = \w_t - \z_t \Sigma_{t+1}$
}
\end{algorithm}

In the following we use $y_t(\u) = \inner{\u}{\x_t}$. We prove a regret bound for Algorithm~\ref{alg:onlineregression}, which is a simplified version of MetaGrad \citep{VanErven2021metagrad}.  The role of the parameter $Z$ in the algorithm is to ensure the predictions $\yhatt$ are bounded, which will be important in the statistical learning setting in Section~\ref{sec:statisticallearning}. Similarly to Algorithm~\ref{alg:PwEA}, the $\kappa_t$ parameter is only used in the selective sampling setting.
\begin{restatable}{relemma}{thsimpleregregret}\label{th:simpleregregret}
For all $g_1,\ldots,g_T\in\reals$ and $\kappa_1 \geq \cdots \geq \kappa_T \in (0, 1]$, the predictions $\yhatt$ of 
Algorithm~\ref{alg:onlineregression} run with inputs $\eta > 0$, $\sigma = D^2$, $G \ge \max_t |g_t|$, and $Z > 0$
\begin{align*}
    \sumT (\yhatt - y_t(\u))g_t \leq & \frac{dG^2}{2 \kappa_T \eta} \ln\left(1 + D^2 \eta^2\left(\max_{t=1,\ldots,T}\|\x_t\|_2^2\right)\frac{T}{d}\right) + \frac{G^2}{2\eta} +  \eta \sumT \kappa_t (\yhatt - y_t(\u))^2~,
\end{align*}
for any $\x_1,\ldots,\x_T\in\reals^d$, and for any $\u \in \domainw_T \equiv \bigcap_{t = 1}^T \{\w: |\inner{\w}{\x_t}| \leq Z\}$ such that $\|\u\|_2 \leq D$.
\end{restatable}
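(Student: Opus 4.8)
The plan is to recognize Algorithm~\ref{alg:onlineregression} as Online Newton Step applied to a sequence of quadratic surrogate losses, exactly paralleling the expert case of Lemma~\ref{th:simplePwEAregret}. Fix $\u \in \domainw_T$, write $\gamma = \eta/G^2$, and set $c_t = \inner{\z_t}{\w_t - \u} = \gamma g_t\big(\yhatt - y_t(\u)\big)$, which is affine in $\u$. Define the surrogate loss $\tilde\ell_t(\u) = -c_t + \kappa_t c_t^2$. Then $\tilde\ell_t(\w_t) = 0$, the gradient at $\w_t$ is $\nabla_\u\tilde\ell_t(\w_t) = \z_t$, and the Hessian is $\nabla_\u^2\tilde\ell_t = 2\kappa_t\z_t\z_t^\top$; consequently the updates $\Sigma_{t+1}^{-1} = \Sigma_t^{-1} + 2\kappa_t\z_t\z_t^\top$, $\tilde\w_{t+1} = \w_t - \Sigma_{t+1}\z_t$, together with the projection $\w_t = \Pi^{\Sigma_t^{-1}}_{\domainw_t}(\tilde\w_t)$, are precisely Online Newton Step with regularizer $\tfrac{1}{2\sigma}\|\cdot\|_2^2$ on $\tilde\ell_1,\dots,\tilde\ell_T$ over the nested sets $\domainw_1 \supseteq \domainw_2 \supseteq \cdots$.

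The point of this surrogate is that, being \emph{exactly} quadratic with the matching curvature, it satisfies
\[
\tilde\ell_t(\u) = \tilde\ell_t(\w_t) + \inner{\nabla_\u\tilde\ell_t(\w_t)}{\u - \w_t} + \kappa_t\inner{\nabla_\u\tilde\ell_t(\w_t)}{\u-\w_t}^2
\]
with \emph{equality} and, crucially, with no restriction on $\|\z_t\|$ or on the diameter of $\domainw_t$: the range condition that usually forces a small learning rate in Online Newton Step is vacuous here. This is the regression counterpart of the step $x - x^2 \le \ln(1+x)$ used in the proof of Lemma~\ref{th:simplePwEAregret}. Starting from $\tilde\ell_t(\w_t) - \tilde\ell_t(\u) = c_t - \kappa_t c_t^2$, one runs the standard Online Newton Step telescoping: using $\z_t = \Sigma_{t+1}^{-1}(\w_t - \tilde\w_{t+1})$ and the polarization identity in the $\Sigma_{t+1}^{-1}$-norm, write $c_t = \tfrac12\z_t^\top\Sigma_{t+1}\z_t + \tfrac12\|\w_t-\u\|^2_{\Sigma_{t+1}^{-1}} - \tfrac12\|\tilde\w_{t+1}-\u\|^2_{\Sigma_{t+1}^{-1}}$; rewrite $-\kappa_t c_t^2 = -\tfrac12\|\w_t-\u\|^2_{\Sigma_{t+1}^{-1}-\Sigma_t^{-1}}$; apply the Pythagorean inequality for the $\Sigma_{t+1}^{-1}$-projection (valid since $\u\in\domainw_T\subseteq\domainw_{t+1}$ for $t<T$, while the final projection may simply be dropped) to pass from $\tilde\w_{t+1}$ to $\w_{t+1}$; and telescope the $\|\cdot\|^2_{\Sigma_t^{-1}}$ terms using $\w_1 = \0$, $\Sigma_1^{-1} = \sigma^{-1}I$, $\|\u\|_2 \le D$ and $\sigma = D^2$. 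This yields $\sum_{t=1}^T\big(c_t - \kappa_t c_t^2\big) \le \tfrac12 + \tfrac12\sum_{t=1}^T\z_t^\top\Sigma_{t+1}\z_t$.

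It then remains to bound $\sum_t\z_t^\top\Sigma_{t+1}\z_t$ and convert back. Since the $\kappa_t$ are positive and non-increasing, $2\kappa_T\sum_t\z_t^\top\Sigma_{t+1}\z_t \le \sum_t 2\kappa_t\z_t^\top\Sigma_{t+1}\z_t \le \ln\frac{\det\Sigma_{T+1}^{-1}}{\det\Sigma_1^{-1}}$ by the standard log-determinant lemma applied to $\Sigma_{t+1}^{-1} = \Sigma_t^{-1} + \big(\sqrt{2\kappa_t}\z_t\big)\big(\sqrt{2\kappa_t}\z_t\big)^\top$; then $\frac{\det\Sigma_{T+1}^{-1}}{\det\Sigma_1^{-1}} = \det\big(I + 2\sigma\sum_t\kappa_t\z_t\z_t^\top\big) \le \det\big(I + 2\sigma\gamma^2 G^2\sum_t\x_t\x_t^\top\big)$ using $\kappa_t\le1$, $|g_t|\le G$ and monotonicity of the determinant, and $\ln\det(I + cM) \le d\ln\big(1 + \tfrac{c}{d}\,\textnormal{tr}(M)\big)$ with $\textnormal{tr}(\sum_t\x_t\x_t^\top) \le T\max_t\|\x_t\|_2^2$. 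Unpacking $c_t = \gamma g_t(\yhatt - y_t(\u))$ and $c_t^2 \le \gamma^2 G^2(\yhatt - y_t(\u))^2$, dividing by $\gamma = \eta/G^2$, and collecting terms gives the claimed inequality (up to routine accounting of absolute constants).

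The part needing the most care is the bookkeeping of the time-varying $\kappa_t$ in the presence of the projection. Unlike the expert case, where $\kappa_t$ is a global inverse temperature and the $1/\kappa_T$ factor drops out transparently, here $\kappa_t$ weights individual rank-one updates, so one must check that the telescoping survives the projection and that monotonicity $\kappa_t \ge \kappa_T$ is invoked only where legitimate --- it is used both to extract $1/\kappa_T$ in the log-determinant bound and to keep the returned quadratic term with coefficient $\kappa_t$ rather than $\kappa_T$. A secondary check is that the Pythagorean step uses the within-round-fixed matrix $\Sigma_{t+1}^{-1}$ and the nesting $\domainw_T \subseteq \domainw_{t+1}$, so a single comparator $\u$ feasible at round $T$ remains feasible throughout; $Z$ enters only to keep $\domainw_t$ (hence $\yhatt$) bounded, which matters for the statistical applications but not for the regret bound itself.
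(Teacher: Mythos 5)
Your proof is correct and lands on exactly the same intermediate inequality as the paper, namely $\sumT\big(\tilde{\ell}_t(\w_t)-\tilde{\ell}_t(\u)\big)\le \|\u\|_2^2/(2\sigma)+\tfrac12\sumT\z_t^\top\Sigma_{t+1}\z_t$ for the quadratic surrogates, but you reach it by a different route: the paper identifies $\w_t$ as the mean of continuous exponential weights with a Gaussian prior on these surrogates and invokes \citet[Theorem~5]{vanderhoeven2018many} as a black box, whereas you derive the inequality from scratch via the Online Newton Step telescoping (polarization in the $\Sigma_{t+1}^{-1}$-norm, the Pythagorean inequality for the projection onto $\domainw_{t+1}\supseteq\domainw_T$, and telescoping of the $\|\cdot\|^2_{\Sigma_t^{-1}}$ terms). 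The two derivations are mathematically equivalent---ONS on exact quadratics is the posterior mean of Gaussian exponential weights---so what your version buys is self-containedness plus an explicit justification of the two points you rightly flag: the projection step is legitimate because $\Sigma_{t+1}^{-1}$ is fixed within the round and the comparator sets are nested, and the exactness of the quadratic surrogate is precisely why no range condition on $\eta$ is needed here, unlike the expert case where $x-x^2\le\ln(1+x)$ forces $\eta\le\tfrac12$. What the paper's version buys is brevity. Your handling of the log-determinant step also differs slightly and is, if anything, cleaner: you apply the rank-one potential lemma directly to $\Sigma_{t+1}^{-1}=\Sigma_t^{-1}+(\sqrt{2\kappa_t}\,\z_t)(\sqrt{2\kappa_t}\,\z_t)^\top$ and extract $1/\kappa_T$ only at the end, while the paper first compares $\Sigma_{t+1}$ to the unweighted matrix via a PSD ordering and Sherman--Morrison. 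The constants you obtain match the paper's up to the same constant-factor slack the paper itself carries inside the logarithm.
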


\begin{example}
Consider the setup of Example~\ref{ex:intro} and suppose that $\max_t\|\x_t\|_2, \|\u\|_2 \leq 1$ and $M = 1$. We have that $|\ell_t'(\yhatt)| = |2(\yhatt - y_t)| \leq 4$ and thus, by Lemma~\ref{th:simpleregregret}, an appropriately tuned Algorithm~\ref{alg:onlineregression} with $g_t = \ell_t'(\yhatt)$ satisfies~\eqref{eq:blackbox} with $C_T = 8 + 8 d \log\left(1 + \eta^2\frac{T}{d}\right)$ and $B_T = 0$. Thus, by Lemma~\ref{lem:moVarlessR}, setting $\eta = \half$ gives us
\[
    R_T \leq 16 + 16d \ln\left(1 + \tfrac{T}{4d}\right) - \half \sumTnolim (\yhatt - y_t(\u))^2.
\]
This should be compared with the bound of the Vovk-Azoury-Warmuth forecaster \citep{vovk2001competitive, azoury2001relative}, where the negative term does not appear.
\end{example}

\section{Statistical Learning}\label{sec:statisticallearning}

We discuss an application of our general results in the context of statistical learning where we are interested in the generalization of estimators to unseen samples. A tool often used in converting online learning algorithms to the statistical learning setting is online to batch conversion \citep{cesa2004generalization}. Let us recall the setup.

Assume that we are given a family $\mathcal F$ of real-valued functions defined on the instance space $\mathcal X$. We observe $T$ i.i.d.\ observations $(X_t, Y_t)_{t = 1}^T$ distributed according to some unknown distribution $\mathbb{P}$ on $\mathcal X \times \mathbb{R}$. Given the loss function $\ell: \mathbb{R}^2 \to \mathbb{R}$, define the \emph{risk} $R(f)$ of $f: \mathcal X \to \mathbb{R}$ as 
$
R(f) = \E\ell(f(X), Y)
$,
where the expectation is taken with respect to the joint distribution of $X$ and $Y$. We are interested in bounding the \emph{excess risk}
\[
R(\hat f) - \inf\limits_{f \in \mathcal F}R(f)~,
\]
where $\hat f$ is constructed based on the sample $(X_t, Y_t)_{t = 1}^T$. Assume that there is a sequence of predictors $\hat f_1, \ldots, \hat f_T$ trained in an online manner using $(X_t, Y_t)_{t = 1}^T$ (that is, $\hat f_k$ depends on $(X_t, Y_t)_{t = 1}^{k - 1}$) such that almost surely
$
\sum\nolimits_{t=1}^T\left(\ell(\hat f_{t}(X_t), Y_t) -  \ell(f^{\star}(X_t), Y_t)\right)\le R_T~,
$
where $R_T$ is non-random. In this case, a standard online to batch conversion approach gives an in-expectation excess risk bound 
\[
\E R\left(\frac{1}{T}\sumT f_t\right) - \inf\limits_{f \in \mathcal F}R(f) \le \frac{R_T}{T}~,
\]
for any loss convex in its first argument and where the expectation is taken with respect to the learning sample $(X_t, Y_t)_{t = 1}^T$. However, getting a high-probability version of this result is a known challenge if one wants to get the fast rate $O\left(\frac{1}{T}\right)$. A standard way of proving a high-probability result is to apply Freedman's inequality for martingales \citep{kakade2008generalization} that leads in the worst case to a variance term scaling as $O\left(\frac{1}{\sqrt{T}}\right)$. For example, \cite{audibert2007progressive} showed that this is the case if one wants to prove a high-probability excess risk bounds based on EWA. A way to handle the variance term in Freedman's inequality is by exploiting the Bernstein assumption as in \citep{kakade2008generalization}. Unfortunately, this assumption is not necessarily satisfied by the stochastic environments we are considering. The main idea in this section is to use the negative term from Lemma~\ref{lem:moVarlessR} to cancel out this variance term appearing due to Freedman's inequality\footnote{We remark that \citet{wintenberger2017optimal} uses a similar but technically more involved idea to compensate the variance of predictions using the term appearing because of the curvature of the loss.}.

We use the following notation when applying the online algorithms in the statistical setting:
\[
\ell_t(\cdot) = \ell(\cdot, Y_t), \quad \textrm{and}\quad y_t(f) = f(X_t)~.
\]

\subsection{Statistical Learning: Model Selection Aggregation}\label{sec:earlystopping}

In this section, we discuss the application of our results to the model selection (MS) aggregation. This setup was introduced by \citet{nemirovski2000topics} and further studied by \citet{tsybakov2003optimal} and by \citet{audibert2007progressive, lecue2009aggregation, lecue2014optimal, wintenberger2017optimal, mourtada2021distribution} among other works. In this setup, we are given a finite dictionary $\mathcal F = \{f_1, \ldots, f_K\}$ of real-valued absolutely bounded functions. In the model selection  aggregation, one is interested in constructing an estimator $\widehat{f}$ based on the random sample $(X_t, Y_t)_{t = 1}^T$ such that, with probability at least $1 - \delta$,
\begin{equation}
\label{eq:optrateofaggregation}
R(\widehat{f}) - \min\limits_{f \in \mathcal F}R(f) = O\left(\frac{\log(K) + \log(1/\delta)}{T}\right)~,
\end{equation}
under appropriate boundedness and curvature assumptions on the loss function $\ell$. Analogously to \citep{tsybakov2003optimal}, the bound of the form~\eqref{eq:optrateofaggregation} will be called the \emph{optimal rate of aggregation}. 
We make use of a variant of online to batch conversion \citep{cesa2004generalization} where we stop the procedure early if the empirical variance of predictions is sufficiently large. We sketch the idea. Let $S$ be the number of samples we have used before we terminated the procedure. We use Algorithm~\ref{alg:PwEA} as our aggregation procedure and use $\fhat = \frac{1}{S}\sum_{t = 1}^S \sumK p_t(i)f_i$. By Jensen's inequality we have 
\begin{equation*}
    R(\fhat) \leq \frac{1}{S}\sumTprime \E_{t-1}\left[\ell_t\left(\sumK p_t(i)f_i(X_t)\right)\right].
\end{equation*}
To motivate stopping early, observe that if the empirical variance in Lemma~\ref{lem:moVarlessR} is sufficiently large, we may conclude that the excess risk is negative and we have outperformed the best $f \in \Fset$. The result can be found in Theorem~\ref{prop:expertearlystopbigO} below, whose proof is implied by Theorem~\ref{th:expertbatch} in Appendix~\ref{app:statlearn}.

\begin{algorithm}[t]
\caption{Early Stopping online to batch for Model Selection Aggregation}\label{alg:expertsonlinetobatch}
\Input{$T$, $M$, $\eta$, stopping threshold $\Sset$} \;
\Init{$S = 0$, provide $\eta$, and $M$ as input for Algorithm~\ref{alg:PwEA}} \;
\While{$S < T$ and $\Sset > \frac{\mu}{8}\min_{f \in \Fset}\sumTprime(\yhatt - f(X_t))^2$}{ 
    Receive $X_t$ and send $f_1(X_t), \ldots, f_K(X_t)$ as expert predictions to Algorithm~\ref{alg:PwEA} \\
    Receive $\p_t$ and $\yhatt = \sumK p_t(i)f_i(X_t)$ from Algorithm~\ref{alg:PwEA} \\
    Predict $\yhatt$ and receive $\ell_t$ \\
    Send $g_t = \ell_t'(\yhatt)$ and $\kappa_t = 1$ to Algorithm~\ref{alg:PwEA}\\
    Set $S = S + 1$
}\;
\Output{$\hat{f} = \frac{1}{S}\sumTprime \sumK p_t(i)f_i$} \;
\end{algorithm}

\begin{theorem}\label{prop:expertearlystopbigO}
Suppose that for all $f \in \Fset$ $|f(X)| \leq \half $ almost surely, that $|\partial_y \ell(y, Y)| \leq 1$ almost surely for all $y$ such that $|y| \leq \half $, and that $\ell$ is $\mu$-strongly convex in its first argument.
Then, with probability at least $1 - \delta$, Algorithm~\ref{alg:expertsonlinetobatch} with input parameters $T$, $\Sset = O\left(\frac{\ln(K) + \log(\log (T)/\delta))}{\mu}\right)$, $\eta = \frac{\mu}{8}$, and $M=1$ satisfies
\begin{equation*}
    \Lset(\fhat) \leq
    \begin{cases}
    \min_{f \in \Fset} \Lset(f) & \text{if $S < T$} \\
    \min_{f \in \Fset} \Lset(f) + O\left(\frac{\ln(K) + \log(\log (T)/\delta))}{\mu T}\right) & \text{if $S = T$,}
    \end{cases}
\end{equation*}
where $S$ is the number of steps of Algorithm \ref{alg:expertsonlinetobatch}.
\end{theorem}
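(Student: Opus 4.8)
The plan is to combine the online-to-batch machinery with the negative variance term from Lemma~\ref{lem:moVarlessR}, controlling the gap between empirical and population quantities via Freedman's inequality, and then to let the negative variance term absorb the Freedman variance exactly because we tuned $\eta = \mu/8 < \mu/2$. Write $S$ for the (random) number of rounds played. By Jensen, $R(\fhat) \le \frac{1}{S}\sum_{t=1}^S \E_{t-1}[\ell_t(\yhatt)]$, so it suffices to control $\frac{1}{S}\sum_{t=1}^S \E_{t-1}[\ell_t(\yhatt) - \ell_t(f(X_t))]$ for the best fixed $f \in \Fset$. Define the martingale difference sequence $D_t = \E_{t-1}[\ell_t(\yhatt)-\ell_t(f(X_t))] - (\ell_t(\yhatt)-\ell_t(f(X_t)))$. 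The conditional variance of $D_t$ is bounded, up to constants, by $\E_{t-1}[(\yhatt - f(X_t))^2]$ since the losses are $1$-Lipschitz in the first argument on the relevant range (derivative bounded by $1$, predictions and $f$ both bounded by $1/2$); more carefully one bounds $\mathrm{Var}_{t-1}(D_t) \le \E_{t-1}[(\ell_t(\yhatt)-\ell_t(f(X_t)))^2] \le \E_{t-1}[(\yhatt-f(X_t))^2]$.

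Next I would apply a version of Freedman's inequality that holds for a stopping time (this is where $S$ being data-dependent must be handled carefully — a standard fix is to use the anytime/uniform-over-$n$ form of Freedman, or equivalently a stitching argument over dyadic scales of $S$, which costs only the $\log\log T$ factor that appears in the theorem). This yields, with probability at least $1-\delta$, simultaneously for all values of $S \le T$,
\[
\sum_{t=1}^S D_t \le c_1 \sqrt{V_S \log(\log(T)/\delta)} + c_2 \log(\log(T)/\delta),
\]
where $V_S = \sum_{t=1}^S \E_{t-1}[(\yhatt - f(X_t))^2]$. Using $2\sqrt{ab} \le \lambda a + b/\lambda$ with an appropriate $\lambda$, this is at most $\frac{\mu}{16}\,c_3\, V_S + c_4 \log(\log(T)/\delta)$ for suitable constants; the key point is that the coefficient in front of $V_S$ can be made an arbitrarily small multiple of $\mu$. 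One also needs to pass from $V_S$ (population conditional second moments) back to the empirical quantity $\sum_{t=1}^S (\yhatt - f(X_t))^2$ that appears in Corollary~\ref{cor:negativeterm} and in the stopping rule; this is another martingale concentration step of the same flavor (Freedman again, with the same $\log\log T$ stitching), so up to constants and an additive $\log(\log(T)/\delta)$ term the empirical and population variances agree.

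Now assemble. Corollary~\ref{cor:negativeterm} with $\eta = \mu/8$ and $M=1$ gives $\sum_{t=1}^S (\ell_t(\yhatt) - \ell_t(f(X_t))) \le \frac{8\ln K}{\mu} - \frac{3\mu}{8}\sum_{t=1}^S(\yhatt - f(X_t))^2$ deterministically (note $\frac{\mu}{2}-\eta = \frac{3\mu}{8}$). Combining with the Freedman bounds,
\[
\sum_{t=1}^S \E_{t-1}[\ell_t(\yhatt) - \ell_t(f(X_t))] \le \frac{8\ln K}{\mu} + c_4' \log\!\Big(\tfrac{\log T}{\delta}\Big) + \Big(c_3'\tfrac{\mu}{16} - \tfrac{3\mu}{8} + o(\mu)\Big)\sum_{t=1}^S (\yhatt - f(X_t))^2,
\]
and since the variance coefficient is negative, we drop that term:
\[
\sum_{t=1}^S \E_{t-1}[\ell_t(\yhatt) - \ell_t(f(X_t))] \le O\!\left(\frac{\ln K + \log(\log(T)/\delta)}{\mu}\right) =: S_0.
\]
Dividing by $S$, $R(\fhat) - \min_f R(f) \le S_0/S$. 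Finally use the stopping rule: the \textbf{while} loop exits either because $S = T$ — giving the claimed $O\big(\frac{\ln K + \log(\log T/\delta)}{\mu T}\big)$ bound — or because $\Sset \le \frac{\mu}{8}\min_f \sum_{t=1}^S(\yhatt - f(X_t))^2$ with $S < T$. In the latter case I would redo the assembly keeping the negative variance term: a large empirical variance forces $\sum_{t=1}^S\E_{t-1}[\ell_t(\yhatt)-\ell_t(f(X_t))] \le S_0 - (\text{positive multiple of }\Sset) \le 0$ once the threshold $\Sset$ is chosen as the stated $O\big(\frac{\ln K + \log(\log T/\delta)}{\mu}\big)$ with a large enough constant, so $R(\fhat) \le \min_f R(f)$.

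The main obstacle is the data-dependent stopping time $S$: Freedman's inequality is naturally stated for a fixed horizon, and here both the final output and the stopping rule depend on the martingale itself. The clean resolution — and the source of the $\log\log T$ overhead — is to prove the Freedman bound uniformly over all $S \in \{1,\dots,T\}$ via a peeling/stitching argument over $O(\log\log T)$ dyadic blocks of the variance (or of $S$), so that it can be instantiated at the random $S$. A secondary technical point is keeping track of the exact constants so that the net coefficient of the variance term stays strictly negative after both concentration steps; this is why the theorem fixes $\eta = \mu/8$ rather than something closer to $\mu/2$, leaving enough slack to absorb the Freedman variance.
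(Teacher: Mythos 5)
Your proposal is correct and follows essentially the same route as the paper's proof (Lemma~\ref{lem:statlearnLemmaUniform} plus Theorem~\ref{th:expertbatch}): Jensen's inequality, a Freedman bound holding uniformly over all $S \le T$ (the paper's Lemma~\ref{lem:bernie}, inequality~\eqref{eq:secondfreedmanineq}, which is exactly your dyadic-stitching step and the stated source of the $\log\log T$ overhead), absorption of the resulting variance by the slack left from taking $\eta = \mu/8 < \mu/2$, and the two-case analysis driven by the stopping rule. The only real difference is bookkeeping: writing $r_t = \ell_t(\yhatt) - \ell_t(\ystart)$ and $v_t = (\yhatt - \ystart)^2$, the paper applies Freedman once to the shifted martingale $\E_{t-1}\bigl[r_t + \tfrac{\mu}{4}v_t\bigr] - \bigl(r_t + \tfrac{\mu}{4}v_t\bigr)$, whose conditional variance is already controlled by $\E_{t-1}[v_t]$, so the negative \emph{population}-variance term produced by the decomposition cancels the Freedman variance directly while the \emph{empirical} term $-\tfrac{\mu}{8}\sum_t v_t$ survives intact for the stopping rule; you instead apply Freedman to $r_t$ alone and then invoke a second (valid) concentration step to convert the population variance $V_S$ back to the empirical one, which costs an extra union bound and slightly messier constants but changes nothing essential.
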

When Algorithm~\ref{alg:expertsonlinetobatch} terminates at step $S = T$, we recover the optimal high probability bound for model selection aggregation \eqref{eq:optrateofaggregation} up to an additive $\log\log T$ term.  
However, when $S = T$ our bound tells us slightly more, because we know that for all $t^{\prime} < T$,  
$
\min_{f \in \Fset}\sum\nolimits_{t = 1}^{t^{\prime}}(\yhatt - f(X_t))^2 = O(\log K + \log\log T)~,
$
which means that on the sequence $(X_t, Y_t)_{t = 1}^T$ our predictions $\yhatt$ are essentially following the prediction of the currently best expert at each round.

In the special case where we are solely interested in the best possible performance of the online to batch conversion of Algorithm~\ref{alg:PwEA}, we can remove the $\log\log T$ term appearing in the previous bound. We remark that, apart from the work of \citet{wintenberger2017optimal}, no known analysis based on the online to batch conversion achieved the optimal rate of aggregation~\eqref{eq:optrateofaggregation}. We also believe that our analysis is simpler than for previously known algorithms. The result can be found in Theorem~\ref{prop:expertbatchnonUni} below, whose result is implied by Theorem~\ref{th:expertbatchnonUni} in Appendix~\ref{app:statlearn}.

\begin{theorem}\label{prop:expertbatchnonUni}
Suppose that for all $f \in \Fset$, $|f(X)| \leq \half $ almost surely, $|\partial_y \ell(y, Y)| \leq 1$ almost surely for all $y$ such that $|y| \leq \half $, and that $\ell$ is $\mu$-strongly convex in its first argument. Then, with probability at least $1 - \delta$, Algorithm~\ref{alg:expertsonlinetobatch} with input parameters $T$, $\eta = \frac{\mu}{4}$, $\Sset = \infty$, and $M=1$, guarantees
\begin{align*}
    & \Lset(\fhat) - \min_{f \in \Fset}\Lset(f)  = O\left(\frac{\ln(K) + \ln(1/\delta)}{\mu T}\right).
\end{align*}
\end{theorem}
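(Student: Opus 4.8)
The plan is to combine the empirical negative-variance regret bound of Corollary~\ref{cor:negativeterm} with two applications of Freedman's inequality, tuned so that the negative quadratic term in the regret exactly absorbs the variance produced by the concentration step. Since $\Sset=\infty$, Algorithm~\ref{alg:expertsonlinetobatch} runs for all $T$ rounds and outputs $\fhat=\frac1T\sum_{t=1}^T\sum_{i=1}^K p_t(i)f_i$, where $\p_t$ is measurable with respect to the past $(X_s,Y_s)_{s<t}$. I would write $\yhatt=\sum_i p_t(i)f_i(X_t)$, fix $\fstar\in\argmin_{f\in\Fset}R(f)$, and set $\ystart=\fstar(X_t)$. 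The first step is the usual online-to-batch reduction: convexity of $\ell$ in its first argument (Jensen) and the i.i.d.\ assumption give, on every realization of the sample, $R(\fhat)\le\frac1T\sum_{t=1}^T\E_{t-1}[\ell_t(\yhatt)]$ and $\min_{f\in\Fset}R(f)=\frac1T\sum_{t=1}^T\E_{t-1}[\ell_t(\ystart)]$, so it suffices to show that, with probability at least $1-\delta$,
\[
\sum_{t=1}^T\E_{t-1}\big[\ell_t(\yhatt)-\ell_t(\ystart)\big]=O\!\left(\frac{\ln K+\ln(1/\delta)}{\mu}\right).
\]

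Next I would split the left-hand side, with $D_t:=\ell_t(\yhatt)-\ell_t(\ystart)$, as $\sum_t D_t+\sum_t\big(\E_{t-1}[D_t]-D_t\big)$. The deterministic part $\sum_t D_t$ is exactly the empirical regret $\regret_T$ of Algorithm~\ref{alg:PwEA} fed with $g_t=\ell_t'(\yhatt)$ and $\kappa_t=1$; since $|\yhatt|,|\ystart|\le\half$ we have $|\ell_t'(\yhatt)|\le1$ and $|\yhatt-\ystart|\le1$, so Corollary~\ref{cor:negativeterm} with $M=1$ and $\eta=\mu/4$ gives $\regret_T\le\frac{4\ln K}{\mu}-\frac{\mu}{4}\sum_t(\yhatt-\ystart)^2$. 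The second part is a martingale: $\{\E_{t-1}[D_t]-D_t\}$ has increments bounded by $2$ (the mean value theorem and $|\ell_t'|\le1$ give $|D_t|\le1$) and conditional second moment at most $\E_{t-1}[D_t^2]\le\E_{t-1}[(\yhatt-\ystart)^2]$. Freedman's inequality together with $\sqrt{2ab}\le\lambda a+\tfrac{b}{2\lambda}$ then yields, for any fixed $\lambda>0$ and with probability at least $1-\delta/2$,
\[
\sum_{t=1}^T\big(\E_{t-1}[D_t]-D_t\big)\le\lambda\sum_{t=1}^T\E_{t-1}\big[(\yhatt-\ystart)^2\big]+O\!\left(\frac{\ln(1/\delta)}{\lambda}+\ln(1/\delta)\right).
\]

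The only mismatch remaining is that Corollary~\ref{cor:negativeterm} supplies a negative \emph{empirical} sum $\sum_t(\yhatt-\ystart)^2$, while Freedman produced the \emph{predictable} sum $\sum_t\E_{t-1}[(\yhatt-\ystart)^2]$. To reconcile them I would invoke Freedman's inequality a second time, on the martingale differences $\E_{t-1}[A_t]-A_t$ with $A_t:=(\yhatt-\ystart)^2\in[0,1]$ (whose conditional variance is at most $\E_{t-1}[A_t^2]\le\E_{t-1}[A_t]$), giving $\sum_t\E_{t-1}[A_t]\le2\sum_t A_t+O(\ln(1/\delta))$ with probability at least $1-\delta/2$. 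Substituting this and choosing $\lambda=\mu/8$ makes the coefficient of $\sum_t A_t$ equal to $2\lambda=\mu/4$, which cancels the $-\tfrac{\mu}{4}\sum_t A_t$ from the regret bound exactly; a union bound over the two concentration events and the assumption $\mu\le2$ then leave $\sum_t\E_{t-1}[D_t]\le\frac{4\ln K}{\mu}+O\big(\frac{\ln(1/\delta)}{\mu}\big)$. Dividing by $T$ and recalling the first paragraph finishes the proof.

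The hard part will be the variance matching itself: the negative quadratic term the algorithm provides has coefficient $\mu/2-\eta$, and it must be made to absorb the variance proxy delivered by Freedman's inequality, which is precisely what forces the choice $\eta=\mu/4$ together with the extra concentration step converting predictable to empirical quadratic variation; carrying the absolute constants cleanly through these coupled choices, and verifying the boundedness and measurability conditions needed to invoke Corollary~\ref{cor:negativeterm} and Freedman's inequality, is where the genuine work lies. A more compact but less transparent alternative would be to replace the two Freedman applications and their union bound by a single exponential-supermartingale estimate, combined with the regret inequality before taking expectations.
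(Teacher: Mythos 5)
Your proposal is correct and follows the same overall strategy as the paper (online-to-batch via Jensen, the negative empirical variance from Corollary~\ref{cor:negativeterm} with $\eta=\mu/4$, and Freedman's inequality with the variance proxy absorbed by that negative term), but the execution of the concentration step differs in one respect worth noting. The paper (Lemma~\ref{lem:statlearnLemma}) applies Freedman \emph{once}, to the combined martingale differences $\E_{t-1}[r_t+\tfrac{\mu}{4}v_t]-r_t-\tfrac{\mu}{4}v_t$ with $v_t=(\yhatt-\ystart)^2$: adding and subtracting $\tfrac{\mu}{4}v_t$ converts the empirical sum $-\tfrac{\mu}{4}\sum_t v_t$ into the predictable sum $-\tfrac{\mu}{4}\sum_t\E_{t-1}[v_t]$ in the same stroke, and the Freedman variance term (which is itself bounded by a constant times $\sum_t\E_{t-1}[v_t]$) is then cancelled against that predictable sum by choosing $\lambda$ small. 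You instead apply Freedman twice --- once to $r_t$ and once to $v_t$ --- and cancel against the empirical sum after converting $\sum_t\E_{t-1}[v_t]\le 2\sum_t v_t+O(\ln(1/\delta))$. Both routes are sound; the paper's single combined application avoids the union bound (hence a factor of $2$ in $\delta$, immaterial for the $O(\cdot)$ statement) and is slightly more compact, while your two-step version separates the deviation control from the predictable-to-empirical conversion more transparently. Your constant-tracking ($\lambda=\mu/8$ versus $\lambda=\mu/(8(e-2))$ or similar) depends on which form of Freedman you invoke, but the admissibility condition $\lambda\le 1/R$ is satisfied either way since $\mu\le 2$, so this affects only the unspecified constant in the bound.
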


\subsection{Statistical Learning: Linear Regression} \label{sec:regstatlearn}
We consider the statistical learning setting where one has access to $T$ i.i.d.\ samples of pairs $(X_t, Y_t) \in \reals^d \times \reals$. We consider $\mathcal{F} \subseteq \{\x \mapsto \inner{\w}{\x}: \w \in \mathbb{R}^d\}$. For $\w \in \mathbb{R}^d$ we define the risk as $\Lcal(\w) = \Eb{\ell(\inner{\w}{X}, Y)}$. As above, $\ell$ is $\mu$-strongly convex in its first argument. 

There are no known high probability excess risk bounds in linear regression based on online to batch conversions with convergence rate $O\left(\frac{d\ln(T)}{T}\right)$. We provide such a result in the bounded setup where the feature vectors, derivatives of the losses, and the norm of the reference vector are bounded. Similarly to before, for a result that holds with high probability, one needs to control the cumulative variance of our prediction. For standard online learning algorithms the control of the variance may prove troublesome. For example, \citet{mourtada2021distribution} showed that a version of Vovk-Azoury-Warmuth forecaster \citep{vovk2001competitive, azoury2001relative} may have a $O(1)$ excess risk bound with constant probability, whereas in expectation the Vovk-Azoury-Warmuth forecaster guarantees a $O\left(\frac{d\ln(T)}{T}\right)$ excess risk bound.  Instead, we leverage the negative empirical variance of Lemma~\ref{lem:moVarlessR} to control the variance of the online to batch conversion, leading to the following excess risk bound, whose result is implied by Theorem~\ref{th:otbregression} in Appendix~\ref{app:statlearn}.  
\begin{theorem}\label{prop:otbregression}
Suppose that $\|X\|_2 \leq 1$ and $\sup\nolimits_{y \in [-1, 1]}|\partial_y \ell(y, Y)| \le 1$ almost surely, $\|\w\|_2 \leq 1$, and that $\ell$ is $\mu$-strongly convex in its first argument.
Then, with probability at least $1 - \delta$,
\begin{align*}
    & \Lcal\left(\frac{1}{T}\sumT \w_t\right) - \Lcal(\w) = O\left(\frac{d\ln(T) + \ln(1/\delta)}{\mu T}\right)~,
\end{align*}
where $\w_t$ are given by Algorithm~\ref{alg:onlineregression} with $\eta = \frac{\mu}{4}$, $\sigma = 1$, $Z = 1$, $G = 1$, $\kappa_t = 1$, and feedback $g_t = \ell_t'(\inner{\w_t}{X_t})$ for $t = 1, \ldots, T$.
\end{theorem}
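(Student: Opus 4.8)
The plan is to combine the deterministic online regret guarantee of Algorithm~\ref{alg:onlineregression} with a martingale concentration argument, using the negative variance term to kill the conditional-variance term produced by Freedman's inequality.

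\emph{Step 1 (deterministic online bound).} Instantiate Lemma~\ref{th:simpleregregret} with the stated parameters $\eta=\mu/4$, $\sigma=D^2=1$, $Z=1$, $G=1$, $\kappa_t=1$. Two checks make this legitimate: (i) the update computes $\w_t\in\domainw_t$, which includes the constraint $|\inner{\w}{X_t}|\le Z=1$, so $|\yhatt|=|\inner{\w_t}{X_t}|\le 1$ and hence $|g_t|=|\partial_y\ell(\yhatt,Y_t)|\le 1=G$ by assumption; (ii) taking the comparator $\w$ to be a minimizer $\w^\star\in\arg\min_{\|\v\|_2\le 1}R(\v)$ (or any fixed $\v$ with $\|\v\|_2\le 1$), we have $|\inner{\w^\star}{X_t}|\le\|\w^\star\|_2\|X_t\|_2\le 1$, so $\w^\star\in\domainw_T$. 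The resulting bound matches \eqref{eq:blackbox} with $B_T=0$ and $C_T=O(d\ln T)$, so feeding it into Lemma~\ref{lem:moVarlessR} (with $\mu/2-\eta=\mu/4$) yields, almost surely,
\[
\sum_t D_t \;\le\; A_T-\frac{\mu}{4}\sum_t W_t,\qquad A_T:=O\!\Big(\tfrac{d\ln T}{\mu}\Big),
\]
where $D_t:=\ell_t(\yhatt)-\ell_t(\inner{\w^\star}{X_t})$ and $W_t:=(\yhatt-\inner{\w^\star}{X_t})^2\in[0,4]$.

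\emph{Step 2 (online-to-batch).} The risk $R$ is convex, being the expectation of the convex map $\w\mapsto\ell(\inner{\w}{X},Y)$, and each $\w_t$ depends only on $(X_s,Y_s)_{s<t}$, which is independent of $(X_t,Y_t)$. Hence Jensen's inequality and this independence give $R(\tfrac1T\sum_t\w_t)-R(\w^\star)\le\tfrac1T\sum_t\big(R(\w_t)-R(\w^\star)\big)=\tfrac1T\sum_t\E_{t-1}[D_t]$, where $\E_{t-1}$ denotes conditioning on $(X_s,Y_s)_{s<t}$. Decompose $\sum_t\E_{t-1}[D_t]=\sum_t D_t+\sum_t\xi_t$ with $\xi_t:=\E_{t-1}[D_t]-D_t$ a bounded martingale difference sequence ($|\xi_t|\le 4$). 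Since $\ell(\cdot,Y)$ is $1$-Lipschitz on $[-1,1]$, we get $|D_t|\le|\yhatt-\inner{\w^\star}{X_t}|$, so the conditional variance satisfies $\E_{t-1}[\xi_t^2]\le\E_{t-1}[D_t^2]\le\E_{t-1}[W_t]$.

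\emph{Step 3 (concentration and balancing).} Apply Freedman's inequality to $(\xi_t)$: for any fixed $\lambda\le\tfrac14$, with probability at least $1-\delta/2$, $\sum_t\xi_t\le(e-2)\lambda\sum_t\E_{t-1}[W_t]+\ln(2/\delta)/\lambda$. The conditional-variance sum $\sum_t\E_{t-1}[W_t]$ is itself controlled by a second application of Freedman to the martingale differences $\E_{t-1}[W_t]-W_t$ (bounded by $4$, conditional variance $\le 4\,\E_{t-1}[W_t]$): with probability at least $1-\delta/2$, $\sum_t\E_{t-1}[W_t]\le 2\sum_t W_t+O(\ln(1/\delta))$. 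Union-bounding, substituting into the Step~1 inequality, and choosing $\lambda\asymp\mu$ (valid since $\mu\le 2$) so that the positive multiple of $\sum_t W_t$ arising from concentration is at most $\tfrac{\mu}{8}\sum_t W_t$, the empirical variance terms are absorbed by $-\tfrac{\mu}{4}\sum_t W_t$ and we are left with $\sum_t\E_{t-1}[D_t]\le A_T+O(\ln(1/\delta)/\mu)=O\big((d\ln T+\ln(1/\delta))/\mu\big)$ with probability at least $1-\delta$. Dividing by $T$ gives the claim.

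\emph{Main obstacle.} The crux is the mismatch between the conditional-variance sum $\sum_t\E_{t-1}[W_t]$ that Freedman produces and the \emph{empirical} variance $\sum_t W_t$ that appears with a negative sign in the deterministic regret bound; bridging the two needs the second martingale concentration step, after which the learning rate $\eta=\mu/4$ and Freedman's free parameter $\lambda$ must be tuned jointly so the surviving positive variance coefficient stays strictly below $\mu/4$. A secondary technical point is that Algorithm~\ref{alg:onlineregression} controls $|\inner{\w_t}{X_t}|$ but not $\|\w_t\|_2$, so the argument deliberately avoids any use of first-order optimality of the comparator and instead lets the negative regret term absorb the variance.
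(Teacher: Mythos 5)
Your proposal is correct and follows the paper's overall strategy (deterministic regret bound with a negative empirical-variance term, then Freedman's inequality with the variance term absorbed), but it resolves the key technical step differently. The paper's Lemma~\ref{lem:statlearnLemma} applies Freedman \emph{once}, to the combined martingale differences $\E_{t-1}[r_t+\tfrac{\mu}{4}v_t]-r_t-\tfrac{\mu}{4}v_t$: by folding $\tfrac{\mu}{4}v_t$ into the martingale, the negative empirical variance $-\tfrac{\mu}{4}\sum_t v_t$ from Lemma~\ref{lem:moVarlessR} is converted directly into a negative \emph{conditional} variance $-\tfrac{\mu}{4}\sum_t\E_{t-1}[v_t]$, which then swallows the conditional-variance term that Freedman produces (itself bounded by a constant times $\sum_t\E_{t-1}[v_t]$). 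You instead apply Freedman only to $\E_{t-1}[D_t]-D_t$ and then bridge the resulting conditional variance $\sum_t\E_{t-1}[W_t]$ back to the empirical variance $\sum_t W_t$ via a second application of Freedman to $\E_{t-1}[W_t]-W_t$; the arithmetic checks out (the constraint $\lambda'\le 1/4$ is compatible with the choice $\lambda'=1/(8(e-2))$, and $\lambda\asymp\mu\le 2$ keeps the surviving positive coefficient below $\mu/4$), so the argument is sound after a union bound. The trade-off is that your route is more modular and makes the conditional-vs-empirical mismatch explicit, at the cost of an extra concentration step, a union bound, and slightly worse constants; the paper's single-martingale trick is tighter and is what lets it state Lemma~\ref{lem:statlearnLemma} as a reusable black box for Theorems~\ref{th:expertbatchnonUni} and~\ref{th:otbregression}. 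Your side remarks are also consistent with the paper: the clipping to $\domainw_t$ is exactly what guarantees $|\yhatt|\le Z$ and hence $|g_t|\le G$, and the observation that $\|\w^\star\|_2\|X_t\|_2\le 1$ places the comparator in $\domainw_T$ mirrors the paper's "without loss of generality" remark in the proof of Theorem~\ref{th:otbregression}.
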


\section{Corrupted feedback}\label{sec:corruption}
In this section, we study a setting where the loss derivatives $\ell_t'(\yhatt)$ observed by the learner at each round $t$ may be adversarially corrupted by unknown additive constants $c_t$, and
we are interested in the best possible dependence on $c_1,\ldots,c_T$ in the regret bound. 
To better explain our setting, we start with the following example.

\begin{example}
\label{ex:corrsqloss}
In the online regression setting, suppose that $\ell_t(\yhatt) = (\yhatt - y_t)^2$ for all $t$, but the learner observes corrupted outcomes $y_t - c_t/2$.
Hence, the squared loss derivative computed by the learner is $2(\yhatt - y_t + c_t/2) = \ell_t'(\yhatt) + c_t$, which can be handled by the algorithms developed in this section. 
\end{example}
Several variants of this setting have been studied in prior work, see for example \citep{lykouris2018stochastic, amir2020prediction, Zimmert2021tsallis, ito2021optimal} and the references therein.
The main difference between our setting and these previous settings is that we assume our losses to be strongly convex and the environment is not necessarily stochastic. Although the results in this section are rather straightforward corollaries of our bounds, we believe that it is instructive to provide some explicit results. All proofs of the results in this section are postponed to Appendix~\ref{app:corruption}.

Our first result shows the performance of Algorithm~\ref{alg:PwEA} in the setup with corrupted gradients. The proof follows from observing that $\ell_t'(\yhatt)(\yhatt - \ystart) = (\ell_t'(\yhatt) + c_t)(\yhatt - \ystart) - c_t(\yhatt - \ystart)$ and that for any $\lambda > 0$, the inequality
\begin{equation}
\label{eq:elem}
    |c_t(\yhatt - \ystart)| \leq \frac{c_t^2}{\lambda} + \frac{\lambda}{4} (\yhatt - \ystart)^2
\end{equation}
holds.
The $\frac{\lambda}{4} (\yhatt - \ystart)^2$ term can be compensated for by the negative $\frac{\mu}{2}(\yhatt - \ystart)^2$ appearing in Lemma~\ref{lem:moVarlessR}, leading to a $\sumT c_t^2$ additive term in the regret bound. In particular, our result implies that as long as $\sumT c_t^2$ is of order $O(\log K)$, the same regret bound \eqref{eq:negativeregretbound} can be achieved up to constant factors as if the losses were not corrupted. The formal statement can be found in Theorem~\ref{prop:expertcorruption} below. 
\begin{restatable}{retheorem}{propexpertcorrupt}
\label{prop:expertcorruption}
Fix an arbitrary sequence $\ell_1,\ldots,\ell_T$ of $\mu$-strongly convex differentiable losses and corruptions $c_1,\ldots,c_T\in\reals$. Then the predictions $\yhatt$ of Algorithm~\ref{alg:PwEA} run with inputs $M \ge \max_t|\ell_t'(\yhatt) + c_t|$, $\eta = \frac{\mu}{4}$, feedback $g_t = \ell_t'(\yhatt) + c_t$, and $\kappa_t = 1$ satisfy
\begin{equation*}
\label{eq:eqcorruption}
    \regret_T \leq \frac{{8} M^2\ln(K)}{\mu} + \sumT \frac{c_t^2}{\mu} - {\frac{\mu}{8}\sumT (\yhatt - y_t(i^\star))^2}~,
\end{equation*}
provided that $\max_i \max_t|\yhatt - y_t(i)| \leq M$.

\end{restatable}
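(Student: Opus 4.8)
The plan is to combine three ingredients — the linearized-regret guarantee of Algorithm~\ref{alg:PwEA} (Lemma~\ref{th:simplePwEAregret}), the $\mu$-strong convexity inequality~\eqref{eq:strongconvexity}, and the elementary bound~\eqref{eq:elem} — tuning the two free parameters $\eta$ and $\lambda$ so that every term quadratic in $\yhatt - y_t(i^\star)$ collapses into a single negative term.

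First I would pass from the true regret to the linearized regret via strong convexity: summing~\eqref{eq:strongconvexity} with $y = \yhatt$ and $x = y_t(i^\star)$ gives
\[
\regret_T \;\le\; \sumT (\yhatt - y_t(i^\star))\,\ell_t'(\yhatt) \;-\; \frac{\mu}{2}\sumT (\yhatt - y_t(i^\star))^2 .
\]
Then I substitute the corrupted feedback actually fed to the algorithm, writing $(\yhatt - y_t(i^\star))\ell_t'(\yhatt) = (\yhatt - y_t(i^\star)) g_t - c_t(\yhatt - y_t(i^\star))$ with $g_t = \ell_t'(\yhatt)+c_t$, so the problem reduces to controlling $\sumT (\yhatt - y_t(i^\star)) g_t$ and $-\sumT c_t(\yhatt - y_t(i^\star))$ separately.

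For the first sum I would invoke Lemma~\ref{th:simplePwEAregret} with $\kappa_t\equiv 1$: its hypotheses hold since $|g_t|=|\ell_t'(\yhatt)+c_t|\le M$ by assumption, and since each $y_t(i)$ — in particular $y_t(i^\star)$ — is within $M$ of $\yhatt$, the triangle inequality puts all experts within $2M$ of $y_t(i^\star)$ (a constant factor absorbed into the leading term), yielding $\sumT (\yhatt - y_t(i^\star)) g_t \le \frac{M^2\ln K}{\eta} + \eta\sumT (\yhatt - y_t(i^\star))^2$ up to that constant. For the corruption cross-term, \eqref{eq:elem} with parameter $\lambda>0$ gives $-\sumT c_t(\yhatt - y_t(i^\star)) \le \tfrac1\lambda\sumT c_t^2 + \tfrac\lambda4\sumT (\yhatt - y_t(i^\star))^2$. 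Plugging both into the display above yields
\[
\regret_T \;\le\; \frac{M^2\ln K}{\eta} + \frac1\lambda\sumT c_t^2 - \Bigl(\frac\mu2 - \eta - \frac\lambda4\Bigr)\sumT (\yhatt - y_t(i^\star))^2 .
\]

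It then remains to choose $\eta$ and $\lambda$ so that $\frac\mu2 - \eta - \frac\lambda4 = \frac\mu8$ while $\frac{M^2\ln K}{\eta}$ and $\frac1\lambda\sumT c_t^2$ reproduce the stated constants, the natural choice being $\eta$ and $\lambda$ both proportional to $\mu$; note $\eta\in(0,\tfrac12]$ is an admissible input for Algorithm~\ref{alg:PwEA} since $\mu\le 2$. The only genuinely delicate point — as opposed to routine algebra — is this joint bookkeeping of the three quadratic coefficients, together with ensuring the value of $M$ passed to the algorithm is simultaneously a valid range bound for the $g_t$ and for $\max_i|y_t(i)-y_t(i^\star)|$ in Lemma~\ref{th:simplePwEAregret}; everything else is a one-line substitution.
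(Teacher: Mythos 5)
Your proposal is correct and follows essentially the same route as the paper: decompose $(\yhatt-\ystart)\ell_t'(\yhatt)=(\yhatt-\ystart)g_t-c_t(\yhatt-\ystart)$, bound the first sum via Lemma~\ref{th:simplePwEAregret}, the second via \eqref{eq:elem}, and fold everything into the negative strong-convexity term with $\eta,\lambda\propto\mu$ (the paper takes $\lambda=\mu$ and, despite the statement's $\eta=\mu/4$, actually uses $\eta=\mu/8$ to get the advertised constants). One minor note: your triangle-inequality detour to get $\max_i|y_t(i)-y_t(i^\star)|\le 2M$ is unnecessary, since the condition the lemma's proof really uses is $|\gamma(y_t(i)-\yhatt)g_t|\le\tfrac12$, which is exactly what the theorem's hypothesis $\max_i\max_t|\yhatt-y_t(i)|\le M$ provides, so no constant is lost.
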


Next we prove an analog of Theorem~\ref{prop:expertcorruption} in the online regression setup.
\begin{restatable}{retheorem}{propregcorruption}
\label{th:regcorruption}
Fix an arbitrary sequence $\ell_1,\ldots,\ell_T$ of $\mu$-strongly convex differentiable losses and corruptions $c_1,\ldots,c_T\in\reals$. Then the predictions $\yhatt$ of Algorithm~\ref{alg:onlineregression} run with inputs $\eta = {\frac{\mu}{8}}$, $\sigma = D^2$, $G \ge \max_t|\ell_t'(\yhatt) + c_t|$, $Z > 0$, feedback $g_t = \ell_t'(\yhatt) + c_t$, and $\kappa_t = 1$, satisfy
\begin{align*} %
    \regret_T %
\le
    \frac{{4}dG^2}{\mu} \ln\left(1 + \frac{TD^2 \mu^2\max_t\|\x_t\|_2^2}{2d}\right) + \frac{{4}G^2}{\mu} + \sumT \frac{c_t^2}{\mu} -{\frac{\mu}{8}\sumT (\yhatt - y_t^\star)^2},
\end{align*}
for any $\x_1,\ldots,\x_T\in\reals^d$, and for any $\u \in \domainw_T \equiv \bigcap_{t = 1}^T \{\w: |\inner{\w}{\x_t}| \leq Z\}$ such that $\|\u\|_2 \leq D$ and $\ystart = \inner{\u}{\x_t}$ for all $t \ge 1$.
\end{restatable}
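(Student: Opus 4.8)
The plan is to mirror the proof of Theorem~\ref{prop:expertcorruption}, substituting the regression regret bound of Lemma~\ref{th:simpleregregret} for the expert bound of Lemma~\ref{th:simplePwEAregret}. First I would run Algorithm~\ref{alg:onlineregression} with the corrupted feedback $g_t = \ell_t'(\yhatt) + c_t$ and $\kappa_t = 1$, so that Lemma~\ref{th:simpleregregret} (with $\sigma = D^2$, $G \ge \max_t |g_t|$) yields
\[
\sumT (\yhatt - y_t(\u))\big(\ell_t'(\yhatt) + c_t\big) \le \frac{dG^2}{2\eta}\ln\!\left(1 + D^2\eta^2\big(\max_t\|\x_t\|_2^2\big)\tfrac{T}{d}\right) + \frac{G^2}{2\eta} + \eta\sumT (\yhatt - y_t(\u))^2.
\]

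Next I would peel off the corruption. Writing $\ell_t'(\yhatt)(\yhatt - \ystart) = (\ell_t'(\yhatt)+c_t)(\yhatt-\ystart) - c_t(\yhatt-\ystart)$ and invoking the elementary inequality~\eqref{eq:elem} with a parameter $\lambda > 0$ to be fixed, we get
\[
\sumT (\yhatt - \ystart)\ell_t'(\yhatt) \le \frac{dG^2}{2\eta}\ln(\cdots) + \frac{G^2}{2\eta} + \sumT\frac{c_t^2}{\lambda} + \Big(\eta + \frac{\lambda}{4}\Big)\sumT (\yhatt - \ystart)^2,
\]
which is exactly a bound of the form~\eqref{eq:blackbox} with an inflated quadratic coefficient $\eta + \lambda/4$, the additive term $B_T = \frac{G^2}{2\eta} + \sum_t c_t^2/\lambda$, and $C_T/\eta' = \frac{dG^2}{2\eta}\ln(\cdots)$. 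Now I apply Lemma~\ref{lem:moVarlessR} (i.e., combine with the $\mu$-strong convexity bound~\eqref{eq:strongconvexity}) to turn the linearized regret into the true regret, obtaining
\[
\regret_T \le \frac{dG^2}{2\eta}\ln\!\left(1 + D^2\eta^2\big(\max_t\|\x_t\|_2^2\big)\tfrac{T}{d}\right) + \frac{G^2}{2\eta} + \sumT\frac{c_t^2}{\lambda} - \Big(\frac{\mu}{2} - \eta - \frac{\lambda}{4}\Big)\sumT (\yhatt - \ystart)^2.
\]

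Finally I would choose the constants so that the bracket is a comfortable positive fraction of $\mu$. Taking $\eta = \mu/8$ and $\lambda = \mu$ gives $\frac{\mu}{2} - \eta - \frac{\lambda}{4} = \frac{\mu}{2} - \frac{\mu}{8} - \frac{\mu}{4} = \frac{\mu}{8}$, so the negative term is $-\frac{\mu}{8}\sum_t (\yhatt - \ystart)^2$; simultaneously $\frac{dG^2}{2\eta} = \frac{4dG^2}{\mu}$, $\frac{G^2}{2\eta} = \frac{4G^2}{\mu}$, $D^2\eta^2 = D^2\mu^2/64$, and $\sum_t c_t^2/\lambda = \sum_t c_t^2/\mu$, recovering the stated bound (up to the harmless replacement of $\mu^2/64$ by $\mu^2/2$ inside the logarithm, which only weakens it since $\mu \le 2$). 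The only points requiring a little care — not really obstacles — are verifying that the domain constraint $\u \in \domainw_T$ is exactly the hypothesis under which Lemma~\ref{th:simpleregregret} applies, and checking that inserting $\lambda/4$ into the quadratic coefficient is still compatible with the $\eta \in (0,H]$ range needed for~\eqref{eq:blackbox}; both are immediate here since $\eta = \mu/8$ and $\mu \le 2$. The most delicate bookkeeping step is simply tracking the three places where $\eta$ enters Lemma~\ref{th:simpleregregret} while keeping $\lambda$ free until the end.
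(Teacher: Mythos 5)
Your proposal is correct and matches the paper's own proof essentially line for line: both apply Lemma~\ref{th:simpleregregret} to the corrupted feedback $g_t = \ell_t'(\yhatt)+c_t$, peel off the corruption via the decomposition and inequality~\eqref{eq:elem} (the paper fixes $\lambda=\mu$ immediately, you keep it free until the end), and then invoke Lemma~\ref{lem:moVarlessR} with $\eta=\mu/8$ to leave a $-\frac{\mu}{8}\sum_t(\yhatt-\ystart)^2$ term. Your observation that $D^2\eta^2 = D^2\mu^2/64 \le D^2\mu^2/2$ inside the logarithm is exactly the (harmless) slack the paper also absorbs.
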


\section{Selective Sampling}\label{sec:selectivesample}
We consider a variant of the selective sampling setting---see, e.g., \citep{atlas1990training, freund1997selective, cesa2003learning, cesa2006worst, orabona2011better}---where the learner has access to the expert predictions (or, equivalently, to feature vectors), but can observe its own loss only upon request. The goal is to trade off the number of loss requests with regret guarantees. We show that if the variance is high, with only a fraction of all losses requested we obtain the same guarantee (in expectation and up to constants) as when all losses are requested. 

Let $o_t = 1$ with probability $q_t$ and $o_t = 0$ with probability $1 - q_t $. In each round, if $o_t = 1$ the loss $\ell_t$ at round $t$ is requested, and we use the loss estimator $\frac{o_t}{q_{t-1}}\ell_t$ to update. Note that this is not the importance weighted estimator, as $\E_{t-1}[\frac{o_t}{q_{t-1}}] = \frac{q_t}{q_{t-1}}$. The reason for choosing this particular loss estimator is that we have better control of the range of the loss which allows us to tune $\kappa_t$ in Algorithm~\ref{alg:PwEA} accordingly. The probability of requesting a loss is 
\begin{align}\label{eq:lossqt}
    q_t = \min \left\{1, \beta\Bigg/\sqrt{\min_i \sumt (\yhat_s - y_s(i))^2}\right\},
\end{align}
where $\beta > 0$ is chosen by the learner. Our result for the selective sampling setting can be found in Theorem~\ref{prop:expertslossefficientbigO} below, whose statement is implied by Theorem~\ref{th:expertslossefficient} in Appendix~\ref{app:selsample}. Theorem~\ref{prop:expertslossefficientbigO} implies the following: if $\beta = O(\mu^{-3/2}\ln(K))$, then with only an expected number $\sumT q_t$ of loss requests, we obtain (up to constants) the same regret guarantee as we would have obtained if we had requested all losses. With this particular choice of $\beta$, $q_t < 1$ as soon the bound in Lemma~\ref{lem:moVarlessR} becomes negative. In other words, when the variance is high we only need a fraction of the losses to recover the worst-case optimal regret bound (in expectation). A similar result can be obtained in the regression setting, see Appendix~\ref{sec:regressionselsampling}.

\begin{theorem}\label{prop:expertslossefficientbigO}
Fix an arbitrary sequence $\ell_1,\ldots,\ell_T$ of $\mu$-strongly convex differentiable losses. Provided $\max_i \max_t|\yhatt - y_t(i)| \leq 1$, the predictions $\yhatt$ of Algorithm~\ref{alg:PwEA} run with inputs $M = 1 \ge \max_t|\ell_t'(\yhatt)|$, $\eta = \frac{\mu}{4}$, feedback $g_t = \frac{o_t}{q_{t-1}} \ell_t'(\yhatt)$, and $\kappa_t = q_t$  satisfy
\begin{equation*}
\Eb{\sumT(\ell_t(\yhatt) - \ell_t(\ystart))}
    = O \left( \frac{\ln(K)}{\mu} + \frac{\ln(K)^2}{\mu^3 \beta^2}\right)~.
\end{equation*}

\end{theorem}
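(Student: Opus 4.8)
The plan is to feed the randomized derivative estimate $g_t=\tfrac{o_t}{q_{t-1}}\ell_t'(\yhatt)$ into the guarantee of Algorithm~\ref{alg:PwEA}, pass to expectations, and then use the negative quadratic term coming from $\mu$-strong convexity (as in Lemma~\ref{lem:moVarlessR}) to absorb simultaneously the $\eta$-term of that guarantee and the two extra contributions created by the sampling scheme: one from the variance-dependent probabilities~\eqref{eq:lossqt}, and one from the fact that $g_t$ is a \emph{biased} estimate of $\ell_t'(\yhatt)$. Choosing $\eta=\mu/4$ is what leaves a $\tfrac{\mu}{4}\sumT(\yhatt-\ystart)^2$ budget for this absorption.

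I would first fix the probabilistic bookkeeping. Let $\mathcal F_{t-1}=\sigma(o_1,\dots,o_{t-1})$; since $q_t$ is a function of $\yhat_1,\dots,\yhat_t$ and of the expert predictions, all available before $o_t$ is drawn, both $q_t$ and $q_{t-1}$ are $\mathcal F_{t-1}$-measurable and $o_t\mid\mathcal F_{t-1}\sim\mathrm{Bernoulli}(q_t)$, so $\Eb{g_t\mid\mathcal F_{t-1}}=\tfrac{q_t}{q_{t-1}}\ell_t'(\yhatt)$. Moreover $\kappa_t=q_t$ is non-increasing (because $\min_i\sum_{s\le t}(\yhat_s-y_s(i))^2$ is non-decreasing) and $\kappa_t|g_t|=\tfrac{q_t}{q_{t-1}}\,o_t\,|\ell_t'(\yhatt)|\le M$, so the loss estimates effectively fed to Algorithm~\ref{alg:PwEA} stay bounded and the proof of Lemma~\ref{th:simplePwEAregret} goes through, giving almost surely
\[
\sumT(\yhatt-\ystart)g_t\;\le\;\frac{M^2\ln K}{q_T\,\eta}+\eta\sumT q_{t-1}(\yhatt-\ystart)^2\;\le\;\frac{\ln K}{q_T\,\eta}+\eta\sumT(\yhatt-\ystart)^2,
\]
using $M=1$ and $q_{t-1}\le1$. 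Since $0\le q_t/q_{t-1}\le1$ and $|(\yhatt-\ystart)\ell_t'(\yhatt)|\le M^2=1$, I can replace $g_t$ by $\ell_t'(\yhatt)$ at the cost of $\sumT\!\big(1-\tfrac{q_t}{q_{t-1}}\big)$, which telescopes: $1-\tfrac{q_t}{q_{t-1}}=\tfrac{q_{t-1}-q_t}{q_{t-1}}\le\tfrac{q_{t-1}-q_t}{q_T}$, so $\sumT\!\big(1-\tfrac{q_t}{q_{t-1}}\big)\le\tfrac{q_1-q_T}{q_T}\le\tfrac1{q_T}$.

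Next I would take expectations. Put $V=\sumT(\yhatt-\ystart)^2$ and $v=\E V$. Taking $\E$ of the display above, inserting $\Eb{g_t\mid\mathcal F_{t-1}}=\tfrac{q_t}{q_{t-1}}\ell_t'(\yhatt)$, the telescoping estimate, the bound $\tfrac1{q_T}=\max\{1,\beta^{-1}\sqrt{\min_i\sum_t(\yhat_t-y_t(i))^2}\}\le1+\beta^{-1}\sqrt V$, and Jensen's inequality $\E\sqrt V\le\sqrt v$, I obtain
\[
\Eb{\sumT(\yhatt-\ystart)\ell_t'(\yhatt)}\;\le\;\frac{\ln K}{\eta}\Big(1+\frac{\sqrt v}{\beta}\Big)+\eta v+\Big(1+\frac{\sqrt v}{\beta}\Big).
\]
Summing~\eqref{eq:strongconvexity} over $t$ gives $\regret_T\le\sumT(\yhatt-\ystart)\ell_t'(\yhatt)-\tfrac{\mu}{2}V$, hence $\E[\regret_T]\le\tfrac{\ln K}{\eta}(1+\tfrac{\sqrt v}{\beta})+\eta v+1+\tfrac{\sqrt v}{\beta}-\tfrac{\mu}{2}v$. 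Substituting $\eta=\mu/4$ turns $\eta v-\tfrac{\mu}{2}v$ into $-\tfrac{\mu}{4}v$, and applying $ab\le\tfrac{\mu}{8}b^2+\tfrac{2}{\mu}a^2$ once to $\tfrac{4\ln K}{\mu\beta}\sqrt v$ and once to $\tfrac1{\beta}\sqrt v$ produces two $\tfrac{\mu}{8}v$ terms that exactly cancel the surviving $-\tfrac{\mu}{4}v$; what remains, $\tfrac{4\ln K}{\mu}+1+\tfrac{32(\ln K)^2}{\mu^3\beta^2}+\tfrac{2}{\mu\beta^2}$, is $O\!\big(\tfrac{\ln K}{\mu}+\tfrac{(\ln K)^2}{\mu^3\beta^2}\big)$ after using $\mu\le2$ (and $K\ge2$) to absorb the lower-order terms.

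The step I expect to be the main obstacle is handling the \emph{bias} of the estimator. Because $\Eb{o_t/q_{t-1}\mid\mathcal F_{t-1}}=q_t/q_{t-1}\ne1$, Algorithm~\ref{alg:PwEA} is competitive against $\tfrac{q_t}{q_{t-1}}\ell_t'(\yhatt)(\yhatt-\ystart)$ rather than against $\ell_t'(\yhatt)(\yhatt-\ystart)$, and the accumulated gap $\sumT(1-q_t/q_{t-1})$ has to be driven — via telescoping to $1/q_T$ and then to $\beta^{-1}\sqrt V$ — against precisely the negative variance that strong convexity supplies; carrying the $\beta$-dependence tightly through this chain and the two AM-GM splits is the crux. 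A secondary technical point is the range control: one must verify that $\kappa_t=q_t$ indeed keeps the per-round loss estimates of Algorithm~\ref{alg:PwEA} bounded even though $g_t$ itself can be as large as $1/q_{t-1}$, which is exactly what makes Lemma~\ref{th:simplePwEAregret} (stated for bounded $g_t$) applicable in this setting.
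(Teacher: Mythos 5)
Your strategy is essentially the paper's own: the paper likewise starts from the intermediate bound \eqref{eq:regretexpertsintermediate}, removes the bias of $\tfrac{o_t}{q_{t-1}}$ by a telescoping argument costing $M^2/q_T$ (it does this pointwise via the clipping trick, converting to the unbiased estimator $\tfrac{o_t}{q_t}$ before taking expectations, whereas you take expectations of the biased estimator and telescope $\sum_t(1-q_t/q_{t-1})\le 1/q_T$ --- the two computations are interchangeable), then bounds $\E[q_T^{-1}]\le 1+\beta^{-1}\sqrt{v}$ by Jensen and absorbs the $\sqrt{v}$ terms into the negative $-\tfrac{\mu}{4}v$ from strong convexity via AM--GM. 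Your constants come out of the same order as the paper's.

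One displayed step is wrong as stated, though the final bound survives. You claim the almost-sure inequality
\[
\sumT(\yhatt-\ystart)g_t\;\le\;\frac{\ln K}{q_T\,\eta}+\eta\sumT q_{t-1}(\yhatt-\ystart)^2,
\]
i.e.\ you invoke the \emph{conclusion} of Lemma~\ref{th:simplePwEAregret}. But the last step of that lemma's proof bounds $\kappa_{t-1}(\gamma(\yhatt-y_t(i))g_t)^2\le \gamma^2\kappa_{t-1}M^2(\yhatt-y_t(i))^2$ using $g_t^2\le M^2$, which fails here: $g_t=\tfrac{o_t}{q_{t-1}}\ell_t'(\yhatt)$ can be as large as $1/q_{t-1}$. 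What actually follows from \eqref{eq:regretexpertsintermediate} is the quadratic term $\eta\sumT \tfrac{o_t}{q_{t-1}}(\yhatt-\ystart)^2$ (this is where $\kappa_{t-1}=q_{t-1}$ earns its keep, cancelling one factor of $q_{t-1}^{-1}$), which is \emph{not} pointwise bounded by $\eta\sumT(\yhatt-\ystart)^2$; only its conditional expectation is, since $\E_{t-1}[o_t/q_{t-1}]=q_t/q_{t-1}\le 1$. Since you only use this term in expectation downstream, replacing your a.s.\ display by its correct form and moving the bound $q_t/q_{t-1}\le1$ to after the expectation repairs the argument without changing anything else; the paper does exactly this (it keeps $\tfrac{o_t}{q_t}(\yhatt-y_t(i))^2$ and uses $\E_{t-1}[o_t]=q_t$). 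Your range-control observation (that $\kappa_{t-1}|\gamma(y_t(i)-\yhatt)g_t|\le\eta\le\tfrac12$, so \eqref{eq:regretexpertsintermediate} itself remains valid) is correct and is the right justification for reusing the lemma's proof up to that point rather than its statement.
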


\section{Further Extensions}\label{sec:biggerpicture}

In Appendix~\ref{sec:freerestarts} we present another application of Lemma~\ref{lem:moVarlessR}. Namely, we show that we may restart Algorithm~\ref{alg:PwEA} for free whenever the regret becomes negative. This gives us a regret bound where we compete with a new expert after each restart, which is a stronger notion of regret than when we compete with a fixed expert in all rounds.

Our applications of Lemma~\ref{lem:moVarlessR} also naturally extend beyond online learning with strongly convex losses. Here we discuss two such extensions. The online prediction with abstention setting was introduced by \citet{neu2020fast} and proceeds as follows.  In each round $t = 1, \ldots, T$ the learner receives expert predictions $y_t(i) \in \{-1, 1\}$, $i = 1, \ldots, K$ and the learner can then either predict $\tilde{y}_t \in \{-1, 1\}$ or abstain from prediction.
If the learner predicts with $\tilde{y}_t$, the learner suffers the binary loss $\ell_t(\tilde{y}_t) = \id[\tilde{y}_t \neq y_t]$, where $y_t \in \{-1, 1\}$. If the learners abstains from prediction, the learner suffers abstention cost $\rho \in [0, \half)$. Let $a_t = 1$ for prediction and $a_t = 0$ for abstention. The total loss of the learner is therefore equal to
$
\sumT (a_t\id[\tilde{y}_t \neq y_t] + (1 - a_t) \rho).
$
Assume that the prediction strategy is random is a sense that $a_t$ are Bernoulli random variables whose means might depend on previous observations. The work of \citet{neu2020fast} shows that there is a randomized prediction strategy such that for any data generating mechanism it holds that
\begin{equation}
\label{eq:regretboundforabst}
\E \left[\sumT (a_t\id[\tilde{y}_t \neq y_t] + (1 - a_t) \rho)\right] - \sumT\id[\ystart \neq y_t] = O\left(\frac{\log K}{1 - 2\rho}\right),
\end{equation}
independently of $T$,
where the expectation is taken with respect to the randomness of $a_t$; here $\ystart = y_t(i^\star)$ and $i^\star = \argmin_i \sumT \id[y_t(i) \neq y_t]$. Although it was shown that the randomization is necessary to achieve the regret bound \eqref{eq:regretboundforabst}, it is unclear if the same regret bound can be achieved with high probability with respect to the randomization of the algorithm. In Appendix \ref{sec:abstention}, we answer this question using the techniques we developed and provide a randomized algorithm such that, with probability at least $1 - \delta$,
\[
\sumT (a_t\id[\tilde{y}_t \neq y_t] + (1 - a_t) \rho) - \sumT\id[\ystart \neq y_t] = O\left(\frac{\log K + \log(1/\delta)}{1 - 2\rho}\right).
\]
In Appendix~\ref{sec:abstention} we prove Lemma~\ref{lem:abstentionnegative}, which is the analog of Lemma~\ref{lem:moVarlessR} for the abstention setting. The equivalent of the negative term in Lemma~\ref{lem:moVarlessR} is used to compensate for the variance of the high-probability statement, which allows us to recover the above bound. This also implies that our other applications of Lemma~\ref{lem:moVarlessR} can be exported to the online learning with abstention setting. 

The second extension is in online multiclass classification. In online multiclass classification the analog of Lemma~\ref{lem:moVarlessR} can be found in \citep[Lemma~2]{VanderHoeven2021beyond}. \citet{VanderHoeven2021beyond} use their Lemma~2 to derive high-probability regret bounds for their algorithm and one could also use it to export our applications of Lemma~\ref{lem:moVarlessR} to online multiclass classification.

\acks{Nikita Zhivotovskiy is funded in part by ETH Foundations of Data Science (ETH-FDS). Dirk van der Hoeven and Nicolò Cesa-Bianchi gratefully acknowledge partial support from the MIUR PRIN grant Algorithms, Games, and Digital Markets (ALGADIMAR) and the EU Horizon 2020 ICT-48 research and innovation action under grant agreement 951847, project ELISE (European Learning and Intelligent Systems Excellence).
}

\DeclareRobustCommand{\VAN}[3]{#3} 
\bibliography{sample}
\DeclareRobustCommand{\VAN}[3]{#2}
\appendix

\section{Details of Section~\ref{sec:algorithms} (Our Algorithms)}\label{app:algorithms}

We restate Lemma~\ref{th:simplePwEAregret}, after which we prove its result. 

\thsimplepwearegret*

\begin{proof}
We start by observing that {the vector of weights} $\p_t$ is obtained by running lazy EWA with learning rate $\kappa_t$ on the (signed) surrogate losses $\tilde{\ell}_t(i) = \gamma (y_t(i) - \yhatt)g_t + \kappa_t(\gamma (y_t(i) - \yhatt)g_t)^2$. We use $\kappa_0 = \kappa_1$. Thus, by \citep[Lemma 1]{vanderhoeven2018many} we have that, for any $i^\star \in [K]$,
\begin{equation}\label{eq:pweaEWbound}
\begin{split}
    & \sumT (\E_{{i \sim \p_t}}[\tilde{\ell}_t(i)] - \tilde{\ell}_t(i^\star)) \\
    & \leq \frac{\ln(K)}{\kappa_T} + \sumT \left(\E_{{i \sim \p_t}}[\tilde{\ell}_t(i)] + \frac{1}{\kappa_{t-1}}\ln \E_{{i \sim \p_t}}\left[\exp(-\kappa_{t-1}\tilde{\ell}_t(i))\right]\right).
\end{split}
\end{equation}
Now, using that $\exp(x - x^2) \leq 1 + x$ for $|x| \leq \half$ and the fact that $|\gamma (y_t(i) - \yhatt)g_t| \leq \half$ due to our choice of $\gamma$, we find that 
\begin{align*}
    & \sumT (\E_{{i \sim \p_t}}[\tilde{\ell}_t(i)] - \tilde{\ell}_t(i^\star)) \\
    & \leq \ln(K)/{\kappa_T} + \sumT \left(\E_{{i \sim \p_t}}[\tilde{\ell}_t(i)] + \frac{1}{\kappa_{t-1}}\ln \E_{i \sim \p_t}\left[1 + \gamma \kappa_{t-1} (\yhatt - y_t(i) )g_t\right]\right) \\
    & = \ln(K)/{\kappa_T} + \sumT \E_{{i \sim \p_t}}[\tilde{\ell}_t(i)] ~,
\end{align*}
where the equality is due the fact that since $\yhatt = \E_{i \sim \p_t}[y_t(i)]$, we have that $\E_{i \sim \p_t}[\gamma (\yhatt - y_t(i) )g_t] = 0$. This also implies that $\E_{{i \sim \p_t}}[\tilde{\ell}_t(i)] = \E_{{i \sim \p_t}}[\kappa_{t-1}(\gamma (\yhatt - y_t(i) )g_t)^2]$. Thus, we may write
\begin{align*}
    & \E_{{i \sim \p_t}}[\tilde{\ell}_t(i)] - \tilde{\ell}_t(i^\star) \\
    & = \E_{{i \sim \p_t}}[\kappa_{t-1}(\gamma (\yhatt - y_t({i^\star}) )g_t)^2] + \gamma (\yhatt - y_t({i^\star}) )g_t - \kappa_{t-1}(\gamma (\yhatt - y_t({i^\star}) )g_t)^2.
\end{align*}
Combining with the above and reordering we find 
\begin{align}\label{eq:regretexpertsintermediate}
    \sumT \gamma (\yhatt - y_t({i^\star}) )g_t \leq \frac{\ln(K)}{\kappa_T} + \sumT \kappa_{t-1}(\gamma (\yhatt - y_t({i^\star}) )g_t)^2
\end{align}
After dividing both sides by $\gamma = \frac{\eta}{M^2}$, this gives us
\begin{align*}
    \sumT (\yhatt - y_t({i^\star}) )g_t \leq & \frac{M^2\ln(K)}{\kappa_T \eta} + \frac{\eta}{M^2}\sumT \kappa_{t-1}( (\yhatt - y_t({i^\star}) )g_t)^2 \\
    \leq & \frac{M^2\ln(K)}{\kappa_T \eta} + \eta \sumT \kappa_{t-1}(\yhatt - y_t(i^\star))^2~.
\end{align*}
completing the proof. 
\end{proof}

We now restate Theorem~\ref{prop:suboptimlitymain} and provide its proof. 

\propsuboptimallitymain*

\begin{proof}
The proof uses a construction similar to one used in \citep{audibert2007progressive}. Our idea is to show that for some environments the output of EWA is close to the \emph{follow the leader} prediction. This can lead to large variance when we follow a wrong expert for most of the rounds. For $T$ large enough consider the following sequence:
\[
y_t =
\begin{cases}
 3/4, & \mbox{if } t \le 4\lceil\log T\rceil,
 \\ 1/2, & \mbox{if } 4\lceil\log T\rceil < t < T - 8\lceil\log T\rceil,
 \\ 1/4, & \mbox{if } T - 8\lceil\log T\rceil \le  t \le T.
 \end{cases}
\]
Fix $\eta = 1/2$. Since $y_t, y_t(1), y_t(2) \in [0, 1]$, and the squared loss is $1/2$-exp-concave on this domain (see Appendix~\ref{sec:EW}) we have 
$\regret_T \le 2\ln 2$.

Next, we show the lower bound. Since $y_t = 1/4$ appears more frequently in the sequence, we have that $y_t(1) = 0$ is the prediction of the best expert. However, until the last $8\lceil\log T\rceil$ rounds, that is, for any $4\lceil\log T\rceil < t^{\prime} < T - 8\lceil\log T\rceil$ the EWA algorithm puts most of its weight on the second expert predicting $y_t(2) = 1$. At the same time, both experts suffer the same loss when $y_t = 1/2$.
Formally, for any such $t^{\prime}$ we have
\[
\sum\limits_{t = 1}^{t^{\prime}}(y_t - y_t(1))^2 - \sum\limits_{t = 1}^{t^{\prime}}(y_t - y_t(2))^2 = 4\lceil\log T\rceil(9/16 - 1/16) = 2\lceil\log T\rceil.
\]
Therefore, for the same $t^{\prime}$, the weight of the first expert in the EWA prediction with $\eta = 1/2$ is
\begin{align*}
p_{t^{\prime}}(1) &= \frac{\exp\left(-\sum\limits_{t = 1}^{t^{\prime}}(y_t - y_t(1))^2/2\right)}{ \sum\limits_{i = 1}^2\exp\left(-\sum\limits_{t = 1}^{t^{\prime}}(y_t - y_t(i))^2/2\right)} = \frac{1}{1 + \exp(\lceil\log T\rceil)} \le \frac{1}{1 + T}~.
\end{align*}
Thus, we have 
\[
\hat{y}_{t^{\prime}}^{\textrm{\,EWA}} = p_{t^{\prime}}(1)y_{t^{\prime}}(1) + p_{t^{\prime}}(2)y_{t^{\prime}}(2)  \ge \frac{T}{T + 1}~.
\]
We are ready to bound the regret. Our idea will be just to use the boundedness of the loss when $y_t \in \{1/4, 3/4\}$ and compute the regret over remaining rounds. Using elementary algebra, we have
\begin{align*}
\regret_T &\ge (T - 12\lceil\log T\rceil)\cdot\left(\left(\frac{T}{T + 1} - 1/2\right)^2 - (1/2)^2\right) - 12\lceil\log T\rceil(3/4)^2
\\
&\ge \frac{-T(T - 12\lceil\log T\rceil)}{(T + 1)^2} - 12(3/4)^2\lceil\log T\rceil \ge -12\log T~,
\end{align*}
for all $T \ge 4$. At the same time, the following variance bound holds
\[
\sumT (\hat{y}_{t}^{\textrm{\,EWA}} - y_t(1))^2 \ge \sum\limits_{t = 4\lceil\log T\rceil + 1}^{T - 8\lceil\log T\rceil - 1}(\hat{y}_{t}^{\textrm{\,EWA}} - y_t(1))^2 \ge (T - 12\lceil\log T\rceil - 2)\left(\frac{T}{T + 1}\right)^2 \ge \frac{T}{2}~,
\]
provided that $T > 150$. The claim follows.
\end{proof}

Here we restate Lemma~\ref{th:simpleregregret}, after which we prove it.

\thsimpleregregret*

\begin{proof}
We start by observing that $\w_t$ is the mean of continuous exponential weights with a Gaussian  prior and learning rate 1 
on (signed) surrogate losses $\tilde{\ell}_t^{\textnormal{or}}(\w) = \inner{\w - \w_t}{\z_t} + \kappa_t (\w - \w_t)^\top\z_t\z_t^\top(\w - \w_t)$, see \citep[Section 4]{vanderhoeven2018many}. Thus, for any $\u \in \domainw_T$, by \citet[Theorem 5]{vanderhoeven2018many} we have that 
\begin{align*}
    \sumT \big(\tilde{\ell}_t^{\textnormal{or}}(\w_t) - \tilde{\ell}_t^{\textnormal{or}}(\u)\big) \leq & \frac{\|\u\|_2^2}{2\sigma} + \half \sumT \z_t^\top \Sigma_{t+1} \z_t~.
\end{align*}
Using that $\kappa_1 \geq \kappa_2 \ge \cdots \geq \kappa_T \in (0, 1]$ and the Sherman-Morrison formula to compute the inverse we find that
\begin{align*}
    2\z_t^\top \left(2\sumt \kappa_s \z_s\z_s^\top + \frac{1}{\sigma}I\right)^{-1} \z_t \leq & \frac{1}{\kappa_T}2\z_t^\top \left(2\sumt \z_s\z_s^\top + \frac{1}{\sigma}I\right)^{-1} \z_t \\
    = & \frac{1}{\kappa_T}\left(2\z_t^\top \Sigma_{t} \z_t - \frac{(2\z_t^\top \Sigma_{t} \z_t)^2}{1 + 2\z_t^\top \Sigma_{t} \z_t}\right) \\
    = & \frac{1}{\kappa_T}\left(1 - \frac{1}{1 + 2\z_t^\top \Sigma_{t} \z_t}\right) \\
    \leq & \frac{1}{\kappa_T} \ln\left(1 + 2\z_t^\top \Sigma_{t} \z_t \right) = \frac{1}{\kappa_T} \ln\left(\frac{\textnormal{Det}(\Sigma_{t+1})}{\textnormal{Det}(\Sigma_t)}\right),
\end{align*}
where the second inequality is due to the fact that $1 - 1/x \leq \ln(x)$ for $x > 0$ and the final equality can be found on, for example, in \citep[page 475]{meyer2000matrix}. Thus, we have that 
\begin{align*}
    \sumT \z_t^\top \Sigma_{t+1} \z_t & \leq \frac{1}{\kappa_T} \sumT \half \ln\left(\frac{\textnormal{Det}(\Sigma_{t+1})}{\textnormal{Det}(\Sigma_t)}\right) \\
    & \leq \frac{1}{2 \kappa_T} \ln\left(\frac{\textnormal{Det}(\Sigma_{T+1})}{\textnormal{Det}(\Sigma_1)}\right) = \frac{1}{2 \kappa_T} \ln\textnormal{Det}\left(I + \sigma\sumT 2 \z_t\z_t^2\right).
\end{align*}
Now, following the proof and discussion of \citet[Theorem 11.2]{cesa2006prediction} we have that 
\begin{align*}
    \ln\textnormal{Det}\left(I + \sigma^2\sumT 2 \z_t\z_t^2\right) &\leq d \ln\left(1 + 2\sigma\Big(\max_t\|\z_t\|_2^2\Big)\frac{T}{d}\right)
    \\
    &=  d \ln\left(1 + 2\sigma\gamma^2\Big(\max_t\|\x_tg_t\|_2^2\Big)\frac{T}{d}\right)~.
\end{align*}
By combining the above we find 
\begin{equation}\label{eq:regregretinter}
\begin{split}
    \sumT \inner{\w_t - \u}{\x_tg_t}  &=  \frac{1}{\gamma} \sumT (\inner{\w_t - \u}{\z_t} - \kappa_t \inner{\w_t - \u}{\z_t}^2)  + \sumT \gamma \kappa_t (\inner{\w_t - \u}{\x_tg_t})^2 \\
    &\leq  \frac{\|\u\|_2^2}{2\sigma\gamma} + \frac{d}{2\gamma\kappa_T} \ln\left(1 + 2\sigma\gamma^2\Big(\max_t\|\x_tg_t\|_2^2\Big)\frac{T}{d}\right) 
    \\
    &\qquad+ \sumT \gamma \kappa_t (\inner{\w_t - \u}{\x_tg_t})^2.
\end{split}
\end{equation}
Now, we continue by using that $\inner{\w_t - \u}{\x_t} = \yhatt - y_t(\u)$,  $\gamma = \frac{\eta}{G^2}$, $\sigma = D^2$, and that $\|\u\|_2 \leq D$
\begin{align*}
    & \sumT (\yhatt - y_t(\u))g_t \\
    & \leq \frac{1}{2\gamma} + \frac{dG^2}{2\kappa_T \eta} \ln\left(1 + 2D^2\gamma^2\Big(\max_t\|\x_tg_t\|_2^2\Big)\frac{T}{d}\right) + \sumT \gamma \kappa_t ((\yhatt - y_t(\u))g_t)^2 \\
    & \leq \frac{G^2}{2\eta} + \frac{dG^2}{2 \kappa_T \eta} \ln\left(1 + 2D^2\eta^2\Big(\max_t\|\x_t\|_2^2\Big)\frac{T}{d}\right) + \eta \sumT \kappa_t (\yhatt - y_t(\u))^2~,
\end{align*}
where we used the assumption that $g_t^2 \leq G^2$, which completes the proof. 
\end{proof}

\section{Details of Section~\ref{sec:statisticallearning} (Statistical Learning)}\label{app:statlearn}

To directly use our results obtained in the general online setting, we set the following notation for the rest of the section:
\begin{equation}
\label{eq:shortnotation}
\ell_t(\cdot) = \ell(\cdot, Y_t), \quad y_t(f) = f(X_t), \quad \ystart = f^{\star}(X_t) \quad\textrm{and}\quad \yhatt = \hat{f}_t(X_t)~,
\end{equation}
where $\hat{f}_t$ is a statistical estimator constructed based on $(X_1, Y_1), \ldots, (X_{t-1}, Y_{t -1})$.

Let $r_t = \ell_t(\yhatt) - \ell_t(\ystart)$. To prove our high-probability bounds, we are interested in controlling $\sumT\E_{t-1}[r_t]$. As we mentioned, the challenge in obtaining high-probability bounds comes from bounding $\E_{t-1}[r_t] - r_t$, which may be of order $\sqrt{T}$ in the worst case due to the variance of $r_t$. The negative term is our regret bounds is used to control the variance term.

The following two versions of Freedman's inequality for martingales appear explicitly in \citep[Theorem 1]{beygelzimer2011contextual} and \citep[Lemma 3]{rakhlin2011making}. We use them to prove Lemmas \ref{lem:statlearnLemma} and \ref{lem:statlearnLemmaUniform}, which in turn are used to prove Theorems \ref{th:expertbatch}, \ref{th:expertbatchnonUni}, and \ref{th:otbregression}. 

\begin{lemma}[Versions of Freedman's inequality]\label{lem:bernie}
Let $X_1, \ldots, X_T$ be a martingale
difference sequence adapted to a filtration $(\mathcal{G}_i)_{i \le T}$. That is, in particular, $\E_{t-1}[X_t] = 0$.
Suppose that $|X_t| \leq R$ almost surely. Then for any $\delta \in (0,1), \lambda \in [0, 1/R]$, with probability at least $1-\delta$, it holds that
\begin{equation}
\label{eq:firstfreedmanineq} 
    \sumT X_t \leq \lambda(e-2)\sumT \E_{t-1}[X_t^2] + \frac{\ln(1/\delta)}{\lambda}.
\end{equation}
Moreover, if $\delta \in (0, 1/2), T \ge 4$, then uniformly over all $s \le T$, with probability at least $1-\delta$, it holds that
\begin{equation}\label{eq:secondfreedmanineq}
\sum_{t=1}^{s} X_t \leq 4\sqrt{\sum_{t=1}^{s} \E_{t-1}[X_t^2]\log(\log (T)/\delta)} + 2R\log(\log (T)/\delta).
\end{equation}
\end{lemma}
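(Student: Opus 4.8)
The plan is to observe that Lemma~\ref{lem:bernie} simply collects two standard martingale tail bounds, both obtained by the exponential supermartingale method: the first is the classical Freedman/Bernstein-type bound and the second is its time-uniform (``anytime'') refinement. Rather than reproving them from scratch I would cite \citep[Theorem~1]{beygelzimer2011contextual} and \citep[Lemma~3]{rakhlin2011making} for the exact constants, while reconstructing the arguments as follows.

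For~\eqref{eq:firstfreedmanineq}, fix $\lambda \in [0, 1/R]$. Since $|\lambda X_t| \le 1$ almost surely, apply the elementary inequality $e^x \le 1 + x + (e-2)x^2$, valid for $|x| \le 1$, with $x = \lambda X_t$, and take $\E_{t-1}[\cdot]$ using $\E_{t-1}[X_t] = 0$:
\[
\E_{t-1}\big[e^{\lambda X_t}\big] \le 1 + (e-2)\lambda^2\,\E_{t-1}[X_t^2] \le \exp\!\big((e-2)\lambda^2\,\E_{t-1}[X_t^2]\big).
\]
Hence $M_s := \exp\!\big(\lambda\sum_{t=1}^s X_t - (e-2)\lambda^2\sum_{t=1}^s\E_{t-1}[X_t^2]\big)$ defines a nonnegative supermartingale with $\E[M_0] = 1$, so $\E[M_T] \le 1$, and Markov's inequality gives $\Pp(M_T \ge 1/\delta) \le \delta$. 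On the complementary event, taking logarithms and dividing by $\lambda$ yields~\eqref{eq:firstfreedmanineq}.

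For the time-uniform bound~\eqref{eq:secondfreedmanineq}, I would run a peeling (stratified union-bound) argument: apply Ville's maximal inequality to the supermartingale $M_s$ above at a geometric ladder $\lambda_j = 2^{-j}/R$, $j = 0, 1, \ldots, N$ with $N = O(\log T)$, each level failing with probability $\delta/N$, so that simultaneously for all $s \le T$ and all $j$ one has $\sum_{t \le s} X_t \le (e-2)\lambda_j V_s + \lambda_j^{-1}\log(N/\delta)$, where $V_s := \sum_{t\le s}\E_{t-1}[X_t^2]$. For any realized $s$ the unconstrained minimizer $\lambda^\star = \sqrt{\log(N/\delta)/((e-2)V_s)}$ is, by $V_s \le TR^2$, bounded below by $c/(R\sqrt{T})$ up to a constant, so it either lies within a factor $2$ of some ladder point $\lambda_j$, giving the $\sqrt{V_s\log(\log T/\delta)}$ term, or exceeds $\lambda_0 = 1/R$, in which case clipping to $\lambda_0$ produces the additive $R\log(\log T/\delta)$ term; the $\log\log T$ appearing inside the logarithm is exactly $\log N$. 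The only real obstacle is pinning down the displayed constants ($e-2$, $4$, $2$) and checking that $N = O(\log T)$ rungs suffice so that the union bound costs only $\log\log T$ — but this is precisely what the cited references establish, so I would invoke \citep[Theorem~1]{beygelzimer2011contextual} and \citep[Lemma~3]{rakhlin2011making} for those constants.
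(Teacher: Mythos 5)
Your proposal is correct and matches the paper, which does not prove Lemma~\ref{lem:bernie} at all but simply attributes the two bounds to \citet[Theorem~1]{beygelzimer2011contextual} and \citet[Lemma~3]{rakhlin2011making} — exactly the references you invoke. Your additional reconstruction (the $e^x \le 1 + x + (e-2)x^2$ supermartingale argument for \eqref{eq:firstfreedmanineq}, and Ville's inequality over a geometric ladder of $O(\log T)$ learning rates for \eqref{eq:secondfreedmanineq}) is the standard and correct route to both results, so it goes beyond what the paper itself provides.
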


\begin{lemma}\label{lem:statlearnLemma}
{Under the notation \eqref{eq:shortnotation}} suppose that $\max_t \max\{|\yhatt|,|\ystart|\} \leq \half M$ {almost surely} and that  $\max_t |\ell_t'(y)| \leq M$ {almost surely} for all $y$ such that $ |y| \leq \half M$. Suppose that $\yhat_1, \ldots, \yhat_{T}$ satisfy \eqref{eq:blackbox} with $\eta = \frac{\mu}{4}$. Then, with probability at least $1 - \delta$, it holds that
\begin{align*}
    \sumT \E_{t-1}[\ell_t(\yhatt) - \ell_t(\ystart)] \leq \frac{8C_T}{\mu} + B_T + \ln(1/\delta)\min \left\{\frac{1}{(2 + \frac{\mu}{2}) M^2}, \frac{\mu}{6M^2} \left(1 + \frac{\mu^2}{16}\right)^{-1}\right\}^{-1}.
\end{align*}
\end{lemma}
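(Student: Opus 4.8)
The plan is to combine the linearized regret bound~\eqref{eq:blackbox} with strong convexity~\eqref{eq:strongconvexity} and then control the gap between $\sum_t r_t$ and $\sum_t \E_{t-1}[r_t]$ via the first Freedman-type inequality~\eqref{eq:firstfreedmanineq}, using the negative variance term to absorb the conditional-variance term that Freedman produces. First I would write $r_t = \ell_t(\yhatt) - \ell_t(\ystart)$ and apply~\eqref{eq:strongconvexity} with $y = \yhatt$, $x = \ystart$ to get $r_t \le (\yhatt - \ystart)\ell_t'(\yhatt) - \tfrac{\mu}{2}(\yhatt - \ystart)^2$; summing over $t$ and invoking~\eqref{eq:blackbox} with the choice $\eta = \mu/4$ yields
\[
\sumT r_t \le \frac{C_T}{\eta} + B_T - \Big(\frac{\mu}{2} - \eta\Big)\sumT (\yhatt - \ystart)^2 = \frac{4C_T}{\mu} + B_T - \frac{\mu}{4}\sumT (\yhatt - \ystart)^2,
\]
i.e. precisely Lemma~\ref{lem:moVarlessR} specialized to $\eta = \mu/4$, where we keep the full negative $\tfrac{\mu}{4}\sum_t(\yhatt-\ystart)^2$ at hand.

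Next I would set $X_t = \E_{t-1}[r_t] - r_t$, which is a martingale difference sequence adapted to the filtration generated by $(X_s, Y_s)_{s\le t}$ (note $\yhatt$ and $\ystart$ are $\mathcal{G}_{t-1}$-measurable). The boundedness $|X_t| \le R$ follows from $|\yhatt|,|\ystart| \le \tfrac12 M$ together with $|\ell_t'| \le M$ on that range: indeed $|r_t| = |\ell_t(\yhatt) - \ell_t(\ystart)| \le M|\yhatt - \ystart| \le M^2$, so $|X_t| \le 2M^2$ works, though a slightly sharper $R$ can be extracted and will feed into the constants. For the second moment I would bound $\E_{t-1}[X_t^2] \le \E_{t-1}[r_t^2] \le M^2\, \E_{t-1}[(\yhatt - \ystart)^2] = M^2 (\yhatt-\ystart)^2$ since $\yhatt,\ystart$ are $\mathcal{G}_{t-1}$-measurable. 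Applying~\eqref{eq:firstfreedmanineq} with parameter $\lambda$ then gives, with probability $\ge 1-\delta$,
\[
\sumT \E_{t-1}[r_t] \le \sumT r_t + \lambda(e-2) M^2 \sumT (\yhatt - \ystart)^2 + \frac{\ln(1/\delta)}{\lambda}.
\]

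Finally I would add the two displays: the negative term $-\tfrac{\mu}{4}\sum_t(\yhatt-\ystart)^2$ from the first meets the positive $\lambda(e-2)M^2\sum_t(\yhatt-\ystart)^2$ from the second, and I choose $\lambda$ so that $\lambda(e-2)M^2 \le \tfrac{\mu}{4}$, e.g. $\lambda = \tfrac{\mu}{4(e-2)M^2}$ (one checks this satisfies the constraint $\lambda \le 1/R$ given $R = 2M^2$, since $\mu \le 2$), making the combined variance contribution nonpositive. This leaves
\[
\sumT \E_{t-1}[r_t] \le \frac{4C_T}{\mu} + B_T + \frac{\ln(1/\delta)}{\lambda},
\]
and $\tfrac{1}{\lambda}$ is exactly the kind of $\Theta(M^2/\mu)$ quantity appearing in the claimed $\min\{\cdots\}^{-1}$ factor — the two arguments inside the $\min$ correspond to the two admissible choices of $\lambda$ (one from matching the negative term, one from the constraint $\lambda \le 1/R$), and the statement simply takes whichever gives the better bound. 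The main obstacle is bookkeeping the constants: one must verify the chosen $\lambda$ lies in $[0, 1/R]$ and track the exact numerical form so that $1/\lambda$ matches $\min\{(2+\mu/2)M^2,\ \tfrac{\mu}{6M^2}(1+\mu^2/16)^{-1}\}^{-1}$, which likely comes from using a marginally different (e.g. non-centered second-moment or a tighter $R$) estimate than the crude one sketched here; the structural argument, however, is exactly the "negative variance cancels Freedman's variance" mechanism advertised in Section~\ref{sec:statisticallearning}.
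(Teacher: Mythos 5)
Your overall strategy --- combine the linearized regret bound with strong convexity, then use Freedman's inequality and let the negative variance term absorb Freedman's variance --- is the same mechanism the paper uses. But there is a genuine gap in the middle step. You assert that $\yhatt$ and $\ystart$ are $\mathcal{G}_{t-1}$-measurable and conclude $\E_{t-1}[(\yhatt-\ystart)^2] = (\yhatt-\ystart)^2$. In the statistical learning setting of \eqref{eq:shortnotation} this is false: $\yhatt = \hat f_t(X_t)$ and $\ystart = f^\star(X_t)$ both depend on the \emph{fresh} sample $X_t$, which is not in $\mathcal{G}_{t-1}$ (only the estimator $\hat f_t$ is $\mathcal{G}_{t-1}$-measurable --- indeed, this is the whole point, since $\E_{t-1}[r_t] = R(\hat f_t) - R(f^\star)$ must be an integral over the unseen $(X_t,Y_t)$). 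Consequently your Freedman application produces a positive term proportional to the \emph{conditional} variance $\sum_t \E_{t-1}[(\yhatt-\ystart)^2]$, while the regret bound only hands you a negative multiple of the \emph{empirical} variance $\sum_t (\yhatt-\ystart)^2$; these two sums do not cancel against each other, and your final display does not follow.

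The paper's proof closes exactly this gap by applying Freedman not to $X_t = \E_{t-1}[r_t] - r_t$ but to the shifted martingale difference $X_t = \E_{t-1}\big[r_t + \tfrac{\mu}{4}v_t\big] - \big(r_t + \tfrac{\mu}{4}v_t\big)$ with $v_t = (\yhatt-\ystart)^2$. Writing $-\tfrac{\mu}{4}v_t = -\tfrac{\mu}{4}\E_{t-1}[v_t] + \tfrac{\mu}{4}(\E_{t-1}[v_t]-v_t)$ converts the empirical negative variance from Lemma~\ref{lem:moVarlessR} into a conditional one, which can then legitimately absorb Freedman's variance term $\lambda(e-2)\sum_t\E_{t-1}[X_t^2] \le 2\lambda(e-2)\big(M^2+\tfrac{\mu^2M^2}{16}\big)\sum_t\E_{t-1}[v_t]$; the two entries of the $\min$ in the constant then arise from the cancellation condition on $\lambda$ and from the range constraint $\lambda \le 1/R$ with $R = M^2+\tfrac{\mu}{4}M^2$, much as you anticipated. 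To repair your argument you should adopt this shifted martingale; the rest of your bookkeeping (strong convexity, the $\eta=\mu/4$ specialization, the boundedness of $r_t$) is sound.
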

\begin{proof}
Let $v_t = (\yhatt - \ystart)^2$ and let $r_t = \ell_t(\yhatt) - \ell_t(\ystart)$. By convexity and the assumptions on $\ystart, \yhatt,$ and $\ell_t'$ we have that $|\ell_t(\yhatt) - \ell_t(\ystart)| \leq |\yhatt - \ystart|M \leq  M^2$. This implies that 
\begin{align*}
    \left|r_t - \frac{\mu}{4}v_t\right| \leq |r_t| + \frac{\mu}{4}v_t \leq M^2 + \frac{\mu}{4} M^2.
\end{align*}
Thus, by equation \eqref{eq:firstfreedmanineq} we have that, for $\lambda \in \Big[0, \frac{1}{2\left(M^2 + \frac{\mu}{4} M^2\right)}\Big]$, with probability at least $1 - \delta$,
\begin{align*}
    & \sumT \E_{t-1}\left[r_t + \frac{\mu}{4}v_t\right] - \frac{\mu}{4} v_t - r_t \\
    &\qquad \leq  \lambda(e-2)\sumT \E_{t-1}\left[\left(\E_{t-1}\left[r_t + \frac{\mu}{4}v_t\right] - \frac{\mu}{4} v_t - r_t\right)^2\right] + \frac{\ln(1/\delta)}{\lambda}\\
    &\qquad \leq  \lambda(e-2)\sumT \E_{t-1}\left[\left(\frac{\mu}{4} v_t + r_t\right)^2\right] + \frac{\ln(1/\delta)}{\lambda}~. 
\end{align*}
where we used that $\E[(X - \E[X])^2] \leq \E[X^2]$.  Since $r_t^2 \leq (\ystart - \yhatt)^2 M^2$ we have that 
\begin{align*}
    \E_{t-1}\left[\left(r_t + \frac{\mu}{4}v_t\right)^2\right] &\leq 2 \E_{t-1}[r_t^2] + \frac{\mu^2}{{8}}\E[v_t^2] \\
    & \leq \E_{t-1}\left[2(\ystart - \yhatt)^2 M^2 + \frac{\mu^2M^2}{8}(\ystart - \yhatt)^2\right] 
    \\
    &= 2\left(M^2 + \frac{\mu^2M^2}{16}\right)\E_{t-1}\left[v_t\right].
\end{align*}
which means that, with probability at least $1 - \delta$,
\begin{equation} \label{eq:statlearnvarbernie}
\begin{split}
    \sumT & \E_{t-1}\left[r_t + \frac{\mu}{4}v_t\right] - \frac{\mu}{4} v_t - r_t \\
     & \leq \lambda(e-2)\sumT 2\left(M^2 + \frac{\mu^2M^2}{16}\right)\E_{t-1}\left[v_t\right] + \frac{\ln(1/\delta)}{\lambda} \\
     & \leq \lambda \sumT \frac{3}{2}\left(M^2 + \frac{\mu^2M^2}{16}\right)\E_{t-1}\left[v_t\right] + \frac{\ln(1/\delta)}{\lambda}.
\end{split}
\end{equation}
Using the guarantee on $\regret_T$ in equation~\eqref{eq:blackbox}, Lemma~\ref{lem:moVarlessR}, and replacing $\eta$ with $\frac{\mu}{4}$ we have that 
\begin{align*}
    & \sumT \E_{t-1}[r_t] \\
    & =  \regret_T +  \sumT (\E_{t-1}[r_t] - r_t) \\
    & \leq  \frac{C_T}{\eta} + \left(\eta - \frac{\mu}{2}\right) \sumT v_t + B_T + \sumT (\E_{t-1}[r_t] - r_t) \\
    & =  \frac{C_T}{\eta} + \left(\eta - \frac{\mu}{4}\right) \sumT v_t + B_T - \frac{\mu}{4}\sumT\E_{t-1}[v_t] + \sumT \left(\E_{t-1}[r_t] - r_t + \frac{\mu}{4}(\E_{t-1}[v_t] - v_t)\right) \\
    & = \frac{8C_T}{\mu} + B_T  - \frac{\mu}{4}\sumT\E_{t-1}[v_t] + \sumT \left(\E_{t-1}[r_t] - r_t + \frac{\mu}{4}(\E_{t-1}[v_t] - v_t)\right).
\end{align*}
Thus, by equation \eqref{eq:statlearnvarbernie} we have that, with probability at least $1 - \delta$,
\begin{align*}
    \sumT \E_{t-1}[r_t] \leq & \frac{8C_T}{\mu} + B_T - \frac{\mu}{4}\sumT\E_{t-1}[v_t] + \lambda \sumT \frac{3}{2}\left(M^2 + \frac{\mu^2M^2}{16}\right)\E_{t-1}\left[v_t\right] + \frac{\ln(1/\delta)}{\lambda}~.
\end{align*}
Thus, setting $\lambda = \min\left\{\frac{1}{2\left(M^2 + \frac{\mu}{4} M^2\right)}, \frac{\mu}{6} \left(M^2 + \frac{\mu^2M^2}{16}\right)^{-1}\right\}$ completes the proof.  
\end{proof}

\begin{lemma}\label{lem:statlearnLemmaUniform}
{Under the notation \eqref{eq:shortnotation}} suppose that $\max_t \max\{|\yhatt|,|\ystart|\} \leq \half M$ {almost surely} and that  $\max_t |\ell_t'(y)| \leq M$ {almost surely} for all $y$ such that $ |y| \leq \half M$. Suppose that $\yhat_1, \ldots, \yhat_{S}$ satisfy \eqref{eq:blackbox} with $\eta = \frac{\mu}{8}$. Then, for $\delta \in (0, \half)$, $T \geq 4$, and uniformly over all $S \leq T$, with probability at least $1 - \delta$, it holds that
\begin{align*}
    \sum_{t = 1}^S \E_{t-1}[\ell_t(\yhatt) - \ell_t(\ystart)] \leq \frac{8C_T}{\mu} + B_T -\frac{\mu}{8} \sum_{t=1}^S (\yhatt - \ystart)^2 + M^2\left(\frac{32}{\mu} + 3\mu + 4\right)\log\left(\frac{\log T}{\delta}\right).
\end{align*}
\end{lemma}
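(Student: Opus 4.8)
\emph{Proof proposal.} The plan is to follow the proof of Lemma~\ref{lem:statlearnLemma} essentially verbatim, the only substantive change being that the non-uniform Freedman inequality \eqref{eq:firstfreedmanineq} is replaced by its uniform counterpart \eqref{eq:secondfreedmanineq}; since the latter produces a \emph{square-root} variance term rather than a linear one, an AM--GM step has to be inserted to trade it against the negative quadratic budget. Throughout, write $v_t = (\yhatt - \ystart)^2$, $r_t = \ell_t(\yhatt) - \ell_t(\ystart)$ and $W = \log(\log(T)/\delta)$. Exactly as in Lemma~\ref{lem:statlearnLemma}, convexity of $\ell_t$ together with $|\ell_t'(y)| \le M$ and $\max\{|\yhatt|,|\ystart|\} \le \tfrac12 M$ gives $|r_t| \le |\yhatt - \ystart|\,M \le M^2$ and $v_t \le M^2$ almost surely.

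First I would introduce the bounded martingale difference sequence
\[
X_t = \big(\E_{t-1}[r_t] - r_t\big) + \tfrac{\mu}{4}\big(\E_{t-1}[v_t] - v_t\big),
\qquad |X_t| \le R := 2M^2 + \tfrac{\mu M^2}{2}.
\]
Applying Lemma~\ref{lem:moVarlessR} to the first $S$ rounds with $\eta = \tfrac{\mu}{8}$ gives $\sum_{t=1}^S r_t \le \tfrac{8C_T}{\mu} + B_T - \tfrac{3\mu}{8}\sum_{t=1}^S v_t$. Rewriting $\sum_{t=1}^S(\E_{t-1}[r_t]-r_t) = \sum_{t=1}^S X_t - \tfrac{\mu}{4}\sum_{t=1}^S\E_{t-1}[v_t] + \tfrac{\mu}{4}\sum_{t=1}^S v_t$ and using $\tfrac14 - \tfrac38 = -\tfrac18$, I obtain
\[
\sum_{t=1}^S \E_{t-1}[r_t] \;\le\; \frac{8C_T}{\mu} + B_T - \frac{\mu}{8}\sum_{t=1}^S v_t - \frac{\mu}{4}\sum_{t=1}^S \E_{t-1}[v_t] + \sum_{t=1}^S X_t .
\]
The conditional second moment is controlled precisely as in Lemma~\ref{lem:statlearnLemma}: since $r_t^2 \le v_t M^2$ and $v_t^2 \le v_t M^2$,
\[
\E_{t-1}[X_t^2] \le \E_{t-1}\big[(r_t + \tfrac{\mu}{4} v_t)^2\big] \le 2\E_{t-1}[r_t^2] + \tfrac{\mu^2}{8}\E_{t-1}[v_t^2] \le a\,\E_{t-1}[v_t], \quad a := 2\big(M^2 + \tfrac{\mu^2 M^2}{16}\big).
\]

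Next I would invoke the uniform Freedman bound \eqref{eq:secondfreedmanineq} (valid since $\delta \in (0,\tfrac12)$ and $T \ge 4$): with probability at least $1-\delta$, simultaneously for all $S \le T$,
\[
\sum_{t=1}^S X_t \;\le\; 4\sqrt{\,a\Big(\sum\nolimits_{t=1}^S \E_{t-1}[v_t]\Big)W}\; + 2RW \;\le\; \frac{\mu}{4}\sum_{t=1}^S \E_{t-1}[v_t] + \frac{16 a}{\mu}W + 2RW ,
\]
where the last step is $2\sqrt{xy}\le x+y$ with $x = \tfrac{\mu}{4}\sum_{t\le S}\E_{t-1}[v_t]$ and $y = \tfrac{16a}{\mu}W$. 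The $\tfrac{\mu}{4}\sum\E_{t-1}[v_t]$ term cancels the matching negative term above, leaving
\[
\sum_{t=1}^S \E_{t-1}[r_t] \;\le\; \frac{8C_T}{\mu} + B_T - \frac{\mu}{8}\sum_{t=1}^S v_t + \Big(\frac{16a}{\mu} + 2R\Big)W ,
\]
and a direct calculation gives $\tfrac{16a}{\mu} + 2R = \tfrac{32M^2}{\mu} + 2\mu M^2 + 4M^2 + \mu M^2 = M^2\big(\tfrac{32}{\mu} + 3\mu + 4\big)$, which is exactly the claimed bound.

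The only genuinely new point over Lemma~\ref{lem:statlearnLemma} — and the part I would be most careful with — is the bookkeeping of the negative quadratic budget. The coefficient $\tfrac{\mu}{4}$ in $X_t$ must be chosen so that (i) it produces precisely $-\tfrac{\mu}{4}\sum\E_{t-1}[v_t]$, which is exactly enough to absorb the square-root term via AM--GM at the favourable rate $\tfrac{16a}{\mu}W$, while (ii) since $\eta = \tfrac{\mu}{8} < \tfrac{\mu}{4}$, the residual $(\eta - \tfrac{\mu}{4})\sum v_t = -\tfrac{\mu}{8}\sum v_t$ survives into the final bound — and this surviving negative term is precisely what the early-stopping analysis (Theorem~\ref{prop:expertearlystopbigO}, whose stopping rule compares against $\tfrac{\mu}{8}\min_f\sum(\yhatt - f(X_t))^2$) relies on. All remaining estimates are identical to those in the proof of Lemma~\ref{lem:statlearnLemma}.
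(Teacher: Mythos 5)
Your proposal is correct and follows essentially the same route as the paper's proof: the same martingale difference $X_t$ combining $r_t$ and $\tfrac{\mu}{4}v_t$, the same second-moment bound $\E_{t-1}[X_t^2]\le 2(M^2+\tfrac{\mu^2M^2}{16})\E_{t-1}[v_t]$, the uniform Freedman inequality \eqref{eq:secondfreedmanineq}, and an AM--GM step with weight $\tfrac{\mu}{4}$ to absorb the square-root term into the negative conditional-variance budget, yielding the identical constant $M^2(\tfrac{32}{\mu}+3\mu+4)$. No gaps.
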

\begin{proof}
Let $v_t = (\yhatt - \ystart)^2$ and $r_t = \ell_t(\yhatt) - \ell_t(\ystart)$. By convexity and the assumptions on $\ystart, \yhatt,$ and $\ell_t'$ we have that $|\ell_t(\yhatt) - \ell_t(\ystart)| \leq |\yhatt - \ystart|M \leq  M^2$. This implies that 
\begin{align*}
    \left|r_t - \frac{\mu}{4}v_t\right| \leq |r_t| + \frac{\mu}{4}v_t \leq M^2 + \frac{\mu}{4} M^2.
\end{align*}
Thus, by equation~\eqref{eq:secondfreedmanineq} we have that, with probability at least $1 - \delta$
\begin{align*}
    & \sum_{t = 1}^s \E_{t-1}\left[r_t + \frac{\mu}{4}v_t\right] - \frac{\mu}{4} v_t - r_t  \\
    & \leq 4\sqrt{\sum_{t=1}^{s} \E_{t-1}\left[\left(\frac{\mu}{4} v_t + r_t\right)^2\right]\log(\log (T)/\delta)} +  M^2\left(4 + \mu\right) \log(\log (T)/\delta). 
\end{align*}
where we used that $\E[(X - \E[X])^2] \leq \E[X^2]$.  Since $r_t^2 \leq (\ystart - \yhatt)^2 M^2$ we have that 
\begin{align*}
    \E_{t-1}&\left[\left(r_t + \frac{\mu}{4}v_t\right)^2\right] \leq 2 \E_{t-1}[r_t^2] + \frac{2\mu^2}{16}\E[v_t^2] \\
    & \leq \E_{t-1}\left[2(\ystart - \yhatt)^2 M^2 + \frac{2\mu^2M^2}{16}(\ystart - \yhatt)^2\right] = 2\left(M^2 + \frac{\mu^2M^2}{16}\right)\E_{t-1}\left[v_t\right].
\end{align*}
which means that with probability at least $1 - \delta$
\begin{equation} \label{eq:statlearnvarbernie2}
\begin{split}
    & \sum_{t = 1}^s \E_{t-1}\left[r_t + \frac{\mu}{4}v_t\right] - \frac{\mu}{4} v_t - r_t \\
    & \leq  4\sqrt{\sum_{t=1}^{s} 2\left(M^2 + \frac{\mu^2M^2}{16}\right)\E_{t-1}\left[v_t\right]\log(\log (T)/\delta)} + 2R\log(\log (T)/\delta) \\
    & \leq \frac{\lambda}{2} \sum_{t=1}^{s} \E_{t-1}\left[v_t\right] + \frac{16}{\lambda}\left(M^2 + \frac{\mu^2M^2}{16}\right)\log(\log (T)/\delta) + M^2\left(4 + \mu\right)\log(\log (T)/\delta)
\end{split}
\end{equation}
for any $\lambda > 0$, where in the final inequality we used $\sqrt{ab} = \half \inf_{\eta > 0} \eta a + \frac{b}{\eta}$ for $a, b \geq 0$.
Using the guarantee on $\regret_T$ in equation~\eqref{eq:blackbox}, Lemma~\ref{lem:moVarlessR}, and replacing $\eta$ with $\frac{\mu}{8}$ we have that 
\begin{align*}
    \sum_{t=1}^S \E_{t-1}[r_t] = & \regret_T +  \sum_{t=1}^S (\E_{t-1}[r_t] - r_t) \\
    \leq & \frac{C_T}{\eta} + \left(\eta - \frac{\mu}{2}\right) \sum_{t=1}^S v_t + B_T + \sum_{t=1}^S (\E_{t-1}[r_t] - r_t) \\
    = & \frac{C_T}{\eta} + \left(\eta - \frac{\mu}{4}\right) \sum_{t=1}^S v_t + B_T - \frac{\mu}{4}\sum_{t=1}^S\E_{t-1}[v_t] 
    \\
    &\qquad+ \sum_{t=1}^S\left(\E_{t-1}[r_t] - r_t + \frac{\mu}{4}(\E_{t-1}[v_t] - v_t)\right) \\
    = & \frac{8C_T}{\mu} + B_T  -\frac{\mu}{8} \sum_{t=1}^S v_t - \frac{\mu}{4}\sum_{t=1}^S\E_{t-1}[v_t] \\
    &\qquad+ \sum_{t=1}^S \left(\E_{t-1}[r_t] - r_t + \frac{\mu}{4}(\E_{t-1}[v_t] - v_t)\right).
\end{align*}
Thus, by~\eqref{eq:statlearnvarbernie2} we have that, with probability at least $1 - \delta$,
\begin{align*}
    \sum_{t=1}^S \E_{t-1}[r_t] \leq & \frac{8C_T}{\mu} + B_T -\frac{\mu}{8} \sum_{t=1}^S v_t - \frac{\mu}{4}\sum_{t=1}^S\E_{t-1}[v_t] + \lambda \half \sum_{t=1}^{s} \E_{t-1}\left[v_t\right]  \\
    & + \frac{16}{\lambda}\left(M^2 + \frac{\mu^2M^2}{16}\right)\log(\log (T)/\delta) + M^2\left(4 + \mu\right)\log(\log (T)/\delta).
\end{align*}
Thus, setting $\lambda = \frac{\mu}{2}$ gives us 
\begin{align*}
    \sum_{t=1}^S \E_{t-1}[r_t] \leq & \frac{8C_T}{\mu} + B_T -\frac{\mu}{8} \sum_{t=1}^S v_t + M^2\left(\frac{32}{\mu} + 3\mu + 4\right)\log(\log (T)/\delta)~,
\end{align*}
which completes the proof. 
\end{proof}

The following theorem is the detailed statement of Theorem \ref{prop:expertearlystopbigO} in the main body of the paper.

\begin{theorem}\label{th:expertbatch}
Suppose that for all $f \in \Fset$ $|f(X)| \leq \half M$ almost surely, that $|\partial_y \ell(y, Y)| \leq M$ almost surely for all $y$ such that $|y| \leq \half M$, and that $\ell$ is $\mu$-strongly convex in its first argument. Then, with probability at least $1 - \delta$, Algorithm~\ref{alg:expertsonlinetobatch} with input parameters $T$, $\Sset = \frac{8M^2\ln(K)}{\mu} + M^2\left(\frac{32}{\mu} + 3\mu + 4\right)\log(\log (T)/\delta)$, $\eta = \frac{\mu}{8}$, and $M$ guarantees
\[
    \Lset(\fhat) - \min_{f \in \Fset} \Lset(f)
\le
    \left\{ \begin{array}{cl}
        0 & \text{if $S < T$,}
    \\
        {\displaystyle \frac{M^2}{\mu T} \Big(8\ln(K) + (32 + 3\mu^2 + 4\mu) \log(\log (T)/\delta)\Big) } & \text{if $S=T$.}
    \end{array} \right.
\]
\end{theorem}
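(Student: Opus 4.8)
The plan is to reduce the excess risk of $\fhat$ to the expected cumulative instantaneous regret $\sum_{t=1}^S\E_{t-1}[\ell_t(\yhatt)-\ell_t(\ystart)]$ of the inner Algorithm~\ref{alg:PwEA}, and then control that quantity uniformly over the (random) stopping time $S$ via Lemma~\ref{lem:statlearnLemmaUniform}. First I would set $f^\star=\argmin_{f\in\Fset}\Lset(f)$ and $\ystart=f^\star(X_t)$, and note that $\fhat=\frac1S\sum_{t=1}^S\sum_i p_t(i)f_i$ is, on each realization, a convex combination of the $f_i$. Hence by convexity of $\ell$ in its first argument and Jensen's inequality, $\Lset(\fhat)\le\frac1S\sum_{t=1}^S\E_{(X,Y)}\big[\ell\big(\sum_i p_t(i)f_i(X),Y\big)\big]$; since $\p_t$ is determined by $(X_1,Y_1),\dots,(X_{t-1},Y_{t-1})$ and $(X,Y)$ is a fresh i.i.d.\ copy, each summand equals $\E_{t-1}[\ell_t(\yhatt)]$, while $\E_{t-1}[\ell_t(\ystart)]=\Lset(f^\star)$ for every $t$. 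This gives, as an inequality between random variables,
\[
\Lset(\fhat)-\min_{f\in\Fset}\Lset(f)\le\frac1S\sum_{t=1}^S\E_{t-1}\big[\ell_t(\yhatt)-\ell_t(\ystart)\big].
\]
(The while loop executes at least once because $\Sset>0$, so $S\ge1$ and $\fhat$ is well defined.)

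Next I would verify the hypotheses needed to apply our lemmas. Algorithm~\ref{alg:PwEA} is run with input $M$, $\eta=\frac{\mu}{8}\in(0,\half]$ (using $\mu\le2$), $\kappa_t=1$, and feedback $g_t=\ell_t'(\yhatt)$; the boundedness assumptions give $|\yhatt|,|\ystart|\le\half M$, $|g_t|\le M$, and $\max_i|y_t(i)-\ystart|\le M$. Therefore Lemma~\ref{th:simplePwEAregret} shows the predictions satisfy the second-order bound \eqref{eq:blackbox} with $C_S=M^2\ln K$ and $B_S=0$, for every number of rounds $S$. Feeding this into Lemma~\ref{lem:statlearnLemmaUniform}, whose proof already uses the negative variance term of Lemma~\ref{lem:moVarlessR} to absorb the variance produced by the uniform Freedman inequality \eqref{eq:secondfreedmanineq}, yields that for $\delta\in(0,\half)$, $T\ge4$, and simultaneously over all $S\le T$, with probability at least $1-\delta$,
\[
\sum_{t=1}^S\E_{t-1}\big[\ell_t(\yhatt)-\ell_t(\ystart)\big]\le\frac{8M^2\ln K}{\mu}+M^2\Big(\tfrac{32}{\mu}+3\mu+4\Big)\log\tfrac{\log T}{\delta}-\frac{\mu}{8}\sum_{t=1}^S(\yhatt-\ystart)^2.
\]
The first two terms are exactly the stopping threshold $\Sset$, and $\sum_{t=1}^S(\yhatt-\ystart)^2\ge\min_{f\in\Fset}\sum_{t=1}^S(\yhatt-f(X_t))^2$, so on this event
\[
\sum_{t=1}^S\E_{t-1}\big[\ell_t(\yhatt)-\ell_t(\ystart)\big]\le\Sset-\frac{\mu}{8}\min_{f\in\Fset}\sum_{t=1}^S(\yhatt-f(X_t))^2.
\]

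Finally I would split on why the loop stopped. If $S<T$, the loop exited because the while condition failed, i.e.\ $\Sset\le\frac{\mu}{8}\min_{f\in\Fset}\sum_{t=1}^S(\yhatt-f(X_t))^2$, so the right-hand side above is $\le0$ and $\Lset(\fhat)\le\min_{f\in\Fset}\Lset(f)$. If $S=T$, I would drop the nonpositive variance term to get $\sum_{t=1}^T\E_{t-1}[\ell_t(\yhatt)-\ell_t(\ystart)]\le\Sset$ and divide by $T$, recovering $\Lset(\fhat)-\min_f\Lset(f)\le\frac{M^2}{\mu T}\big(8\ln K+(32+3\mu^2+4\mu)\log(\log T/\delta)\big)$. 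The only genuinely delicate point is that $S$ is a data-dependent stopping time correlated with the trajectory, so neither an in-expectation online-to-batch bound nor a fixed-horizon Freedman bound would suffice; this is precisely why the uniform-over-$S$ concentration of Lemma~\ref{lem:statlearnLemmaUniform} is needed (costing only the extra $\log\log T$ factor), and why the negative variance term must survive into the bound, since it is exactly what turns the stopping rule into the $S<T$ guarantee.
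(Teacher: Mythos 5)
Your proposal is correct and follows essentially the same route as the paper's proof: Jensen's inequality to reduce the excess risk to the average conditional instantaneous regret, Lemma~\ref{th:simplePwEAregret} to supply the second-order bound with $C_T = M^2\ln K$ and $B_T=0$, Lemma~\ref{lem:statlearnLemmaUniform} for the concentration uniform over the stopping time $S$, and the same case split on whether the while condition triggered. The only (welcome) addition is that you make explicit why a fixed-horizon Freedman bound would not suffice for the data-dependent stopping time, which the paper leaves implicit.
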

\begin{proof}
Convexity of $\Lset$ gives us
\begin{align*}
    \Lset(\hat{f}) \leq \frac{1}{S}\sumTprime\Lset\left(\sumK p_t(i)f_i\right) = \frac{1}{S}\sumTprime\E_{t-1}\left[\ell_t\left(\sumK p_t(i)f_i(X_t)\right)\right]=  \frac{1}{S}\sumTprime \E_{t-1}[\ell_t(\yhatt)].
\end{align*}
Now, for any fixed $f \in \Fset$, by Lemma~\ref{lem:statlearnLemmaUniform} and Lemma~\ref{th:simplePwEAregret} we have that, with probability $1 - \delta$, simultaneously for all $S \le T$,
\begin{equation}\label{eq:proofexpertbatch}
\begin{split}
    \Lset(\hat{f}) - \Lset(f) \leq & \frac{1}{S} \sumTprime \E_{t-1}[\ell_t(\yhatt) - \ell_t(y_t(i))] \\
    \leq & \frac{\frac{8M^2\ln(K)}{\mu}  -\frac{\mu}{8} \sum_{t=1}^S (\yhatt - f(X_t))^2 + M^2\left(\frac{32}{\mu} + 3\mu + 4\right)\log(\log (T)/\delta)}{S}.
\end{split}
\end{equation}
We split the remainder of the proof into two cases. Either $S < T$ or $S = T$. If $S < T$ then $\Sset = \frac{8M^2\ln(K)}{\mu} + M^2\left(\frac{32}{\mu} + 3\mu + 4\right)\log(\log (T)/\delta) \leq \frac{\mu}{8}\min_{i}\sumTprime(\yhatt - y_t(i))$ and thus
\begin{align*}
    \Lset(\fhat) - \Lset(f) \leq 0.
\end{align*}
To complete the proof, observe that if $S = T$, then from~\eqref{eq:proofexpertbatch} we have that  
\begin{align*}
    \Lset(\fhat) - \Lset(f) \leq & \frac{\frac{8M^2\ln(K)}{\mu} + M^2\left(\frac{32}{\mu} + 3\mu + 4\right)\log(\log (T)/\delta)}{T}.
\end{align*}
The claim follows.
\end{proof}

The following result is a detailed version of Theorem~\ref{prop:expertbatchnonUni} in the main text. 

\begin{theorem}\label{th:expertbatchnonUni}
Suppose that for all $f \in \Fset$ $|f(X)| \leq \half M$ almost surely, that $|\partial_y \ell(y, Y)| \leq M$ almost surely for all $y$ such that $|y| \leq \half M$, and that $\ell$ is $\mu$-strongly convex in its first argument. Then, with probability at least $1 - \delta$, Algorithm~\ref{alg:expertsonlinetobatch} with input parameters $T$, $\eta = \frac{\mu}{4}$, $\Sset = \infty$, and $M$,
\begin{align*}
    & \Lset\left(\fhat\right) - \min_{f \in \Fset}\Lset(f) \\
    & \leq \frac{1}{T}\left(\frac{4 M^2\ln(K)}{\mu} + \ln(1/\delta)\min \left\{\frac{1}{(2 + \frac{\mu}{2}) M^2}, \frac{\mu}{6M^2} \left(1 + \frac{\mu^2}{16}\right)^{-1}\right\}^{-1}\right).
\end{align*}
\end{theorem}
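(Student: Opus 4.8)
The plan is a short online-to-batch argument: feed the regret guarantee of Algorithm~\ref{alg:PwEA} from Lemma~\ref{th:simplePwEAregret} into the non-uniform Freedman-type bound of Lemma~\ref{lem:statlearnLemma}. First I would reduce the excess risk to an online quantity. Since $\Sset=\infty$, the while-loop of Algorithm~\ref{alg:expertsonlinetobatch} never exits early, so it runs for exactly $S=T$ rounds, passes the expert predictions $f_i(X_t)$, the gradients $g_t=\ell_t'(\yhatt)$ and $\kappa_t=1$ to Algorithm~\ref{alg:PwEA}, and outputs $\fhat=\frac1T\sumT\sumK p_t(i)f_i$. Convexity of $\ell$ in its first argument makes $\Lset$ convex, so Jensen's inequality gives $\Lset(\fhat)\le\frac1T\sumT\Lset\big(\sumK p_t(i)f_i\big)$. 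The key observation is that $\p_t$ is measurable with respect to $(X_s,Y_s)_{s<t}$ and $(X_t,Y_t)$ is independent of this past, so $\Lset\big(\sumK p_t(i)f_i\big)=\E_{t-1}[\ell_t(\yhatt)]$; hence for any fixed $f\in\Fset$ we get $\Lset(\fhat)-\Lset(f)\le\frac1T\sumT\E_{t-1}[\ell_t(\yhatt)-\ell_t(f(X_t))]$, and it suffices to bound the right-hand side for $f=\argmin_{f\in\Fset}\Lset(f)$, with no union bound over $\Fset$.

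Second I would verify the second-order bound \eqref{eq:blackbox}. From $|f_i(X)|\le\tfrac M2$ we get $|\yhatt|\le\tfrac M2$, so $|g_t|=|\ell_t'(\yhatt)|\le M$ and $\max_i|f_i(X_t)-f(X_t)|\le M$; moreover $\eta=\tfrac\mu4\le\tfrac12$ since $\mu\le2$, which is the admissible range for Algorithm~\ref{alg:PwEA}. Lemma~\ref{th:simplePwEAregret} (with $\kappa_t\equiv1$ and $i^\star$ the index of $f$) then gives $\sumT(\yhatt-f(X_t))\ell_t'(\yhatt)\le\frac{M^2\ln K}{\eta}+\eta\sumT(\yhatt-f(X_t))^2$, i.e.\ \eqref{eq:blackbox} holds at the comparator $\ystart=f(X_t)$ with $C_T=M^2\ln K$ and $B_T=0$.

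Finally I would apply Lemma~\ref{lem:statlearnLemma}: its hypotheses (boundedness of $\yhatt,\ystart$ and of $\ell_t'$ on $[-\tfrac M2,\tfrac M2]$, $\mu$-strong convexity, and the second-order bound with $\eta=\tfrac\mu4$) are exactly what the first two steps supply, so with probability at least $1-\delta$ the sum $\sumT\E_{t-1}[\ell_t(\yhatt)-\ell_t(f(X_t))]$ is bounded by that lemma's right-hand side with $C_T=M^2\ln K$ and $B_T=0$; dividing by $T$ is exactly the claimed inequality. The only genuine content is the online-to-batch identity of the first step; the rest is bookkeeping plus a black-box appeal to Lemma~\ref{lem:statlearnLemma}, whose own proof --- Freedman's inequality together with the cancellation of the induced variance term by the negative quadratic term of Lemma~\ref{lem:moVarlessR} --- is where the work lies and is handled separately. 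The main thing to watch is the propagation of constants: that $\eta=\mu/4$ stays in $(0,\tfrac12]$, that $\gamma=\eta/M^2$ keeps $|\gamma(f_i(X_t)-\yhatt)g_t|\le\tfrac12$ so the exponential-inequality step inside Lemma~\ref{th:simplePwEAregret} is valid, and that the final numeric constants line up with the stated bound.
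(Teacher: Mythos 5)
Your proposal is correct and follows the paper's own proof essentially verbatim: Jensen plus the tower property reduce the excess risk to $\frac1T\sumT\E_{t-1}[\ell_t(\yhatt)-\ell_t(f(X_t))]$, Lemma~\ref{th:simplePwEAregret} supplies \eqref{eq:blackbox} with $C_T=M^2\ln K$ and $B_T=0$, and Lemma~\ref{lem:statlearnLemma} finishes after dividing by $T$. The only point worth noting is the constant you flagged at the end: the displayed statement of Lemma~\ref{lem:statlearnLemma} reads $8C_T/\mu$ while the theorem's bound corresponds to $4C_T/\mu$ (which is what that lemma's proof actually yields when $\eta=\mu/4$), so a literal black-box citation gives an extra factor of $2$ on the $\ln K$ term --- a discrepancy internal to the paper, not a gap in your argument.
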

\begin{proof}
Observe that $\eta < \half$ by assumption on $\mu$, making it a valid choice. Furthermore, by the choice of $\Sset$ we have that on termination of Algorithm~\ref{alg:expertsonlinetobatch} $S = T$ and thus
convexity of $\Lset$ gives us
\begin{align*}
    \Lset(\fhat) \leq \frac{1}{T}\sumT\Lset\left(\sumK p_t(i)f_i\right) = \frac{1}{T}\sumT\E_{t-1}\left[\ell_t\left(\sumK p_t(i)f_i(X_t)\right)\right]=  \frac{1}{T}\sumT \E_{t-1}[\ell_t(\yhatt)].
\end{align*}
Now, for any $f \in \Fset$, by Lemma~\ref{lem:statlearnLemma} and Lemma~\ref{th:simplePwEAregret} we have that with probability $1 - \delta$
\begin{equation*}\label{eq:proofexpertbatchnonuni}
\begin{split}
    \Lset&\left(\fhat\right) - \Lset(f) 
    \\
    \leq &\frac{1}{T} \sumT \E_{t-1}[\ell_t(\yhatt) - \ell_t(y_t(i))] \\
    \leq & \frac{1}{T}\left(\frac{4 M^2\ln(K)}{\mu} + \ln(1/\delta)\min \left\{\frac{1}{(2 + \frac{\mu}{2}) M^2}, \frac{\mu}{6M^2} \left(1 + \frac{\mu^2}{16}\right)^{-1}\right\}^{-1}\right)~,
\end{split}
\end{equation*}
which completes the proof. 
\end{proof}

The following Theorem is a detailed version of Theorem \ref{prop:otbregression}.

\begin{theorem}\label{th:otbregression}
Fix $Z > 0$. Suppose that $\ell$ is $\mu-$strongly convex in its first argument and that $\|X_t\|_2 \leq B$ and $\sup\nolimits_{y \in [-Z, Z]}|\partial\ell(y, T)| \le G$ almost surely. 
{Fix any $\w \in \mathbb{R}^d$ such that $\|\w\|_2 \leq \min\{D, Z/B\}$}, then, with probability at least $1 - \delta$,
\begin{align*}
    \Lcal\Bigg(\frac{1}{T}\sumT \w_t&\Bigg) -  \Lcal(\w) \leq \frac{1}{T}\Bigg(\frac{8G^2 + \frac{dG^2}{2} \ln\left(1 + D^2\mu^2 B^2\frac{T}{2d}\right)}{\mu} \\
    & + \ln(1/\delta)\min \left\{\frac{1}{(2 + \frac{\mu}{2}) \max \{G, 2Z\}^2}, \frac{\mu}{6\max \{G, 2Z\}^2} \left(1 + \frac{\mu^2}{16}\right)^{-1}\right\}^{-1}\Bigg),
\end{align*}
where $\w_t$ are given by Algorithm~\ref{alg:onlineregression} with $\eta = \frac{\mu}{4}$, $\sigma = D^2$, $G$, $\kappa_t = 1$, and feedback $g_t = \ell_t'(\inner{\w_t}{X_t})$ for $t = 1, \ldots, T$.
\end{theorem}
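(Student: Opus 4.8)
The plan is to follow the scheme of Theorem~\ref{th:expertbatchnonUni}: use Jensen's inequality for $\Lcal$ to reduce the excess risk to $\frac{1}{T}\sumT\E_{t-1}\big[\ell_t(\yhatt)-\ell_t(\ystart)\big]$, with $\yhatt=\inner{\w_t}{X_t}$ and $\ystart=\inner{\w}{X_t}$, and then bound this quantity by feeding the regret bound of Lemma~\ref{th:simpleregregret} into Lemma~\ref{lem:statlearnLemma}, whose negative $\sum_t(\yhatt-\ystart)^2$ term absorbs the martingale variance produced by Freedman's inequality.

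First I would fix $M=\max\{G,2Z\}$ and check the boundedness hypotheses. Since $\w_t\in\domainw_t$ by construction, $|\yhatt|=|\inner{\w_t}{X_t}|\le Z\le\half M$; and since $\|\w\|_2\le Z/B$ and $\|X_t\|_2\le B$ almost surely, $|\ystart|=|\inner{\w}{X_t}|\le Z\le\half M$, while $|\inner{\w}{X_t}|\le Z$ for all $t$ also shows $\w\in\domainw_T$, so together with $\|\w\|_2\le D$ the vector $\w$ is an admissible comparator in Lemma~\ref{th:simpleregregret}. Because $|\yhatt|\le Z$, the gradient fed to the algorithm satisfies $|g_t|=|\ell_t'(\yhatt)|\le G$, so $G$ is a legitimate gradient bound. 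The derivative hypothesis of Lemma~\ref{lem:statlearnLemma} is used only to bound $|\ell_t(\yhatt)-\ell_t(\ystart)|$ and $\E_{t-1}[r_t^2]$, and there both $\yhatt$ and $\ystart$ remain in $[-Z,Z]$, where $|\ell_t'|\le G\le M$; hence that lemma applies with the present $M$.

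Next, Lemma~\ref{th:simpleregregret} with $\u=\w$, $\eta=\frac{\mu}{4}$, $\sigma=D^2$, gradient bound $G$, $\kappa_t\equiv1$ and $\max_t\|X_t\|_2\le B$ shows that $\yhat_1,\dots,\yhat_T$ satisfy~\eqref{eq:blackbox} with $B_T=0$ and $C_T$ of the form $\frac{dG^2}{2}\ln\!\big(1+D^2\eta^2B^2T/d\big)+\frac{G^2}{2}$; substituting $\eta=\frac{\mu}{4}$ and loosening the logarithm to $\ln\!\big(1+\frac{D^2\mu^2B^2T}{2d}\big)$ produces the logarithmic factor in the statement. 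Since Lemma~\ref{lem:statlearnLemma} is stated precisely for the tuning $\eta=\frac{\mu}{4}$, it now gives, with probability at least $1-\delta$,
\[
\sumT\E_{t-1}\big[\ell_t(\yhatt)-\ell_t(\ystart)\big]\ \le\ \frac{8C_T}{\mu} + \ln(1/\delta)\min\left\{\frac{1}{(2+\frac{\mu}{2})M^2},\ \frac{\mu}{6M^2}\left(1+\frac{\mu^2}{16}\right)^{-1}\right\}^{-1}.
\]

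Finally I would carry out the online-to-batch step. Because $\w_t$ depends only on $(X_s,Y_s)_{s<t}$ and $(X_t,Y_t)$ is a fresh i.i.d.\ sample, $\E_{t-1}[\ell_t(\yhatt)]=\Lcal(\w_t)$ and $\E_{t-1}[\ell_t(\ystart)]=\Lcal(\w)$; convexity of $\Lcal$ gives $\Lcal\big(\frac{1}{T}\sumT\w_t\big)\le\frac{1}{T}\sumT\Lcal(\w_t)$. Subtracting $\Lcal(\w)$, dividing the last display by $T$, and inserting the value of $C_T$ and $M=\max\{G,2Z\}$ yields the claimed inequality. I do not expect a genuine obstacle beyond the bookkeeping of the boundedness constants: the key point is to use the projection constant $Z$ of Algorithm~\ref{alg:onlineregression} together with $\|\w\|_2\le\min\{D,Z/B\}$ to keep both $\yhatt$ and $\ystart$ inside $[-Z,Z]$, which is exactly what makes $M$ and $G$ valid and lets Lemma~\ref{lem:statlearnLemma} apply verbatim.
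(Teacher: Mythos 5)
Your proposal is correct and follows essentially the same route as the paper's own proof: Jensen plus $\E_{t-1}[\ell_t(\yhatt)]=\Lcal(\w_t)$, then Lemma~\ref{th:simpleregregret} fed into Lemma~\ref{lem:statlearnLemma} with $M=\max\{G,2Z\}$ and the projection/comparator bounds keeping everything in $[-Z,Z]$. If anything you are slightly more careful than the paper in justifying why the gradient bound on $[-Z,Z]$ suffices for Lemma~\ref{lem:statlearnLemma}; the only residual discrepancy (the precise constant multiplying the logarithmic term after substituting $C_T$) is already present in the paper's own bookkeeping and does not affect the argument.
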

\begin{proof}
Convexity of $\Lcal$ together with $\E_{t-1}[\ell_t(\yhatt)] = \Lcal(\w_t)$ gives us
\begin{align*}
    \Lcal(\bar{\w}) \leq \frac{1}{T}\sumT\Lcal(\w_t) =  \frac{1}{T}\sumT \E_{t-1}[\ell_t(\yhatt)].
\end{align*}
Let $y_t(\w) = \inner{\w}{X_t}$. Let $M = \max \{G, 2Z\}$.
{Since $\|X\| \le B$ almost surely we may assume without loss of generality that for all $\x \in \mathcal X$, it holds that $\|\x\| \le B$.}
By Lemma~\ref{lem:statlearnLemma} and Lemma~\ref{th:simpleregregret}, we have that for any fixed $\w \in \{\w: \|\w\|_2 \leq D \textnormal{~ and ~} |\inner{\w}{{\x}}| \leq Z ~ \textrm{for all} ~ \x \in \mathcal{X}\}$, with probability at least $1 - \delta$,
\begin{align*}
    & \Lcal(\bar{\w}) - \Lcal(\w) \\
    & \leq  \frac{1}{T}\sumT (\E_{t-1}[\ell_t(\yhatt)] - \E_{t-1}[\ell_t(y_t(\w))])  \\
    & \leq \frac{1}{T}\left(\frac{8C_T}{\mu} + \ln(1/\delta)\min \left\{\frac{1}{(2 + \frac{\mu}{2}) M^2}, \frac{\mu}{6M^2} \left(1 + \frac{\mu^2}{16}\right)^{-1}\right\}^{-1}\right) \\
    & = \frac{1}{T}\Bigg(\frac{8G^2 + \frac{dG^2}{2} \ln\left(1 + D^2\mu^2 B^2\frac{T}{2d}\right)}{\mu} \\
    & \quad + \ln(1/\delta)\min \left\{\frac{1}{(2 + \frac{\mu}{2}) M^2}, \frac{\mu}{6M^2} \left(1 + \frac{\mu^2}{16}\right)^{-1}\right\}^{-1}\Bigg).
\end{align*}
\end{proof}

\section{Details of Section~\ref{sec:corruption} (Corrupted Feedback)}\label{app:corruption}

We first restate Theorem~\ref{prop:expertcorruption}, after which we prove it. 

\propexpertcorrupt*
\begin{proof}
First, observe that $\eta < \half$ by assumption on $\mu$, making it a valid choice. 
Using Lemma~\ref{th:simplePwEAregret} and~\eqref{eq:elem} we obtain
\begin{align*}
    \sumT (\yhatt - \ystart)\ell_t'(\yhatt) & = \sumT (\yhatt - \ystart)g_t - \sumT(\yhatt - \ystart)c_t \\
    & \leq \frac{M^2\ln(K)}{\eta} + \sumT \left(\eta (\yhatt - y_t(i^\star))^2 + \frac{\mu}{4} (\yhatt - \ystart)^2 + \frac{c_t^2}{\mu} \right).
\end{align*}
We conclude by using Lemma~\ref{lem:moVarlessR}:
\begin{align*}
    \regret_T \leq \frac{M^2\ln(K)}{\eta} + \left(\eta + \frac{\mu}{4} - \frac{\mu}{2}\right) \sumT (\yhatt - y_t(i^\star))^2 + \frac{1}{\mu} \sumT c_t^2~,
\end{align*}
after which replacing {$\eta = \frac{\mu}{8}$} completes the proof.
\end{proof}

We now restate Theorem~\ref{th:regcorruption}, after which we prove its result. 

\propregcorruption*
\begin{proof}
As in the proof of Theorem~\ref{prop:expertcorruption},
using Lemma~\ref{th:simpleregregret} and equation~\eqref{eq:elem} we obtain
\begin{align*}
    &\sumT (\yhatt - \ystart) \ell_t'(\yhatt)
    \\
    & = \sumT (\yhatt - \ystart) g_t - \sumT(\yhatt - \ystart)c_t \\
    & \leq \frac{G^2}{2\eta} + \frac{dG^2}{2 \eta} \ln\left(1 + \half D^2 \mu^2\big(\max_t\|\x_t\|_2^2\big)\frac{T}{d}\right) +  \sumT \left(\eta (\yhatt - \ystart)^2 + \frac{c_t^2}{\mu} \right)~.
\end{align*}
We continue by using Lemma~\ref{lem:moVarlessR}: 
\begin{align*}
    \regret_T \leq \frac{G^2}{2\eta} + \frac{dG^2}{4 \eta} \ln\left(1 + \half D^2 \mu^2\big(\max_t\|\x_t\|_2^2\big)\frac{T}{d}\right) + \left(\eta + \frac{\mu}{4} - \frac{\mu}{2}\right) \sumT (\yhatt - y_t^\star)^2 + \sumT \frac{c_t^2}{\mu}~,
\end{align*}
after which replacing {$\eta = \frac{\mu}{8}$} completes the proof.
\end{proof}  

\section{Details of Section~\ref{sec:selectivesample} (Selective Sampling)}\label{app:selsample}

The following Theorem is a detailed version of Theorem \ref{prop:expertslossefficientbigO}.

\begin{theorem}\label{th:expertslossefficient}
Fix an arbitrary sequence $\ell_1,\ldots,\ell_T$ of $\mu$-strongly convex differentiable losses. Then the predictions $\yhatt$ of Algorithm~\ref{alg:PwEA} run with inputs $M \ge \max_t|\ell_t'(\yhatt)|$, $\eta = \frac{\mu}{4}$, feedback $g_t = \frac{o_t}{q_{t-1}} \ell_t'(\yhatt)$, and $\kappa_t = q_t$ satisfy
\begin{equation*}
\Eb{\sumT(\ell_t(\yhatt) - \ell_t(\ystart))}
    \leq \left(\frac{4 M^2( \ln(K) + 1)}{\mu^{3/2} \beta }\right)^2  + \frac{4M^2( \ln(K) + 1)}{\mu}~,
\end{equation*}
provided $\max_i \max_t|\yhatt - y_t(i)| \leq M$.
\end{theorem}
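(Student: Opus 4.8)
Write $r_t = \ell_t(\yhatt) - \ell_t(\ystart)$, $v_t = (\yhatt - \ystart)^2$, and let $V_T = \min_i \sumT(\yhatt - y_t(i))^2$ be the quantity appearing inside~\eqref{eq:lossqt}, so that $1/q_T = \max\{1,\sqrt{V_T}/\beta\}$; since $V_t$ is non-decreasing, $q_1 \ge \cdots \ge q_T$ in $(0,1]$, which is exactly the monotonicity Lemma~\ref{th:simplePwEAregret} asks for with $\kappa_t = q_t$. The first point is that although $|g_t| = \tfrac{o_t}{q_{t-1}}|\ell_t'(\yhatt)|$ can be large, the product $\kappa_{t-1}|g_t| = q_{t-1}\tfrac{o_t}{q_{t-1}}|\ell_t'(\yhatt)| \le M$ stays bounded, and the proof of Lemma~\ref{th:simplePwEAregret} uses $g_t$ only through the quantity $\kappa_{t-1}\gamma(y_t(i)-\yhatt)g_t$. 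Re-running that argument from~\eqref{eq:regretexpertsintermediate} with $\kappa_t=q_t$ and $g_t^2 = \tfrac{o_t}{q_{t-1}^2}\ell_t'(\yhatt)^2 \le \tfrac{o_t M^2}{q_{t-1}^2}$, I would obtain the deterministic bound
\[
\sumT(\yhatt - \ystart)g_t \le \frac{M^2\ln K}{\eta\, q_T} + \eta \sumT \frac{o_t}{q_{t-1}}\,(\yhatt - \ystart)^2 .
\]

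Next I would take expectations. In this oblivious setting $\yhatt,\ystart,q_{t-1},q_t,\ell_t'(\yhatt)$ are all determined before $o_t$ is drawn, and $\E_{t-1}[o_t]=q_t$, so $\E_{t-1}[g_t]=\tfrac{q_t}{q_{t-1}}\ell_t'(\yhatt)$ and $\E_{t-1}[\tfrac{o_t}{q_{t-1}}v_t]=\tfrac{q_t}{q_{t-1}}v_t\le v_t$. To pass from the estimated to the true linearized regret I would bound the mismatch $\sumT(1-\tfrac{q_t}{q_{t-1}})(\yhatt-\ystart)\ell_t'(\yhatt)$, whose summands are nonnegatively weighted with $|(\yhatt-\ystart)\ell_t'(\yhatt)|\le M^2$, using $\sumT(1-\tfrac{q_t}{q_{t-1}})\le \sumT\ln\tfrac{q_{t-1}}{q_t}=\ln(1/q_T)$. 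Combining this with strong convexity in the form $r_t\le(\yhatt-\ystart)\ell_t'(\yhatt)-\tfrac{\mu}{2}v_t$ (as in Lemma~\ref{lem:moVarlessR}) and setting $\eta=\tfrac{\mu}{4}$ so that the variance term becomes negative, I would reach
\[
\E\!\left[\sumT r_t\right] \le \frac{4M^2\ln K}{\mu}\,\E\!\left[\frac{1}{q_T}\right] - \frac{\mu}{4}\,\E\!\left[\sumT v_t\right] + M^2\,\E\!\left[\ln\frac{1}{q_T}\right] .
\]

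Since $\sumT v_t \ge V_T$ and both $1/q_T$ and $\ln(1/q_T)$ are increasing deterministic functions of $V_T$, the right-hand side is at most $\sup_{v\ge 0}h(v)$ for $h(v)=\tfrac{4M^2\ln K}{\mu}\max\{1,\sqrt v/\beta\}-\tfrac{\mu}{4}v+\tfrac{M^2}{2}\max\{0,\ln(v/\beta^2)\}$. I would dominate the logarithm by an affine function of $v$ (e.g.\ $\ln x\le\alpha x-\ln\alpha-1$, picking $\alpha$ so that the slope is absorbed into a fixed fraction of $-\tfrac{\mu}{4}v$), and then maximize the leftover $\tfrac{4M^2\ln K}{\mu}(1+\sqrt v/\beta)-c\,\mu v$ over $v\ge 0$; this produces a leading term of order $\big(\tfrac{M^2\ln K}{\mu^{3/2}\beta}\big)^2$ together with an $O\!\big(\tfrac{M^2\ln K}{\mu}\big)$ additive term, and replacing $\ln K$ by $\ln K+1$ absorbs the remaining constants, giving the claimed inequality.

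\textbf{Main obstacle.} The crux is that $g_t=\tfrac{o_t}{q_{t-1}}\ell_t'(\yhatt)$ is \emph{not} unbiased, $\E_{t-1}[g_t]=\tfrac{q_t}{q_{t-1}}\ell_t'(\yhatt)\ne\ell_t'(\yhatt)$, so one cannot read off a regret bound directly and must telescope the $q_{t-1}/q_t$ discrepancy; the work lies in checking that the resulting $\ln(1/q_T)$ correction — and the constants picked up while bounding it — is genuinely lower order and vanishes into the $\ln K+1$. A secondary but essential point is range control: $|g_t|$ is unbounded, and it is precisely the choice $\kappa_t=q_t$ (and dividing the estimator by $q_{t-1}$ rather than $q_t$) that keeps $\kappa_{t-1}|g_t|\le M$, so that Lemma~\ref{th:simplePwEAregret} still applies; the final optimization over $V_T$ is routine calculus.
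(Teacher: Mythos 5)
Your argument is sound and reaches the paper's key intermediate inequality, but it handles the bias of the estimator $g_t=\tfrac{o_t}{q_{t-1}}\ell_t'(\yhatt)$ by a genuinely different (and slightly tighter) device. The paper works pathwise: it adds and subtracts the unbiased weight $\tfrac{o_t}{q_t}$ and telescopes $\sumT\bigl(\tfrac{1}{q_t}-\tfrac{1}{q_{t-1}}\bigr)\le \tfrac{1}{q_T}$ (the clipping trick, equations \eqref{eq:clip-1}--\eqref{eq:clip-2}), so the correction $\tfrac{M^2}{q_T}$ merges with $\tfrac{M^2\ln K}{\eta q_T}$ to produce the $(\ln K+1)$ factor \emph{before} any expectation is taken; it then bounds $\Eb{1/q_T}\le 1+\beta^{-1}\sqrt{\Eb{V_T}}$ by Jensen and cancels the square root against the negative variance via $ab\le\tfrac{a^2\mu}{4}+\tfrac{b^2}{\mu}$. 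You instead take expectations first and telescope the multiplicative bias, $\sumT(1-\tfrac{q_t}{q_{t-1}})\le\ln(1/q_T)$, paying only $M^2\ln(1/q_T)$ rather than $M^2/q_T$, and you finish with a pointwise supremum over $V_T$ rather than Jensen; both steps are valid, and your range-control observation (that $\kappa_{t-1}|g_t|\le M$ is what keeps the $\exp(x-x^2)\le 1+x$ step applicable) is precisely the point the paper makes when motivating the $q_{t-1}$ in the denominator and the choice $\kappa_t=q_t$.

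The one place your sketch would not deliver the inequality with the stated constants is the final bookkeeping: dominating $\tfrac{M^2}{2}\ln(V_T/\beta^2)$ by an affine function whose slope is a small fraction of $\tfrac{\mu}{4}$ forces an intercept of order $M^2\ln\bigl(M^2/(\mu\beta^2)\bigr)$, which depends on $\beta$, $\mu$, and $M$ and cannot be absorbed into $\ln K+1$ uniformly over these parameters. The clean fix is to note $\ln(1/q_T)\le 1/q_T\le\tfrac{4}{\mu}\cdot\tfrac{1}{q_T}$ (since $1/q_T\ge 1$ and $\mu\le 2$), so your log correction merges with the main $\tfrac{4M^2\ln K}{\mu q_T}$ term into $\tfrac{4M^2(\ln K+1)}{\mu q_T}$; from there your optimization (or the paper's Jensen plus AM--GM step) yields exactly the claimed bound.
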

\begin{proof}
First observe that $q_{t-1} \geq q_t$ and thus $\kappa_t = q_t$ is a valid choice, where we define $q_0 = q_1$. By equation \eqref{eq:regretexpertsintermediate} we have 
\begin{align*}
    \sumT \gamma (\yhatt - y_t(i))\ell_t'(\yhatt) \frac{o_t}{q_{t-1}} \leq & \frac{\ln(K)}{q_T} + \gamma^2 \sumT \kappa_{t-1}(\yhatt - y_t(i))^2g_t^2\\
    \leq & \frac{\ln(K)}{q_T} + \gamma^2 \sumT \frac{o_t}{q_{t-1}} (\yhatt - y_t(i))^2\ell_t'(\yhatt)^2 \\
    \leq & \frac{\ln(K)}{q_T} + \gamma^2 \sumT \frac{o_t}{q_{t}} (\yhatt - y_t(i))^2M^2~,
\end{align*}
where in the final inequality we used $q_t \leq q_{t-1}$ and $|\ell_t'(\yhatt)| \leq M$. 
After dividing both sides of the above inequality by $\gamma = \frac{\eta}{M^2}$ we find
\begin{align}
\label{eq:clip-1}
    \sumT (\yhatt - y_t(i))\ell_t'(\yhatt) \frac{o_t}{q_{t-1}} \leq & \frac{M^2 \ln(K)}{\eta q_T} + \eta \sumT \frac{o_t}{q_{t}} (\yhatt - y_t(i))^2.
\end{align}
Following the analysis of the clipping trick by \citet{cutkosky2019artificial}, we have that
\begin{align}
\label{eq:clip-2}
    \sumT (\yhatt - y_t(i))\ell_t'(\yhatt) \left(\frac{o_t}{q_{t}} - \frac{o_t}{q_{t-1}}\right) \leq M^2 \sumT \left(\frac{1}{q_{t}} - \frac{1}{q_{t-1}}\right) \leq & \frac{M^2}{q_T},
\end{align}
where we used that the sum telescopes, that $q_t \leq q_{t-1}$, and that $q_0 = q_1$. Summing side by side~\eqref{eq:clip-1} and~\eqref{eq:clip-2} we find 
\begin{align*}
    \sumT (\yhatt - y_t(i))\ell_t'(\yhatt) \frac{o_t}{q_{t}} \leq & \frac{M^2( \ln(K) + 1)}{\eta q_T} + \eta \sumT \frac{o_t}{q_{t}} (\yhatt - y_t(i))^2.
\end{align*}
Since $\ell_t$ is $\mu$-strongly convex we have that 
\begin{align*}
    \Eb{\ell_t(\yhatt) - \ell_t(y_t(i))} \leq \Eb{(\yhatt - y_t(i))\ell_t'(\yhatt) - \frac{\mu}{2}(\yhatt - y_t(i))^2}~,
\end{align*}
and therefore 
\begin{align*}
    \Eb{\sumT (\ell_t(\yhatt) - \ell_t(y_t(i)))}
    \leq & \Eb{\frac{M^2 ( \ln(K) + 1)}{\eta q_T}} + \left(\eta - \frac{\mu}{2}\right) \sumT \Eb{(\yhatt - y_t(i))^2},
\end{align*}
where we used that $\E_{t-1}[o_t] = q_t$.
Now, by using $\min\{a, b\}^{-1} \leq a^{-1} + b^{-1}$ for $a, b > 0$ and Jensen's inequality we have that 
\begin{align*}
    \Eb{q_T^{-1}} = & \Eb{\left(\min \left\{1, \beta\left/\sqrt{\min_{i}\sumT (\yhatt - y_t(i))^2}\right.\right\}\right)^{-1}} \\
    \leq & 1 + \beta^{-1}\sqrt{\Eb{\min_{i}\sumT (\yhatt - y_t(i))^2}}~,
\end{align*}
and thus 
\begin{align*}
    & \Eb{\sumT(\ell_t(\yhatt) - \ell_t(\ystart))} \leq \frac{M^2( \ln(K) + 1)}{\eta} \\
    & + \beta^{-1}\sqrt{\Eb{\min_{i}\sumT (\yhatt - y_t(i))^2}}\frac{M^2( \ln(K) + 1)}{\eta } - \Eb{\sumT \frac{\mu}{4}(\yhatt - \ystart)^2}.
\end{align*}
Using $ab \leq \frac{a^2\mu}{4} + \frac{b^2}{\mu}$ for $a, b > 0$ we continue
\begin{align*}
    \Eb{\sumT(\ell_t(\yhatt) - \ell_t(\ystart))}
    & \leq \left(\frac{M^2( \ln(K) + 1)}{\eta \beta \sqrt{\mu}}\right)^2  + \frac{M^2( \ln(K) + 1)}{\eta}~.
\end{align*}
Using that $\eta^{-1} = \frac{4}{\mu}$ we arrive at the conclusion of the proof:
\begin{align*}
    \Eb{\sumT(\ell_t(\yhatt) - \ell_t(\ystart))}
    & \leq \left(\frac{4 M^2( \ln(K) + 1)}{\mu^{3/2} \beta }\right)^2  + \frac{4M^2( \ln(K) + 1)}{\mu}~.
\end{align*}
\end{proof}

\subsection{Selective Sampling for Online Regression}\label{sec:regressionselsampling}
We now extend our selective sampling results to online regression, where we assume that the feature vector $\x_t$ is revealed to the learner before issuing a prediction. 
We run Algorithm~\ref{alg:onlineregression} and, similarly to Section~\ref{sec:selectivesample}, we request the loss at round $t$ by drawing a Bernoulli variable $o_t$ of parameter
\begin{align}\label{eq:regressionqt}
    q_t = \min \left\{1, \beta\left/\min_{\w \in \domainw_t \cap \{\w: \|\w\|_2 \leq D\}} \sumt (\yhat_s - \inner{\w}{\x_s})^2\right. \right\}~,
\end{align}
for some $D,\beta > 0$. This gives the following expected regret guarantee.
\begin{theorem} 
Fix an arbitrary sequence $\ell_1,\ldots,\ell_T$ of $\mu$-strongly convex differentiable losses. Then the predictions $\yhatt$ of Algorithm~\ref{alg:onlineregression} run with inputs $\eta = \frac{\mu}{4}$, $\sigma = D^2$, $G \ge \max_t|\ell_t'(\yhatt)|$, $Z = \half G$, feedback $g_t = \frac{o_t}{q_t}\ell_t'(\yhatt)$ and $\kappa_t = q_t$, satisfy
\begin{align*}
   \Eb{\sumT(\ell_t(\yhatt) - \ell_t(\ystart))}
    & \leq \left(\frac{2 \zeta_T}{\mu^{3/2} \beta }\right)^2  + \frac{2 \zeta_T}{\mu}.
\end{align*}
for any $\x_1,\ldots,\x_T\in\reals^d$, and for any $\u \in \domainw_T \equiv \bigcap_{t = 1}^T \{\w: |\inner{\w}{\x_t}| \leq Z\}$ such that $\|\u\|_2 \leq D$ and $\ystart = \inner{\u}{\x_t}$ for all $t \ge 1$, 
where 
\begin{align*}
    \zeta_T = G^2 + dG^2\ln\left(1 + D^2 \max_t\|\x_t\|_2^2\left(\frac{T}{2 d} + \frac{G^2 T^2}{2 d \beta^2} \right)\right).
\end{align*}
\end{theorem}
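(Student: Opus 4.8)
The claimed bound is the online-regression analogue of Theorem~\ref{th:expertslossefficient}, so the plan is to follow the same template, replacing the expert regret bound by the regression bound behind Lemma~\ref{th:simpleregregret} --- more precisely, by the intermediate inequality~\eqref{eq:regregretinter}, since $G \ge \max_t|g_t|$ no longer holds once we importance-weight. First I would check that the query probabilities $q_t$ of~\eqref{eq:regressionqt} are non-increasing and lie in $(0,1]$: the denominator $\min_{\w \in \domainw_t \cap \{\|\w\|_2 \le D\}} \sum_{s \le t}(\yhat_s - \inner{\w}{\x_s})^2$ is non-decreasing in $t$ because we add nonnegative terms and the feasible set shrinks, so $\kappa_t = q_t$ is an admissible input to Algorithm~\ref{alg:onlineregression}. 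I would also record the measurability facts that make the later expectation steps legitimate: $\yhatt = \inner{\w_t}{\x_t}$, and hence $q_t$, is determined once $\x_t$ is revealed and before the loss is queried, while $o_t \sim \mathrm{Bern}(q_t)$ is drawn independently of $Y_t$; therefore $\E_{t-1}[(\yhatt-\ystart)g_t] = \E_{t-1}[(\yhatt-\ystart)\ell_t'(\yhatt)]$ and, for any nonnegative weight measurable given $\mathcal G_{t-1}$ and $\x_t$, $\E_{t-1}[(o_t/q_t)(\cdot)] = \E_{t-1}[(\cdot)]$. Unlike the expert case, no clipping trick is needed, precisely because the importance weight uses the same $q_t$ that defines the Bernoulli.

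Applying~\eqref{eq:regregretinter} with $\gamma = \eta/G^2$, $\sigma = D^2$, $\|\u\|_2 \le D$, $\kappa_t = q_t$, and $g_t = o_t\ell_t'(\yhatt)/q_t$, using $\inner{\w_t-\u}{\x_t} = \yhatt-\ystart$, $q_T\le q_t$, and $\kappa_t g_t^2 = (o_t/q_t)\ell_t'(\yhatt)^2$ to rewrite the quadratic term, gives
\[
\sumT (\yhatt - \ystart) g_t \;\le\; \frac{G^2}{2\eta} \;+\; \frac{dG^2}{2\eta\, q_T}\ln\!\Big(1 + 2D^2\gamma^2\big(\textstyle\max_t\|\x_t g_t\|_2^2\big)\tfrac{T}{d}\Big) \;+\; \eta\sumT \frac{o_t}{q_t}(\yhatt-\ystart)^2 .
\]
The selective-sampling cost enters only through the logarithm: since $o_t = 0$ forces $\z_t = 0$, otherwise $|g_t| \le G/q_t$, and $1/q_t \le \max\{1,\beta^{-1}\textstyle\min_\w\sum_{s\le t}(\yhat_s - \inner{\w}{\x_s})^2\} \le \max\{1,\beta^{-1} T Z^2\}$ by the projection bound $|\yhat_s| \le Z$ with $\w = 0$, the argument of the logarithm is dominated by a deterministic quantity; substituting $\eta = \mu/4$, $Z = G/2$ and simplifying with $\ln\det(I+A) \le d\ln(1+\mathrm{tr}(A)/d)$ as in \citep[Theorem~11.2]{cesa2006prediction} collects the two non-quadratic terms into $\zeta_T/(\eta q_T)$ with $\zeta_T$ as in the statement (using $q_T \le 1$).

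From here the argument is verbatim the expert case. Strong convexity gives $\E_{t-1}[\ell_t(\yhatt)-\ell_t(\ystart)] \le \E_{t-1}[(\yhatt-\ystart)\ell_t'(\yhatt)] - \tfrac{\mu}{2}(\yhatt-\ystart)^2$, so taking total expectation, using $\E_{t-1}[o_t/q_t]=1$ on both the regret bound and the quadratic term, yields
\[
\Eb{\sumT\big(\ell_t(\yhatt)-\ell_t(\ystart)\big)} \;\le\; \frac{\zeta_T}{\eta}\,\Eb{\tfrac{1}{q_T}} \;+\; \Big(\eta - \tfrac{\mu}{2}\Big)\Eb{\sumT(\yhatt-\ystart)^2}.
\]
Then $\E[1/q_T] \le 1 + \beta^{-1}\sqrt{\E[\min_\w\sum_t(\yhatt-\inner{\w}{\x_t})^2]} \le 1 + \beta^{-1}\sqrt{\E[\sum_t(\yhatt-\ystart)^2]}$ by $\min\{a,b\}^{-1}\le a^{-1}+b^{-1}$, Jensen, and choosing $\w = \u$ in the inner minimum; finally $ab \le \tfrac{\mu}{4}a^2 + \tfrac{1}{\mu}b^2$ applied to the resulting cross term produces a $\tfrac{\mu}{4}\E[\sum_t(\yhatt-\ystart)^2]$ that cancels the negative quadratic term (here $\eta = \mu/4$, so $\eta - \mu/2 = -\mu/4$), leaving $\tfrac{2\zeta_T}{\mu} + (\tfrac{2\zeta_T}{\mu^{3/2}\beta})^2$, which is the claim.

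The main obstacle is the second paragraph: because importance weighting makes $\|\z_t\|_2$ unbounded a priori, one must exploit the cancellation of one factor of $1/q_t$ by $\kappa_t = q_t$ in the quadratic term together with the crude but deterministic control $1/q_t \le \max\{1, TZ^2/\beta\}$ from the projection, to keep the logarithmic/determinant term finite and to assemble its value into exactly the stated $\zeta_T$; everything else is the same AM-GM bookkeeping as in Theorem~\ref{th:expertslossefficient}. A secondary point to get right is the conditioning structure of $o_t$ --- drawn from $q_t$, which is predictable once $\x_t$ is seen, and independent of $Y_t$ --- without which the unbiasedness identities $\E_{t-1}[(\yhatt-\ystart)g_t] = \E_{t-1}[(\yhatt-\ystart)\ell_t'(\yhatt)]$ and $\E_{t-1}[o_t/q_t] = 1$ used in the expectation step fail.
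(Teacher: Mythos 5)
Your proof is correct and follows essentially the same route as the paper's: start from \eqref{eq:regregretinter} with $g_t = (o_t/q_t)\ell_t'(\yhatt)$ and $\kappa_t = q_t$ (which cancels one importance weight in the quadratic term, so indeed no clipping trick is needed), bound $1/q_T$ deterministically inside the logarithm, take expectations via $\E_{t-1}[o_t]=q_t$, apply strong convexity, and absorb the $\E[1/q_T]$ cross term into the negative quadratic by AM--GM with $\eta=\mu/4$. The only discrepancy is cosmetic: your deterministic bound $1/q_t \le \max\{1, TZ^2/\beta\}$, read off the literal square-root-free definition \eqref{eq:regressionqt}, puts $T^3$ rather than $T^2$ inside the logarithm of $\zeta_T$, whereas the paper uses $q_T^{-1}\le 1+G\sqrt{T}/\beta$ (and, in the final Jensen step, $\E[q_T^{-1}]\le 1+\beta^{-1}\sqrt{\E[\min_{\w}\sum_t(\yhatt-\inner{\w}{\x_t})^2]}$), both of which tacitly assume the square-root form of $q_t$ from the expert setting --- a constant-factor issue inside a log either way.
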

\begin{proof}
Starting from \eqref{eq:regregretinter} and replacing $g_t$ by $\frac{o_t}{q_t} \ell_t'(\yhatt)$ we find
\begin{align*}
    & \sumT \frac{o_t}{q_t}\inner{\w_t - \u}{\x_t\ell_t'(\yhatt)} \\
    & \leq \frac{\|\u\|_2^2}{2\sigma \gamma } + \frac{d}{2 \kappa_T \gamma} \ln\left(1 + 2\sigma\gamma^2\Big(\max_t\|\x_t g_t\|_2^2\Big)\frac{T}{d}\right) +  \gamma \sumT \kappa_t \frac{o_t}{q_t^2}(\inner{\w_t - \u}{\x_t\ell_t'(\yhatt)})^2 \\
    & \leq \frac{G^2}{2 \eta} + \frac{d G^2}{2 q_T \eta} \ln\left(1 + D^2 q_t^{-2}\Big(\max_t\|\x_t \|_2^2\Big)\frac{T}{2d}\right) +  \gamma \sumT \frac{o_t}{q_t}(\inner{\w_t - \u}{\x_t})^2~,
\end{align*}
where in the second inequality we used that $\kappa_t = q_t$, $\|\u\|_2 \leq D$, $\sigma^2 = D^2$, $\gamma = \frac{\eta}{G^2}$, $\eta \leq \half$ by assumption on $\mu$, and $|\ell_t'(\yhatt)| \leq G$.
Now, using that $\inner{\w_t - \u}{\x_t\ell_t'(\yhatt)} = (\yhatt - \ystart)\ell_t'(\yhatt)$ and by taking the expectation of both sides of the above and using that $\E_{t-1}[o_t] = q_t$ we find 
\begin{align*}
    & \Eb{\sumT (\yhatt - \ystart)\ell_t'(\yhatt)} \\
    & \leq \Eb{\frac{G^2}{2q_T \eta }} + \Eb{\frac{d G^2}{2 \eta q_T} \ln\left(1 + q_T^{-2}D^2 \max_t\|\x_t\|_2^2\frac{T}{2 d}\right)} + \Eb{\sumT \eta(\yhatt - \ystart)^2} \\
    & \leq \Eb{\frac{G^2}{2q_T \eta }} + \Eb{\frac{d G^2}{2 \eta q_T} \ln\left(1 + D^2 \max_t\|\x_t\|_2^2\left(\frac{T}{2 d} + \frac{G^2 T^2}{2 d \beta^2} \right)\right)} + \Eb{\sumT \eta(\yhatt - \ystart)^2} \\
    & = \Eb{\frac{\zeta_T}{2 q_T \eta }} + \Eb{\sumT \eta(\yhatt - \ystart)^2}~,
\end{align*}
where in the final inequality we used that $q_T^{-1} \leq 1 + G\sqrt{T} / \beta$ and defined 
\begin{align*}
    \zeta_T = G^2 + dG^2\ln\left(1 + D^2 \max_t\|\x_t\|_2^2\left(\frac{T}{2 d} + \frac{G^2 T^2}{2 d \beta^2} \right)\right).
\end{align*}
Since $\ell_t$ is $\mu$-strongly convex we have that 
\begin{align*}
    \Eb{\ell_t(\yhatt) - \ell_t(\ystart)} \leq \Eb{(\yhatt - \ystart)\ell_t'(\yhatt) - \frac{\mu}{2}(\yhatt - \ystart)^2}~,
\end{align*}
and therefore 
\begin{align*}
    \Eb{\sumT(\ell_t(\yhatt) - \ell_t(\ystart))}
    & \leq \Eb{\frac{\zeta_T}{2q_T \eta }} + \Eb{\sumT \left(\eta - \frac{\mu}{2}\right)(\yhatt - \ystart)^2} \\
    & \leq \Eb{\frac{\zeta_T}{2 q_T \eta }} - \Eb{\sumT \frac{\mu}{4}(\yhatt - \ystart)^2}~,
\end{align*}
where we used that $\eta = \frac{\mu}{4}$.
Now, by using $\min\{a, b\}^{-1} \leq a^{-1} + b^{-1}$ for $a, b > 0$ and Jensen's inequality we have that 
\begin{align*}
    \Eb{q_T^{-1}} = & \Eb{\left(\min \left\{1, \beta\left/\sqrt{\min_{\w \in \domainw_T \cap \{\w: \|\w\|_2 \leq D\}}\sumT (\yhatt - \inner{\w}{\x_t})^2}\right.\right\}\right)^{-1}} \\
    \leq & 1 + \beta^{-1}\sqrt{\Eb{\min_{\w \in \domainw_T \cap \{\w: \|\w\|_2 \leq D\}}\sumT (\yhatt - \inner{\w}{\x_t})^2}}~, 
\end{align*}
and thus 
\begin{align*}
    & \Eb{\sumT(\ell_t(\yhatt) - \ell_t(\ystart))} \\
    & \leq \beta^{-1}\sqrt{\Eb{\min_{\w \in \domainw_T \cap \{\w: \|\w\|_2 \leq D\}}\sumT (\yhatt - \inner{\w}{\x_t})^2}}\frac{\zeta_T}{2 \eta } + \frac{\zeta_T}{2 \eta } - \Eb{\sumT \frac{\mu}{4}(\yhatt - \ystart)^2}.
\end{align*}
Using $ab \leq \frac{a^2\mu}{4} + \frac{b^2}{\mu}$ for $a, b > 0$ we continue
\begin{align*}
    \Eb{\sumT(\ell_t(\yhatt) - \ell_t(\ystart))}
    & \leq \left(\frac{\zeta_T}{2 \eta \beta \sqrt{\mu}}\right)^2  + \frac{\zeta_T}{2 \eta}.
\end{align*}
Using that $\eta^{-1} = \frac{4}{\mu}$ we arrive at the conclusion of the proof:
\begin{align*}
     \Eb{\sumT(\ell_t(\yhatt) - \ell_t(\ystart))}
    & \leq \left(\frac{2 \zeta_T}{\mu^{3/2} \beta }\right)^2  + \frac{2 \zeta_T}{\mu}.
\end{align*}
\end{proof}

\section{Exponentially Weighted Average}\label{sec:EW}
Here we provide a brief description of the Exponentially Weighted Average (EWA) algorithm \citep{vovk1990aggregating, littlestone1994weighted} on a discrete set of experts. EWA maintains a distribution $\p_t$ over the experts, where the mass on expert $i$ is given by
\begin{align*}
    p_t(i) \propto \exp(-\eta\sum_{s = 1}^{t-1}\ell_s(y_s(i))),
\end{align*}
where $\eta > 0$ is the learning rate. 
A standard result is that the regret of EWA can be bounded as---see, for example, \citep[Lemma 1]{vanderhoeven2018many}:
\begin{align}\label{eq:EWbound}
    \regret_T \leq \frac{\ln(K)}{\eta} + \sumT\left(\ell_t(\yhatt) + \frac{1}{\eta} \ln(\E_{i \sim \p_t}\left[\exp(-\eta \ell_t(y_t(i)))\right]) \right)~.
\end{align}
For $\alpha$-exp concave losses, which are losses for which $g_t(y) = \exp(-\alpha \ell_t(y))$ is concave, we can further bound \eqref{eq:EWbound} by choosing $\eta = \alpha$ and using Jensen's inequality:
\begin{align*}
    \regret_T \leq \frac{\ln(K)}{\alpha}~.
\end{align*}
For $\mu$-strongly convex and $G$-Lipschitz losses we recover the optimal rate by using the fact that $\mu$-strongly convex losses are $\frac{\mu}{G^2}$-exp concave \citep[Proposition 1.2]{bubeck2011introduction}:
\begin{align*}
    \regret_T \leq \frac{G^2\ln(K)}{\mu}~. 
\end{align*}

\section{High Probability Regret Bounds for Online Prediction with Abstention}\label{sec:abstention}

\begin{algorithm}[t]
\caption{AdaHedge with abstention \citep{VanderHoeven2020exploiting}}\label{alg:AdaAb}
\Input{AdaHedge} \;
\For{$t = 1, \ldots, T$}{
    Receive expert predictions $y_t(1), \ldots, y_t(K)$ \\
    Obtain distribution $\p_t$ from AdaHedge \\
    Set $\yhatt = \sumK p_t(i)y_t(i)$ \\
    Set $\tilde{y}_t = \sign(\yhatt)$ \\
    Set $b_t = 1 - |\yhatt|$ \\
    Set sample $a_t$ from a Bernoulli distribution with parameter $1 - b_t$ \\
    If $a_t = 1$, predict $\tilde{y}_t$, otherwise abstain from prediction \\
    Receive $y_t$, send $\half(1 - y_t(i)y_t)$ as the loss of the $i$-th expert to AdaHedge
}\;
\end{algorithm}  

We consider the following generalization of the online learning with abstention setting due to \citet{VanderHoeven2020exploiting}. In each round $t = 1, \ldots, T$ the learner receives expert predictions $y_t(i) \in [-1, 1]$ and the learner can then either predict $\tilde{y}_t \in [-1, 1]$ or abstain from prediction. If the learner predicts with $\tilde{y}_t$ the learner suffers half the hinge loss $\ell_t(y) = \half(1 - y y_t)$, where $y_t \in \{-1, 1\}$. If the learners abstains from prediction the learner suffers abstentions cost $\rho \in [0, \half)$. Let $a_t = 1$ if the learner predicts with $\tilde{y}_t$ and let $a_t = 0$ if the learner abstains from prediction. The goal is to control the following definition of regret:
\begin{align*}
    \sumT a_t \ell_t(\tilde{y}_t) + (1 - a_t) \rho - \sumT \ell_t(\ystart)~,
\end{align*}
where $\ystart = y_t(i^\star)$ and $i^\star = \argmin_i \sumT \ell_t(y_t(i))$. In the online prediction with abstention setting an analog of Lemma~\ref{lem:moVarlessR} can be derived. However, in the technical part we instead use Algorithm 2 by \citet{VanderHoeven2020exploiting}. Algorithm 2 by \citet{VanderHoeven2020exploiting} samples $a_t = 1$ with probability $1 - b_t$ and $a_t = 0$ with probability $b_t$, where $b_t = 1 - |\yhatt|$ and $\yhatt = \sumK p_t(i)y_t(i)$. Distributions $\p_t$ come from AdaHedge \citep{derooij2014follow} and if $a_t = 1$ we predict with $\tilde{y}_t = \sign(\yhatt)$. Algorithm 2 by \citet{VanderHoeven2020exploiting}, or Algorithm~\ref{alg:AdaAb} in this paper, has the following expected regret guarantee. 

\begin{lemma}\label{lem:abstentionnegative}
For any $\eta > 0$ Algorithm~\ref{alg:AdaAb} guarantees 
\begin{align*}
    \sumT  & ((1 - b_t)\ell_t(\tilde{y}_t) + b_t \rho) \\
    & \leq \sumT \ell_t(\ystart) + \frac{\ln(K)}{\eta} + \frac{4}{3}\ln(K) + 2 + \sumT \half\left(\eta - (1 - 2 \rho)\right)(1 - |\yhatt|)~.
\end{align*}
\end{lemma}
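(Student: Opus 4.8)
The plan is to reduce the left-hand side to the cumulative mixture (Hedge) loss produced by AdaHedge, minus a negative multiple of $\sum_t(1-|\yhatt|)$, and then to invoke the known second-order regret bound of AdaHedge together with a one-line variance estimate. The point is that the abstention mechanism $b_t = 1 - |\yhatt|$, $\tilde y_t = \sign(\yhatt)$ is calibrated so that the expected per-round cost of the abstaining predictor equals the mixture loss of the underlying Hedge run plus an exact negative correction.

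First I would establish an exact per-round identity. Since $\ell_t(y) = \half(1 - y y_t)$ is affine in $y$ and $\yhatt = \sumK p_t(i)y_t(i)$, the mixture loss $\sumK p_t(i)\ell_t(y_t(i))$ equals $\ell_t(\yhatt) = \half(1 - y_t\yhatt)$; moreover $1 - b_t = |\yhatt|$, and because $\tilde y_t = \sign(\yhatt)$ and $y_t \in \{-1,1\}$ we have $\ell_t(\tilde y_t) = \id[\sign(\yhatt) \neq y_t] \in \{0,1\}$. A short case analysis on whether $\sign(\yhatt) = y_t$ (the case $\yhatt = 0$ being trivial, since then $1-b_t = 0$) shows that in every round
\[
(1 - b_t)\ell_t(\tilde y_t) + b_t\rho \;=\; \ell_t(\yhatt) \;-\; \half(1 - 2\rho)(1 - |\yhatt|)~,
\]
because both sides equal $\rho(1 - |\yhatt|)$ when $\sign(\yhatt) = y_t$ and both equal $|\yhatt| + \rho(1 - |\yhatt|)$ when $\sign(\yhatt) \neq y_t$ (using $y_t\yhatt = \pm|\yhatt|$ accordingly). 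Summing over $t$ turns the target inequality into a statement purely about $\sumT\ell_t(\yhatt)$.

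Next I would apply the AdaHedge regret bound. AdaHedge is run on the losses $\ell_t(y_t(i)) = \half(1 - y_t(i)y_t) \in [0,1]$, and by definition of $i^\star$ we have $\sumT\ell_t(\ystart) = \min_i\sumT\ell_t(y_t(i))$, so the standard AdaHedge analysis \citep{derooij2014follow} gives, for every $\eta > 0$,
\[
\sumT\ell_t(\yhatt) - \sumT\ell_t(\ystart) \;\le\; \frac{\ln(K)}{\eta} + \eta\,V_T + \frac43\ln(K) + 2~, \qquad V_T := \sumT \mathrm{Var}_{i\sim\p_t}\!\left[\ell_t(y_t(i))\right]~,
\]
where the version with a free learning rate follows from the $2\sqrt{V_T\ln K}$-type bound by the arithmetic--geometric mean inequality. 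It then remains to control $V_T$: since $y_t$ is fixed within round $t$ and $y_t \in \{-1,1\}$, we get $\mathrm{Var}_{i\sim\p_t}[\ell_t(y_t(i))] = \tfrac14\mathrm{Var}_{i\sim\p_t}[y_t(i)] \le \tfrac14(1 - \yhatt^2) \le \tfrac12(1 - |\yhatt|)$, using $y_t(i)\in[-1,1]$, $\E_{i\sim\p_t}[y_t(i)] = \yhatt$, and $1 + |\yhatt| \le 2$. Substituting $V_T \le \tfrac12\sumT(1-|\yhatt|)$ into the AdaHedge bound and combining with the per-round identity, the two terms proportional to $\sum_t(1-|\yhatt|)$ merge into $\sumT\tfrac12\big(\eta - (1-2\rho)\big)(1-|\yhatt|)$, which is exactly the claimed inequality.

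The main obstacle is bookkeeping rather than conceptual: one must check that the additive constant coming out of the AdaHedge analysis of \citet{derooij2014follow} is exactly $\tfrac43\ln(K)+2$ and that the cumulative variance $V_T$ is normalized so that its coefficient lines up with the $\tfrac12$ in front of $(\eta - (1-2\rho))(1-|\yhatt|)$. The only genuinely useful step is the exact per-round identity, which is what forces the negative variance term to appear with the precise coefficient $-\tfrac12(1-2\rho)$ and thus makes it usable (in the companion high-probability argument) for cancelling the Freedman-type variance term.
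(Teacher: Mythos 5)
Your proof is correct and follows essentially the same route as the paper: the paper simply cites Lemma~3 and equation~(16) of \citet{VanderHoeven2020exploiting}, and your exact per-round identity $(1-b_t)\ell_t(\tilde y_t)+b_t\rho=\ell_t(\yhatt)-\tfrac12(1-2\rho)(1-|\yhatt|)$ together with the variance bound $\mathrm{Var}_{i\sim\p_t}[\ell_t(y_t(i))]\le\tfrac12(1-|\yhatt|)$ is precisely the content of that cited equation, combined with the same AdaHedge second-order bound (whose $\tfrac43\ln K+2$ constant you assert rather than re-derive, just as the paper does).
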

\begin{proof}
From \citep[Lemma 3]{VanderHoeven2020exploiting} we have that for any $\eta > 0$
\begin{align*}
    & \sumT  ((1 - b_t)\ell_t(\sign(\yhatt)) + b_t \rho) \leq \sumT \ell_t(\ystart)  + \frac{4}{3} \ln(K) + 2 \\
    &  + \frac{\ln(K)}{\eta} + \eta \sumT \big(\E_{i \sim \p_t}\left[(\ell_t(\yhatt) - \ell_t(y_t(i)))^2\right] + ((1 - b_t)\ell_t(\sign(\yhatt)) + b_t ) - \ell_t(\yhatt)\big)~.
\end{align*}
Now, by \citep[equation (16)]{VanderHoeven2020exploiting} we have that 
\begin{align*}
    \E_{i \sim \p_t}&\left[(\ell_t(\yhatt) - \ell_t(y_t(i)))^2\right] + ((1 - b_t)\ell_t(\sign(\yhatt)) + b_t ) - \ell_t(\yhatt)  \\
    & \leq \rho(1 - |\yhatt|) + \eta \half (1 - |\yhatt|) - \half (1 - |\yhatt|)~,
\end{align*}
and thus 
\begin{align*}
    \sumT & ((1 - b_t)\ell_t(\sign(\yhatt)) + b_t \rho) \\
    & \leq \sumT \ell_t(\ystart) + \frac{\ln(K)}{\eta} + \frac{4}{3} \ln(K) + 2 + \sumT \left(\eta \half - \half(1 - 2 \rho)\right)(1 - |\yhatt|),
\end{align*}
which completes the proof.
\end{proof}

To see why Lemma~\ref{lem:abstentionnegative} is the analog of Lemma~\ref{lem:moVarlessR} for the online prediction with abstention setting observe that by choosing $\eta < 1 - 2 \rho$ we recover a bound akin to \eqref{eq:motivatingbound}. In particular, by using Lemma~\ref{lem:abstentionnegative}, choosing $\eta = \half(1 - 2 \rho)$ we find
\begin{align*}
    \sumT  ((1 - b_t) \ell_t(\sign(\yhatt)) + b_t \rho) \leq & \sumT \ell_t(\ystart) + \frac{2\ln(K)}{(1 - 2 \rho)}  + \frac{4}{3} \ln(K) + 2 \\
    & - \frac{(1 - 2 \rho)}{4} \sumT (1 - |\yhatt|).
\end{align*}
meaning we can exploit the negative $\frac{(1 - 2 \rho)}{4} \sumT (1 - |\yhatt|)$ in online prediction with abstention in a similar manner as we exploited the negative variance term in online learning with strongly convex losses. As an application of Lemma~\ref{lem:abstentionnegative} we provide a high-probability bound for online prediction with abstention. 
\begin{theorem}\label{th:abstentionHP}
For $\delta \in (0, 1)$, with probability at least $1 - \delta$, Algorithm~\ref{alg:AdaAb} guarantees 
\begin{align*}
    \sumT (a_t\ell_t(\sign(\yhatt)) + (1-a_t) \rho) \leq & \frac{2\ln(K)}{1 - 2 \rho} + \frac{21\ln(1/\delta)}{8(1 - 2 \rho)}  + \frac{4}{3} \ln(K) + 2.
\end{align*}
\end{theorem}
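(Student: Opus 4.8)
\textbf{Proof proposal for Theorem~\ref{th:abstentionHP}.}

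The plan is to upgrade the in-expectation guarantee of Lemma~\ref{lem:abstentionnegative} to a high-probability one by adding a martingale deviation bound for the randomness introduced by the abstention coins $a_t$, and then to cancel the resulting variance against the negative $\sumT(1-|\yhatt|)$ term of Lemma~\ref{lem:abstentionnegative}. Concretely, I would instantiate Lemma~\ref{lem:abstentionnegative} with $\eta=\tfrac12(1-2\rho)$, which yields the deterministic inequality
\[
\sumT\big((1-b_t)\ell_t(\sign(\yhatt))+b_t\rho\big)\le\sumT\ell_t(\ystart)+\frac{2\ln K}{1-2\rho}+\frac43\ln K+2-\frac{1-2\rho}{4}\sumT(1-|\yhatt|),
\]
where $b_t=1-|\yhatt|$. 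Writing $Y_t=a_t\ell_t(\sign(\yhatt))+(1-a_t)\rho$ for the realized round-$t$ cost, the left-hand side above is $\sumT\E_{t-1}[Y_t]$, so it suffices to control the deviation $\sumT\xi_t$ of the martingale difference sequence $\xi_t=Y_t-\E_{t-1}[Y_t]=\big(a_t-(1-b_t)\big)\big(\ell_t(\sign(\yhatt))-\rho\big)$.

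The key structural observation I would exploit is that the distributions $\p_t$ produced by AdaHedge depend only on the experts' hinge losses $\tfrac12(1-y_t(i)y_t)$, and not on the coins $a_1,\dots,a_{t-1}$; hence $\yhatt$, $b_t$ and $\sign(\yhatt)$ are (for an oblivious adversary) deterministic, and the only randomness in $\xi_t$ is $a_t$. This makes $\{\xi_t\}$ a bounded martingale difference sequence with $|\xi_t|\le 1-\rho\le 1$ and deterministic conditional second moments
\[
\E_{t-1}[\xi_t^2]=b_t(1-b_t)\big(\ell_t(\sign(\yhatt))-\rho\big)^2\le(1-\rho)^2(1-|\yhatt|)\le 1-|\yhatt|,
\]
using $\ell_t(\sign(\yhatt))\in\{0,1\}$ and $\rho\in[0,\tfrac12)$. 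Because the variance proxy $\sumT(1-|\yhatt|)$ is deterministic, I can apply a Bernstein-type inequality for martingales (in the spirit of Lemma~\ref{lem:bernie}) directly, with no peeling over variance levels, so no $\ln\ln T$ term appears; for an adaptive adversary one can instead invoke the form~\eqref{eq:firstfreedmanineq}, which keeps $\sumT\E_{t-1}[\xi_t^2]$ explicit and gives the same bound up to the absolute constant.

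I would then combine the two bounds: Bernstein gives, with probability at least $1-\delta$, $\sumT\xi_t\le\sqrt{2\big(\sumT(1-|\yhatt|)\big)\ln(1/\delta)}$ plus an additive term of order $\ln(1/\delta)$, and splitting the square root via $\sqrt{2ab}\le\lambda a+\tfrac{b}{2\lambda}$ with $\lambda=\tfrac14(1-2\rho)$ produces $\tfrac14(1-2\rho)\sumT(1-|\yhatt|)+\tfrac{2\ln(1/\delta)}{1-2\rho}$ plus the same lower-order $\ln(1/\delta)$ term. Adding this to the displayed consequence of Lemma~\ref{lem:abstentionnegative}, the two copies of $\tfrac14(1-2\rho)\sumT(1-|\yhatt|)$ cancel exactly, and after collecting the remaining $\ln(1/\delta)$ contributions and bounding them by $\tfrac{21\ln(1/\delta)}{8(1-2\rho)}$ using $\rho<\tfrac12$, one obtains the claimed bound $\tfrac{2\ln K}{1-2\rho}+\tfrac{21\ln(1/\delta)}{8(1-2\rho)}+\tfrac43\ln K+2$ (with the reference loss $\sumT\ell_t(\ystart)$ placed on the appropriate side).

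The part I expect to be delicate is the constant bookkeeping: the negative term that Lemma~\ref{lem:abstentionnegative} makes available has coefficient only $\tfrac14(1-2\rho)$, which shrinks as $\rho\to\tfrac12$, so the free parameter in the concentration inequality must be chosen to leave nothing extra multiplying $\sumT(1-|\yhatt|)$, the factor $(1-\rho)^2\le1$ in the variance bound must be retained, and one must verify that the residual $\ln(1/\delta)$ terms really fit under $\tfrac{21}{8(1-2\rho)}$ for every $\rho\in[0,\tfrac12)$. A secondary technical point is setting up the filtration so that $\ell_t(\sign(\yhatt))$ and $b_t$ are measurable at the time $a_t$ is drawn, ensuring $\E_{t-1}[\xi_t]=0$ and validating the second-moment computation above.
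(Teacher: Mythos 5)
Your proposal follows essentially the same route as the paper's proof: apply Freedman's inequality (Lemma~\ref{lem:bernie}, form~\eqref{eq:firstfreedmanineq}) to the martingale difference between the realized cost $a_t\ell_t(\sign(\yhatt))+(1-a_t)\rho$ and its conditional mean, bound the conditional variance by a multiple of $1-|\yhatt|$, and absorb it into the negative term of Lemma~\ref{lem:abstentionnegative} with $\eta=\tfrac12(1-2\rho)$. Two remarks: your exact variance computation $\E_{t-1}[\xi_t^2]=b_t(1-b_t)\bigl(\ell_t(\sign(\yhatt))-\rho\bigr)^2\le(1-\rho)^2(1-|\yhatt|)$ is actually tighter than the paper's case analysis, which only yields $\tfrac74(1-|\yhatt|)$, and the obliviousness discussion is unnecessary since \eqref{eq:firstfreedmanineq} already handles conditional variances. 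Your worry about the constant bookkeeping is well founded: with the paper's own choices $\eta=\tfrac12(1-2\rho)$ and $\lambda=\tfrac{8}{21}(1-2\rho)$, the per-round bracket $\tfrac{\eta}{2}+\tfrac{21}{16}\lambda-\tfrac12(1-2\rho)$ evaluates to $+\tfrac14(1-2\rho)$ rather than $\le 0$, so the stated coefficient $\tfrac{21}{8}$ is not cleanly delivered by that argument either (one needs $\lambda\le\tfrac{4}{21}(1-2\rho)$, which doubles the $\ln(1/\delta)$ coefficient); your sharper variance bound is one way to recover a constant of the same order.
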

\begin{proof}
Let $r_t = a_t\ell_t(\sign(\yhatt)) + (1-a_t) \rho - ((1 - b_t)\ell_t(\sign(\yhatt)) + b_t \rho)$. Since $\E[a_t\ell_t(\sign(\yhatt)) + (1-a_t) \rho] = ((1 - b_t)\ell_t(\sign(\yhatt)) + b_t \rho)$ and $|r_t| \leq 1$, by Lemma~\ref{lem:bernie}, for $\delta \in (0, 1)$ and $\lambda \in [0, 1]$, with probability at least $1 - \delta$ we have that 
\begin{align*}
    r_t \leq \frac{\ln(1/\delta)}{\lambda} + \lambda \frac{3}{4}\sumT \E[r_t^2] 
\end{align*}
Now, let us study $\E[r_t^2] = \E[(a_t\ell_t(\sign(\yhatt)) + (1-a_t) \rho)^2] - ((1 - b_t)\ell_t(\sign(\yhatt)) + b_t \rho)^2$. If $\sign (\yhatt) = y_t$ then 
\begin{align*}
    \E[r_t^2] = b_t \rho^2 -(b_t \rho)^2 \leq \frac{1}{4}b_t = \frac{1}{4}(1 - |\yhatt|).
\end{align*}
If $\sign (\yhatt) \not = y_t$ then 
\begin{align*}
    \E[r_t^2] = & (1 - b_t) + b_t \rho^2 -(1 - b_t + b_t \rho)^2 \\
    = & (1 - b_t) + b_t \rho^2 - b_t^2 \rho^2 - (1 - b_t)^2 + (1 - b_t)b_t \rho \\
    = & |\yhatt| - |\yhatt|^2 + (1 - |\yhatt|)\rho^2 -(1 - |\yhatt|)^2\rho^2 + |\yhatt| (1 - |\yhatt|)\rho \\
    \leq & |\yhatt|(1 - |\yhatt|) + (1 - |\yhatt|)\rho^2 \\
    \leq & \frac{7}{4}(1 - |\yhatt|),
\end{align*}
thus we may conclude that $\E[r_t^2] \leq \frac{7}{4}(1 - |\yhatt|)$.
Combining the above with Lemma~\ref{lem:abstentionnegative} we have that, with probability at least $1 - \delta$,
\begin{align*}
    \sumT (a_t\ell_t(\sign(\yhatt)) + (1-a_t) \rho) \leq & \frac{\ln(K)}{\eta} + \frac{\ln(1/\delta)}{\lambda} + \frac{4}{3} \ln(K) + 2 \\
    & + \sumT (\eta \half + \lambda \tfrac{21}{16} - \half(1 - 2\rho)). 
\end{align*}
Since $ \rho < 1/2$, $\eta = \half (1 - 2  \rho)$ and $\lambda = \frac{8}{21} (1 - 2  \rho)$ are valid choices and we obtain 
\begin{align*}
    \sumT (a_t\ell_t(\sign(\yhatt)) + (1-a_t) \rho) 
    \leq & \frac{2\ln(K)}{1 - 2 \rho} + \frac{21\ln(1/\delta)}{8(1 - 2 \rho)} + \frac{4}{3} \ln(K) + 2,
\end{align*}
which completes the proof.
\end{proof}

\section{Free Restarts}\label{sec:freerestarts}
\begin{algorithm}
\caption{Restarting}\label{alg:restartexperts}
\Input{$\eta > 0$, $M > 0$} \;
\Init{$\tau_{1} = 1$, $\nu = 1$, Algorithm~\ref{alg:PwEA} with inputs $M$ and $\eta$} \;
\For{$t = 1, \ldots, T$}{
    Receive expert predictions $y_t(1), \ldots, y_t(K)$ \\
    Send expert predictions to Algorithm~\ref{alg:PwEA} and receive $\yhatt$ \\
    Predict $\yhatt$ and receive loss $\ell_t$ \\
    Send $g_t = \ell_t'(\yhatt)$ and $\kappa_t = 1$ to Algorithm~\ref{alg:PwEA} \\
    \If{${\displaystyle \min_i \frac{\mu}{4}\sum_{s = \tau_\nu}^t (\yhat_s - y_s(i))^2 \geq \frac{4M^2\log(K)}{\mu} }$}{
        Restart Algorithm~\ref{alg:PwEA} with inputs $M$ and $\eta$ \\
        Set $\nu = \nu + 1$ and $\tau_{\nu + 1} = t + 1$
    }\;
}\;
Set $\tau_{\nu + 1} = T + 1$
\end{algorithm}  

In this section, we introduce another way to exploit negative regret. The idea is the following. We keep track of $\min_i \sum_{s = 1}^t (\yhat_s - y_s(i))^2$, which is a lower bound on $\sum_{s = 1}^t (\yhat_s - \ystar_s)^2$, and as soon as $\frac{\mu}{4}\min_i \sum_{s = 1}^t (\yhat_s - y_s(i))^2 \geq \frac{4M^2\log(K)}{\mu}$, Lemma~\ref{lem:moVarlessR} with an appropriate $\eta$ ensures that $R_t \leq 0$. This implies that we may restart the algorithm for free, and compete with a new best expert from that point on. This approach leads to a simplified dynamic regret bound---see, for example, \citep{zhang2018dynamic} or the references therein for a discussion of dynamic regret---in which the expert we are competing against may change in all rounds where the algorithm restarts. Our simplified dynamic regret bound is never larger than the standard regret bound. We denote by $\nu$ the number of restarts and by $\tau_\nu$ the first round of restart $\nu$. The algorithm can be found in Algorithm~\ref{alg:restartexperts} and its regret guarantee can be found in Theorem~\ref{th:restarting} below. 

\begin{theorem}\label{th:restarting}
Fix an arbitrary sequence $\ell_1,\ldots,\ell_T$ of $\mu$-strongly convex differentiable losses. Then the predictions $\yhatt$ of Algorithm~\ref{alg:restartexperts} run with $g_t = \ell_t'(\yhatt)$, $\kappa_t = 1$, and inputs $\eta = \frac{\mu}{4}$ and $M$ guarantees
\begin{align*}
    \sum_{n = 1}^{\nu} \max_i \sum_{t = \tau_n}^{\tau_{n + 1} - 1} \Big(\ell_t(\yhatt) - \ell_t\big(y_t(i)\big)\Big) \leq \frac{4 M^2\log(K)}{\mu}~,
\end{align*}
provided that $\max_t \max_i |y_t(i)| \leq \half M$ and $ \max_t |\ell_t'(\yhatt)| \leq M$.
\end{theorem}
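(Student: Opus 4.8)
The plan is to split the horizon into the epochs $[\tau_n,\tau_{n+1}-1]$, $n=1,\dots,\nu$, determined by the restarts, apply Corollary~\ref{cor:negativeterm} to the fresh run of Algorithm~\ref{alg:PwEA} on each epoch, and observe that the restart threshold is calibrated exactly so that the negative variance term cancels the $O(\log K)$ term on every epoch except (possibly) the last.

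First I would fix an epoch $n$ and note that on rounds $\tau_n,\dots,\tau_{n+1}-1$ Algorithm~\ref{alg:restartexperts} feeds a fresh copy of Algorithm~\ref{alg:PwEA} (run with $\eta=\tfrac{\mu}{4}$, $\kappa_t=1$, $g_t=\ell_t'(\yhatt)$) the expert predictions and loss derivatives of this sub-sequence. Since the bound of Corollary~\ref{cor:negativeterm} is horizon-free and depends only on the rounds actually seen, and since the hypotheses $\max_t\max_i|y_t(i)|\le\tfrac{M}{2}$ and $\max_t|\ell_t'(\yhatt)|\le M$ give $\max_i|y_t(i)-y_t(j)|\le M$ and $|g_t|\le M$, the corollary applies verbatim to the sub-sequence. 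Using $\tfrac{\mu}{2}-\eta=\tfrac{\mu}{4}$, this yields for every expert $i$
\[
\sum_{t=\tau_n}^{\tau_{n+1}-1}\big(\ell_t(\yhatt)-\ell_t(y_t(i))\big)\le\frac{4M^2\log K}{\mu}-\frac{\mu}{4}\sum_{t=\tau_n}^{\tau_{n+1}-1}(\yhatt-y_t(i))^2 .
\]
Taking the maximum over $i$ on both sides and using $\max_i(c-d_i)=c-\min_i d_i$,
\[
\max_i\sum_{t=\tau_n}^{\tau_{n+1}-1}\big(\ell_t(\yhatt)-\ell_t(y_t(i))\big)\le\frac{4M^2\log K}{\mu}-\frac{\mu}{4}\min_i\sum_{t=\tau_n}^{\tau_{n+1}-1}(\yhatt-y_t(i))^2 .
\]

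Next I would invoke the restart rule. For each $n\le\nu-1$ the epoch ended because the \textbf{if} test in Algorithm~\ref{alg:restartexperts} fired at $t=\tau_{n+1}-1$, i.e.\ $\tfrac{\mu}{4}\min_i\sum_{t=\tau_n}^{\tau_{n+1}-1}(\yhatt-y_t(i))^2\ge\tfrac{4M^2\log K}{\mu}$; substituting this into the displayed inequality shows the $n$-th epoch contributes at most $0$. For the last epoch $n=\nu$ we simply drop the nonnegative negative term, so its contribution is at most $\tfrac{4M^2\log K}{\mu}$. Summing over $n=1,\dots,\nu$ gives
\[
\sum_{n=1}^{\nu}\max_i\sum_{t=\tau_n}^{\tau_{n+1}-1}\big(\ell_t(\yhatt)-\ell_t(y_t(i))\big)\le\frac{4M^2\log K}{\mu},
\]
which is the claim.

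There is no real obstacle here; the only points that need care are bookkeeping rather than substance: (i) that Corollary~\ref{cor:negativeterm} genuinely applies to each sub-sequence with a fresh initialization, which holds because its $\log K$ term is independent of the epoch length; (ii) the interchange of $\max_i$ with the per-expert bounds, valid since $a_i\le b_i$ for all $i$ implies $\max_i a_i\le\max_i b_i$; and (iii) the timing of the restart — the conditional is evaluated at the end of round $t$, so the round $\tau_{n+1}-1$ that triggers it is still counted in epoch $n$, which is exactly what makes the threshold comparison line up with the negative term. The case where the test fires at $t=T$ (leaving an empty final epoch) is covered automatically, since an empty epoch contributes $0$.
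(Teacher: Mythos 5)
Your proposal is correct and follows essentially the same argument as the paper: split into restart epochs, apply the per-epoch negative-variance regret bound (the paper invokes Lemma~\ref{th:simplePwEAregret} together with Lemma~\ref{lem:moVarlessR} directly, which is exactly Corollary~\ref{cor:negativeterm}), use the restart threshold to make every epoch but the last contribute at most zero, and bound the last epoch by $\tfrac{4M^2\log K}{\mu}$.
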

\begin{proof}
First, observe that $\eta < \half$ by assumption on $\mu$, making it a valid choice for Algorithm~\ref{alg:PwEA}. 
For any $n < \nu$ we have that for any $i \in [K]$, by Lemma~\ref{th:simplePwEAregret} and Lemma~\ref{lem:moVarlessR} 
\begin{align*}
    \sum_{t = \tau_n}^{\tau_{n + 1} - 1} \Big(\ell_t(\yhatt) - \ell_t\big(y_t(i)\big)\Big)
\leq
    \frac{4 M^2\log(K)}{\mu} - \frac{\mu}{4}\sum_{t = \tau_n}^{\tau_{n + 1} - 1} \big(\yhatt - y_t(i)\big)^2.
\end{align*}
Since $n < \nu $ we must have that 
\begin{align*}
    \frac{4 M^2\log(K)}{\mu}
\leq
    \min_j \frac{\mu}{4}\sum_{t = \tau_n}^{\tau_{n + 1} - 1} \big(\yhatt - y_t(j)\big)^2
\leq
    \frac{\mu}{4}\sum_{t = \tau_n}^{\tau_{n + 1} - 1} \big(\yhatt - y_t(i)\big)^2~,
\end{align*}
and thus
\begin{align*}
    \sum_{t = \tau_n}^{\tau_{n + 1} - 1} \Big(\ell_t(\yhatt) - \ell_t\big(y_t(i)\big)\Big) \leq 0~.
\end{align*}
For $n = \nu$ we have that for any $i \in [K]$, by~Lemma~\ref{th:simplePwEAregret} and Lemma~\ref{lem:moVarlessR}
\begin{align*}
    \sum_{t = \tau_n}^{\tau_{n + 1} - 1} \left(\ell_t(\yhatt) - \ell_t(y_t(i))\right)
\leq
    \frac{4 M^2\log(K)}{\mu}~,
\end{align*}
which completes the proof. 
\end{proof}

\end{document}